\documentclass[nohyperref]{article}

\usepackage[accepted]{icml2023}
%


\setlength{\pdfpageheight}{11in}
\setlength{\pdfpagewidth}{8.5in}

\usepackage{natbib}


\usepackage[normalem]{ulem}

\usepackage{dsfont}
\usepackage{mathtools}
\usepackage{amssymb,bm}
\usepackage{cancel}
\usepackage{amsthm}
\usepackage{amsmath}
\usepackage[colorlinks = true,
            linkcolor = blue,
            urlcolor  = blue,
            citecolor = blue]{hyperref}
\usepackage{thmtools}
\usepackage{thm-restate}
\usepackage{caption}
\usepackage{subcaption}
\usepackage{comment}
\usepackage{booktabs}
\usepackage{multirow}
\usepackage{xcolor}

\DeclarePairedDelimiter\abs{\lvert}{\rvert}%
\newcommand{\norm}[1]{\left\lVert#1\right\rVert}

\DeclareMathOperator*{\argmin}{arg\,min}

\theoremstyle{plain}
\newtheorem{theorem}{Theorem}[section]
\newtheorem{proposition}[theorem]{Proposition}
\newtheorem{lemma}[theorem]{Lemma}

\theoremstyle{definition}
\newtheorem{definition}[theorem]{Definition}
\newtheorem{assumption}{Assumption}
\theoremstyle{remark}

\usepackage{ulem}
\usepackage{mathtools}
\usepackage{amssymb}

\setcounter{MaxMatrixCols}{20}

\newcommand{\R}{{\mathbb{R}}}

\usepackage[capitalize,noabbrev]{cleveref}

\begin{document}

%

%

\twocolumn[

\icmltitle{Learning the Dynamics of Sparsely Observed Interacting Systems 
}

\icmlsetsymbol{equal}{*}

\begin{icmlauthorlist}
\icmlauthor{Linus Bleistein}{equal,inria,crc,evry}
\icmlauthor{Adeline Fermanian}{equal,mines,curie,inserm,califrais}
\icmlauthor{Anne-Sophie Jannot}{inria,aphp}
\icmlauthor{Agathe Guilloux}{inria,crc}
\end{icmlauthorlist}

\icmlaffiliation{inria}{Inria Paris, F-75015 Paris, France}
\icmlaffiliation{crc}{Centre de Recherche des Cordeliers, INSERM, Université de Paris, Sorbonne Université, F-75006 Paris, France}
\icmlaffiliation{mines}{MINES ParisTech, PSL Research University, CBIO, F-75006 Paris, France}
\icmlaffiliation{curie}{Institut Curie, PSL Research University, F-75005 Paris, France}
\icmlaffiliation{inserm}{INSERM, U900, F-75005 Paris, France}
\icmlaffiliation{evry}{LaMME, UEVE and UMR 8071, Paris Saclay University, F-91042, Evry, France}
\icmlaffiliation{aphp}{AP-HP, Paris, France}
\icmlaffiliation{califrais}{LOPF, Califrais’ Machine Learning Lab, Paris, France}

\icmlcorrespondingauthor{Linus Bleistein}{linus.bleistein@inria.fr}

\icmlkeywords{Signature Methods, Time series, Dynamical Systems, Applications to Healthcare.}

\vskip 0.3in
]



\printAffiliationsAndNotice{\icmlEqualContribution} 

\begin{abstract}
We address the problem of learning the dynamics of an unknown non-parametric system linking a target and a feature time series. The feature time series is measured on a sparse and irregular grid, while we have access to only a few points of the target time series. Once learned, we can use these dynamics to predict values of the target from the previous values of the feature time series. We frame this task as learning the solution map of a controlled differential equation (CDE). By leveraging the rich theory of signatures, we are able to cast this non-linear problem as a high-dimensional linear regression. We provide an oracle bound on the prediction error which exhibits explicit dependencies on the individual-specific sampling schemes. Our theoretical results are illustrated by simulations which show that our method outperforms existing algorithms for recovering the full time series while being computationally cheap. We conclude by demonstrating its potential on real-world epidemiological data. 
\end{abstract}

\section{Introduction}

Time series are ubiquitous in many areas such as finance, economics, robotics, agriculture, and healthcare. One is typically interested in modelling the evolution of a target quantity through time, which is known to be affected by a set of time-evolving features. For example, pollution levels in a city are driven by quantities such as temperature, pressure, traffic, or economic activity measured through time. Mathematically, one wishes to model the evolution of a quantity $y_t \in \R^p$, $p \geq 1$, as a function of some time evolving features $x_t \in \R^d$, $d \geq 1$, for $t \in [0,1]$. In other words, the goal is to learn the dynamics that link the target to the features.

Such an interaction is typically modelled via differential equations, which are a common choice of model in natural sciences~\citep{zwillinger1998handbook}. In this article, we assume that there exists a function $\mathbf{G}: \R^p \times \R^d \to \R^p$ such that
\begin{equation} \label{eq:ode}
     y_t = y_0 + \int_0^t \mathbf{G}(y_s, x_s)ds
\end{equation}
or equivalently
\[
dy_t = \mathbf{G}(y_t,x_t)dt, \quad y_0 \in \mathbb{R}^p.
\]
The value $y_t$ depends on the trajectory of the features time series $x_s$ up to time $t$. Learning the dynamics of the system can be framed as learning the solution map of~\eqref{eq:ode}, i.e., a function $\Psi$ which, given a time $t$, an initial point $y_0 \in \mathbb{R}^p$, and the history of the path up to time $t$, denoted by $x_{[0,t]} = (x_s)_{s \in [0,t]}$, outputs the value of $y$ at time $t$.

If we know $\Psi$, we gain access to the values of $y$ at any point in time provided we know the values of $x$ up to this point ; this encompasses many tasks such as forecasting or interpolating between points of $y$. We specifically have in mind applications where we have an easy access to $x$ but a limited one to $y$.

This problem is extremely common in healthcare. For example, in obstetrics, the lactic acidosis (LA) of the fetus, which is a proxy for fetal distress, is a quantity of high medical interest for predicting complications in the first hours after birth. This biomarker cannot be measured during pregnancy but only at birth because the measurement is highly invasive. Some vitals such as heart rate and fetal movement are however easy to measure during pregnancy. In this case, $x$ is the non-invasive measurements made during pregnancy, while $y$ is the invasive measurement of LA at birth. Predicting the value of $y$ at any time $t$ (both before and at birth) would allow for early diagnosis. Similarly, after surgery, patients are often monitored to detect hemorrhage. While some vitals such as heart rate of saturation are monitored in continuous time, haemoglobin---which is highly predictive of hemorrhage---is only measured by blood samples taken a few times a day, which can significantly delay hemorrhage diagnostic.

\paragraph{Irregular data.} In practice, the functions $x$ and $y$ are measured on discrete grids and take the form of time series. These often present a lot of heterogeneity, both within and across individuals.
\begin{itemize}
    \item[$(i)$] For every individual, the time between any two measurements can vary, and thus individuals may not be recorded on the same grid.
    \item[$(ii)$] The number of total sampling points might vary between individuals. 
    \item[$(iii)$] Each measurement in time might be corrupted by measurement noise.
\end{itemize}
 Mathematically, we consider $n$ pairs of functions $\{(x^{1}, y^{1}), \dots, (x^{n}, y^{n})\}$. Each $x^i$ deterministically produces a specific $y^i$ through the Ordinary Differential Equation (ODE)~\eqref{eq:ode}. We call $x^{i}$ the feature path and $y^{i}$ the target path. Both $x^{i}$ and $y^{i}$ are only observed at a finite set of times specific to every individual. We denote by 
\[D^{i} = \left(t^i_1,\dots,t^i_{k_i} \right), \quad i= 1, \dots, n, \] 
the sampling grid of $x^{i}$ and by $\bar{D}^{i}$ the sampling grid of $y^{i}$. We stress that both the number of sampling times $k_i$ and the sampling times $t^i_{1},\dots,t^i_{k_i}$ themselves are individual specific, as described in $(i)$ and $(ii)$. Moreover, the observations are corrupted by additive noise, such that we observe \[X^i_t = x^i_t + \xi^i_t\] for all $t \in D^i$, and similarly $Y_t^i = y^i_t + \varepsilon^i_t$ for every $t \in \bar{D}^i$, where the $\xi^i_t$ and $\varepsilon^i_t$ are sub-gaussian i.i.d. random vectors. Each input may therefore be written as a matrix $\mathbf{X}^i = (X^i_t)_{t \in D^{i}} \in \mathbb{R}^{k_i \times d}$ which we call the feature time series. Similarly, the quantity of interest is a matrix $\mathbf{Y}^{i}=(Y^i_t)_{t \in \bar{D}^{i}} \in \mathbb{R}^{m_i \times p }$ (where $m_i$ is the length of $\bar{D}^{i}$) and is called the target time series. The grid $\bar{D}^{i}$ is assumed to be a subset of $D^i$: in our setup $y^{i}$ is hard to sample and therefore measured at only a few points (and sometimes only one) while $x^{i}$ is easy to access and measured at high frequency. Our goal is to approximate the dynamics linking  $x$ and $y$ from the irregular, heterogeneous, and fuzzy data $\mathbf{X}^{i}$ and $\mathbf{Y}^{i}$.

Such heterogeneity is difficult to handle by classical machine learning algorithms such as Long short-term memory networks~\citep[LSTM,][]{hochreiter1997long} which assume that the data is regularly sampled. Some more recent approaches ~\citep{rubanova2019latent,de2019gru,kidger2020neural,herrera2021neural} have adapted these models by introducing continuously evolving hidden states to account for the irregular spacing between observation times. 

We build upon the approach of Neural Controlled Differential Equations ~\citep[Neural CDE,][]{kidger2020neural,morrill2021neuralrough}, which have proven to be very successful for time series classification and online prediction tasks \citep{morrill2021neural}. The key idea of Neural CDE is that under some fairly mild assumptions, any general ordinary differential equation of the form~\eqref{eq:ode} can be rewritten as
\begin{equation} \label{eq:cde}
    y_t = y_0 + \int_0^t \mathbf{F}(y_s) dx_s,
\end{equation}
where $\mathbf{F}: \R^p \to \R^{p \times d}$ is a matrix-valued vector field, such that the right-hand-side of~\eqref{eq:cde} is a matrix-vector product~\citep[see, e.g., ][Proposition 2, for a proof]{fermanian2021framing}. The function $x$ is often called the driver of the CDE. In a Neural CDE setting, the driver $x$ is a continuous interpolation of the feature times series, $y$ corresponds to a continuously-evolving state, and $\mathbf{F}$ is chosen to be a neural network. This network is then trained such that the values of $(y_t)$ can be used as features for classification or regression tasks. While Neural CDE have been shown to outperform other architectures with limited memory usage, their training time is considerable and no statistical guarantees exist.

\paragraph{Model.} We model the interactions between the target and the feature paths through a CDE of the form \eqref{eq:cde}. This modelling choice encapsulates a broad variety of settings, since the vector field $\mathbf{F}$ can be any (regular enough) function. A priori, the solution map $\Psi$ of this CDE is a complex function of time and the history of $x$ up to $t$; however, by linearizing the model, we are able to approximate $\Psi$ by a simple scalar product between a deterministic transformation of the history of $x$, called the signature of $x$ at order $N \geq 1$ and denoted by $S_N(x_{[0,t]})$, and a time independent matrix $\theta_N^\ast$.  Informally, we have 
\[\Psi\big( x_{[0,t]}, t \big) \approx S_N\big(x_{[0,t]} \big)^\top \theta_N^\ast.\] Two striking features of this linearized model are \textit{(i)} that $\theta^*_N$ can be learned on any time horizon $[0,t]$ since it is independent of time, and \textit{(ii)} that once it has been learned, the model can be called at any time $t$. 

\paragraph{Contributions.} Our contributions are threefold. First, we frame the task of learning the interactions between two time series as learning the flow of a CDE, which can be linearized in the signature space. While the connection between CDEs and signatures is well-known, this is the first time CDEs are used as a statistical model. We then leverage this linearization to derive statistical guarantees on the prediction error with an explicit dependence on both sampling irregularities and the noise affecting measurements. To our knowledge, this is the first bound of this type for signature-based models, allowing for better understanding of the dependencies between prediction performance and sampling roughness. Finally, the resulting algorithm, called SigLasso, is shown to be computationally cheap and competitive compared to existing baselines on a wide range of simulated data and a real-world example of hospitalization growth rate prediction during the Covid pandemic.

\paragraph{Related works.} Signatures originated as a prominent tool in stochastic analysis~\citep{chen1958integration, lyons2007differential, friz2010multidimensional} and have proven to be a powerful feature extraction method in machine learning in various domains such as healthcare \citep{morrill2020utilization, wang2020learning}, human action recognition \citep{yang2022leveraging}, or financial modelling~\citep{ lyons2014feature, buehler2020data}. Their appealing properties include a capacity to handle irregular data, to capture dependence between coordinates, and their links with the theory of CDE. We refer to \citet{lyons2022signature} for a recent survey on their use cases. 
However, the statistical properties of signatures based algorithms have received little attention so far, with a few notable exceptions~\citep{papavasiliou2011parameter,lemercier2021distribution,fermanian2022functional}.

On the other hand, the interplay between dynamical systems and machine learning has received considerable attention in the recent years. A first line of work has focused on approximating the solution of ODE and Partial Differential Equations (PDE) with neural networks ~\citep{lagaris1998artificial, han2018solving, zubov2021neuralpde} and directly learning dynamical systems ~\citep{long2018pde,fattahi2019learning}. Recent approaches have been interested in combining deep learning algorithms with physical knowledge~\citep{greydanus2019hamiltonian, brunton2020machine, willard2020integrating}. Finally, dynamical systems, seen as continuous versions of neural network architectures, have also been a great source of inspiration for analysing and designing machine learning algorithms in the recent years ~\citep{chen2018neural,fermanian2021framing, marion2022scaling}. We refer to \citet{kidger2022neural} for an extensive review.

We stress that our problem is different in nature from most problems encountered in the time series literature, since we seek to model the relationship between two sparsely observed systems with heterogeneous sampling. We do not model the evolution of the feature time series, and take it as an input, contrarily to methods such as Gaussian Process. Most models either focus on the case where one time series is observed and forecasted, or on regular sampling, or on univariate time series. Our problem bears close resemblance to frameworks encountered in sequence-to-sequence learning ~\citep{sutskever2014sequence, gehring2017convolutional} and functional regression ~\citep{ramsay1991some, marx1999generalized}. 

\paragraph{Overview.} Section \ref{section:model} introduces the CDE model for interacting systems, the mathematical context and the learning procedure. Our main theoretical result is presented in Section \ref{section:oracle_ineq}. We conclude by an empirical study on synthetic and real-world data in Section \ref{section:experiments}. All proofs are postponed to the appendix and the code to reproduce the experiments is available at \href{https://github.com/LinusBleistein/SigLasso}{https://github.com/LinusBleistein/SigLasso}.

\section{Model and Assumptions}
\label{section:model}

A summary table of all notations introduced in the main body of this article is provided to the reader in Appendix \ref{appendix:definitions}.

\subsection{A CDE-Based Model on the Dynamics}

We start by describing our assumptions on the feature and target paths, which are linked by Equation~\eqref{eq:cde}. To correctly define the integral of Equation~\eqref{eq:cde}, we need to impose some conditions on the $x^i$ and on $\mathbf F$. Note that we consider that the $x^{i}$ are defined on $[0,1]$ but our results extend easily to any compact time interval $[a,b]$. 
\begin{assumption} \label{assump:X_bv}
All paths $(x^i)_{1 \leq i \leq n}$ are continuous and there exists $ 0 < L < 1$ such that, for all $i=1,\dots,n$,
    \[
        \big\|x^i\big\|_{\textnormal{1-var},[0,1]} = \sup_D \sum\limits_{k} \big\|x^i_{t_{k+1}}-x^i_{t_k}\big\| \leq L,
    \]
    where $\norm{\cdot}$ is the Euclidean norm and the supremum is taken over all finite dissections $D = \left\{0 = t_1 < \dots < t_k = 1 \right\}$. 
\end{assumption}
We write $C_L^{\textnormal{1-var}}([0,1],\mathbb{R}^d)$ for the set of continuous paths of total variation bounded by $L$. Outside the statistical context, when referring to general paths, we will drop the superscripts $i$ and simply write $x$ and $y$ to alleviate notations. 

We assume that the target path $y$ is the solution of the ODE~\eqref{eq:ode}. This modelization choice means that the evolution of $y$ is governed by a dynamical system whose dynamics itself are allowed to vary with the current value of the feature path. Observe that this model can be seen as a generalized form of a non-autonomous system~\citep{lyons2007differential}, which we recover by taking $x_t = t$. 
Since Equation~\eqref{eq:ode} can be rewritten as a CDE, the starting point of our work is to assume that the true dynamics of the data follow such a CDE, as stated in the following assumption.

\begin{assumption}
\label{assump:Y_CDE}
There exists a smooth vector field $\mathbf{F}: \R^p \to \R^{p \times d}$ such that, for all $i=1,\dots,n$, $y^{i}$ is the solution of the CDE~\eqref{eq:cde} driven by $x^{i}$ with initial condition $y^i_0 = y_0 \in \mathbb{R}^p$ homogeneous amongst individuals.
\end{assumption}

By ``smooth'' we mean that each coordinate of $\mathbf{F}$ is infinitely differentiable, that is, is $\mathcal{C}^\infty$. The vector field $\mathbf{F}$ and the initial condition $y_0$ are common to all individuals, which can be seen as homogeneity assumptions on our sample. On the other hand, since every individual $i$ has her own feature path $x^{i}$, the target paths $y^{i}$ are individual specific. In other words, there exists a solution map $\Psi$ that depends only on $y_0$ and $\mathbf{F}$ and is such that, for any $t \in [0,1]$, $\Psi(x^{i}_{[0,t]}, t) = y^{i}_t$.  

The vector field $\mathbf{F}$ encapsulates the common physical dynamics governing the evolution of $y^{i}$, which are affected by the changes in $x^{i}$. Note that there is no parametric model on $\mathbf{F}$ (although some strong smoothness requirements will be needed) contrarily to functional or traditional time series models~\citep{ramsay2005functional,morris2015functional}.

\subsection{Linearizing the CDE with Signatures}

Before defining the Taylor expansion of the CDE~\eqref{eq:cde}, which will allow us to linearize the solution map $\Psi$, we need to introduce the notion of signature, which have emerged as a powerful tool to model time series~\citep{levin2013learning,kidger2019deep}.

From now on, for any feature path $x$ with values in $\mathbb R^d$, we denote by $x^{(j)}$ its $j$th coordinate,  for $j = 1, \ldots,d$.

\begin{definition}
\label{def:signature}
Let $x \in C^{\textnormal{\textnormal{1-var}}}_L([0,1],\mathbb{R}^d).$ Take a word of length $k$ from the alphabet $\{1,\dots,d \}$, that is, an element $I=(i_1,\dots,i_k) \in \{ 1,\dots, d \}^k$. For all $t \in [0,1]$, the signature coefficient associated to this word is the scalar
\begin{align*}
    S^{I}\big(x_{[0,t]}\big) &= \idotsint\limits_{0 < u_1 < \dots < u_k < t} dx^{(i_1)}_{u_1}\cdots dx^{(i_k)}_{u_k}.
\end{align*}
\end{definition}

We introduce a series of notation on signature coefficients, grouping them by the length $k$ of the words. For any $k \in \mathbb{N}$ and $t \in [0,1]$, the signature of order $k$ of $x_{[0,t]}$ is
\begin{align*}
    \mathbb{X}_{k, [0,t]} = \big(S^{I}(x_{[0,t]})\big) _{I \in \{1,\dots,d \}^k} \in \mathbb{R}^{d^k},
\end{align*}
where the words $(i_1, \dots, i_k)$ are in lexicographic order. We then denote the full signature by
\begin{align*}
    S(x_{[0,t]}) = \big(1,\mathbb{X}^1_{[0,t]},\dots,\mathbb{X}^k_{[0,t]},\dots\big)
\end{align*}
and, for any $N \geq 1$, the signature truncated at order $N$ by
\begin{align*}
    S_N(x_{[0,t]}) = \big(1,\mathbb{X}^1_{[0,t]},\dots,\mathbb{X}^N_{[0,t]} \big)^\top.
\end{align*}
The size of the signature truncated at order $N$ grows exponentially with $N$, since, for $d \geq 2$, it is equal to
\begin{align*}
    s_d(N) =1 + d + d^2 + \dots + d^N = \frac{d^{N+1}-1}{d-1}.
\end{align*}
When computing signatures, it is common practice to add time as a coordinate to the path~\citep{fermanian2021embedding}, that is, consider the path $(t, x_t)^\top$. From now on, we assume that the first dimension
of $x$ always corresponds to time, so that $d \geq 2$. 

Signatures encode geometric properties of paths and have numerous appealing properties as a feature set for time series. We refer to \citet{chevyrev2016primer,fermanian2021embedding,lyons2022signature} for more detailed introductions to signatures. In order to provide supplementary intuition, we first give three insights on signatures.
\paragraph{A geometric insight.} Signatures are a geometric alternative to representations based on frequency such as the Fourier transform. Indeed, consider a function $f \in C^\infty([0,1],[0,1])$ and the two dimensional path $(t, f(t))$. For simplicity assume that $f(0)=0$. Then, the first order signature coefficients are equal to the last positions $t$ and $f(t)$. The second order signature coefficients are equal to $\int_{0}^t f(s)ds, \frac{1}{2} t^2, \frac{1}{2}f(t)^2$, and $f(t)t-\int_0^t f(s)ds$, and capture how the area under the curve evolves with time.
\paragraph{A computational insight.}Consider the linear path $x_t = a t + b$ for $t \in [0,1]$, where $a = (a_1,\dots,a_d)^\top \in \mathbb{R}^d$. For any word $(i_1,\dots,i_k)$ of size $k$ the associated signature coefficient is \[\frac{a_{i_1}\dots a_{i_k}}{k!}t^k.\] In the linear case, signatures are therefore simply polynomials in $t$ with path-specific coefficients. This result generalizes nicely to piecewise linear paths (via a result known as Chen's Lemma) and allows for computational efficiency when computing the signature. We refer to \citet{kidger2020signatory} for further computational details. 
\paragraph{A functional insight.} Recall that we are interested in approximating functions $f(x_t,t)$ which depend both on time and on the values of the feature path $(x_t)$. When computing signatures, we always consider the time-augmented path $(t,x_t)$. The coefficients will thus be divided in two parts: a first set of coefficients related to the time dimension, and a set of coefficients related to the path dimensions. The time-specific coefficients are simply
    \begin{align}
        t, \frac{t^2}{2!}, \frac{t^3}{3!}, \dots, \frac{t^N}{N!}
    \end{align}
    and thus form a polynomial basis. Roughly speaking, these coefficients approximate the time dependant part of the function $f$. The path-specific coefficients, which can be thought of as polynomials of a path, approximate the part of $f$ depending on $(x_t)$. This highlights that the Taylor development of a CDE, which is the cornerstone of our work, is very similar in nature to the approximation of a function by its classical Taylor development (see Appendix \ref{appendix:analogy_taylor}).

With these insights in hand, we are now ready to properly define the Taylor expansion of a CDE.

\begin{definition}
\label{def:taylor_expansion}
Let $N \geq 1$. The Taylor expansion of order $N$ of the solution $y$ of Equation~\eqref{eq:cde} is defined by 
\begin{align}
    \label{eq:taylor_expansion}
    \overline{y}_{N,t} = y_0 +\sum\limits_{k=1}^N \sum\limits_{I \in \{1, \dots, d\}^k } S^{I}\left(x_{[0,t]}\right) \times \Phi^{I}_{\mathbf{F}}\left(y_0\right),
\end{align}
where $\Phi^{I}_{\mathbf{F}}(\cdot) \in \R^p$ is the differential product of the vector field $\mathbf{F}$ along $I$, whose definition is postponed to Appendix~\ref{appendix:diff_prod}.
\end{definition}
The differential products $\Phi^{I}_{\mathbf{F}}(y_0)$ are essentially a combination of multiplication and summation of derivatives of the different components of $\mathbf{F}$, evaluated at $y_0$. 

The Taylor expansion crucially allows to write the solution map as a product between a time-varying term (the signature of the feature path) and a constant-over-time term (the differential product). This is similar to a regular Taylor expansion; we discuss this analogy in Appendix \ref{appendix:analogy_taylor}.
Note that Equation \eqref{eq:taylor_expansion} may also be written as a matrix-vector product
\begin{align}
    \label{eq:linear_problem}
    \overline{y}_{N,t}{}^\top &= S_N(x_{[0,t]})^\top \theta^\ast_N  \in \mathbb{R}^p,
\end{align}
where $\theta^\ast_N \in \R^{s_d(N) \times p}$ is a matrix collecting all differential products $\Phi^I_{\mathbf{F}}(y_0)$ up to order $N$ and the offset $y_0$. Since $\theta^\ast_N$ depends neither on $x$ nor on $t$, the Taylor expansion of $y$ at order $N$ is simply a linear function of the truncated signature.

For this expansion to become exact in the sens of pointwise convergence, we need quite strong regularity assumption on $\mathbf{F}$. This is the price to pay for the non-parametric nature of our model. 
\begin{assumption}
    \label{assump:decay_derivatives_F}
    The vector field $\mathbf{F}$ has fast decaying derivatives. In other words, defining
    \begin{align*}
    \Lambda_k(\mathbf{F}) = \sup\limits_{I \in \{1, \dots, d\}^k} \norm{\Phi^{I}_{\mathbf{F}}\left(y_0\right)},
\end{align*}
we assume that $\sum_{k=0}^{\infty} d^k \Lambda_k(\mathbf{F}) / k!  < \infty$.
\end{assumption}
Let $y$ be the solution of the CDE~\eqref{eq:cde}, and let $(\overline{y}_{N,t})_{t \in [0,1]}$ be its $N$-th Taylor expansion. Under Assumptions~\ref{assump:X_bv},~\ref{assump:Y_CDE}, and~\ref{assump:decay_derivatives_F}, we then have, for any $t \in [0,1]$,
\begin{align}\label{eq:cvg_taylor}
    \norm{y_t - \overline{y}_{N,t}} \underset{N \to + \infty}{\longrightarrow} 0.
\end{align}
We refer the reader to \citet{friz2008euler} and \citet[][Proposition 4]{fermanian2021framing} for more details on this result.



While we impose relatively strong regularity assumptions on the vector field $\mathbf{F}$, the assumptions on $x$ are mild (see Assumption~\ref{assump:X_bv}). Therefore, our assumptions, while enforcing a fairly high amount of regularity on the dynamical structure, still accommodate most of the real world data. Also note that \citet{fermanian2021framing} give conditions under which Assumption~\ref{assump:decay_derivatives_F} is satisfied when $\mathbf{F}$ is a layer of a neural network with smooth activation functions.

\subsection{The Learning Problem}

\begin{figure*}
    \centering
    \includegraphics[width=0.85\textwidth]{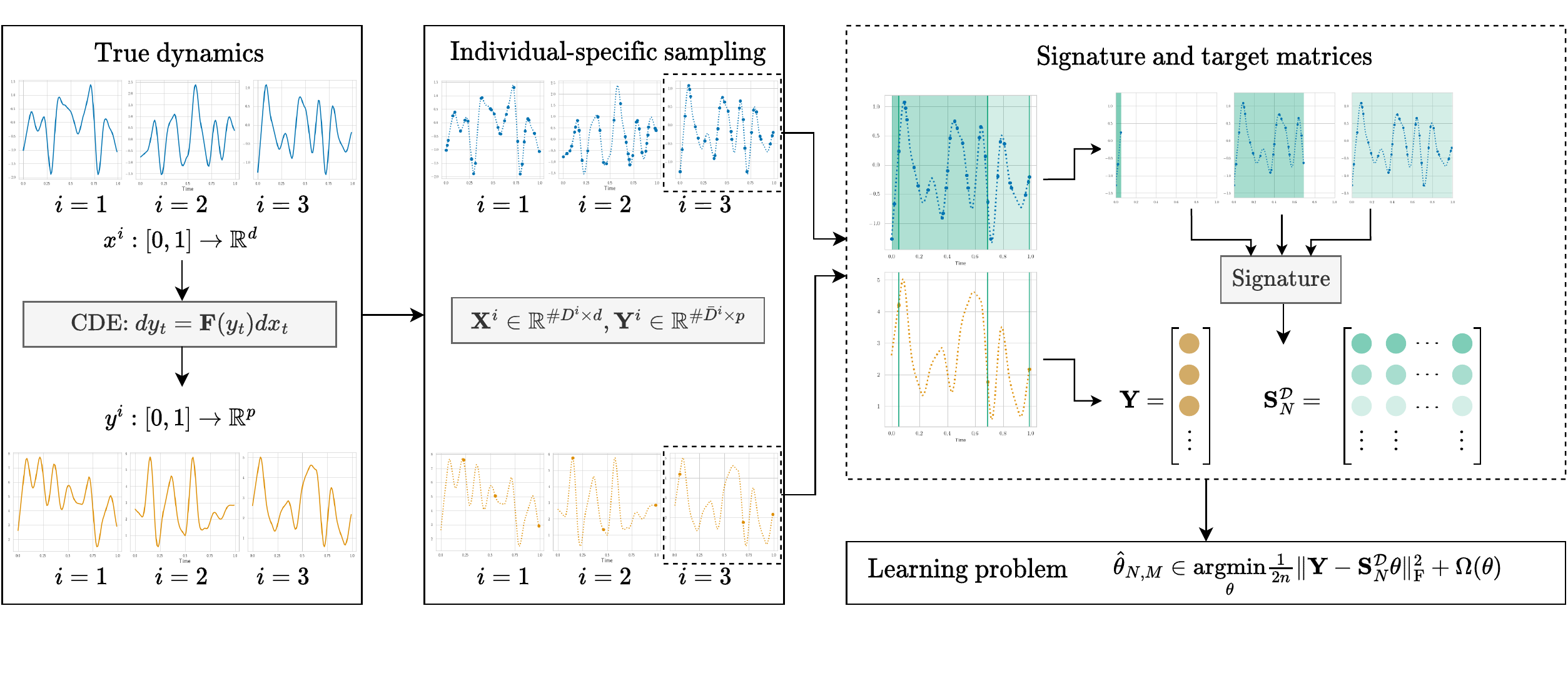}
    \caption{The workflow of our model. Starting from the left, the first panel describes our modelling hypothesis: the target and feature time series are linked through an unknown CDE. The second panel shows the observed data. The third panel shows how every observation of the target is mapped to the signature of the corresponding path, how to construct the dataset $\mathbf{S}_N^\mathcal{D}, \mathbf{Y}$ from this data and finally how to learn the SigLasso estimator.}
    \label{fig:workflow}
\end{figure*}

We go back to the statistical learning problem. Recall that in practice, we do not observe the continuous paths $\{(x^1,y^1),\dots,(x^n,y^n) \}$ but their discretized and fuzzy counterparts $\{(\mathbf{X}^1,\mathbf{Y}^1),\dots, (\mathbf{X}^n,\mathbf{Y}^n)\}$ measured on a set of individual grids $\mathcal{D}=\{D^1,\dots,D^n \}$ and $\bar{\mathcal{D}}=\{\bar{D}^1,\dots,\bar{D}^n \}$. We allow for high data heterogeneity: two individuals can be sampled at very different frequencies and at different time-points, and therefore have different observation grids. 


The meshsize of a sampling grid $D$, denoted by $\abs{D}$ is defined as the largest gap between two successive sampling times, that is, 
\begin{equation*}
    \abs{D} = \max\limits_{t_i \in D} \, \abs{t_{i+1}-t_i}.
\end{equation*}
Its cardinality, denoted by $\#D$, is the number of sampling points in $D$. We make the following assumption on the sampling procedure.
\begin{assumption}
\label{assump:sampling_grid}
There exists $\eta \in [0, 1]$ such that for all $i=1, \dots, n$, one has
\[0 \in D^{i}, \quad  \#D^{i} \geq 2, \quad t^i_{k_i} \geq \eta \quad \text{and} \quad \bar{D}^{i} \subset D^{i}.\]
\end{assumption}
Let us briefly comment on this assumption. We require that the measurements on all individuals start at $0$. We also do not allow for arbitrarily short time series: for every individual $i$, the last observation time $t^i_{k_i}$ must be greater than $\eta$. Finally, we require the sampling grid of $\mathbf{Y}^i$ to be coarser than the sampling grid of $\mathbf{X}^i$. We also let $\abs{\mathcal{D}} = \max_i \abs{D^i}$ be the biggest gap between two successive sampling times within the whole set of individual grids and $\# \mathcal{D} = \sum\limits_{i=1}^n \#D^i$ the total number of sampling points of the feature time series. We recall that the random vectors $\xi_t^i$, $t \in D^i$, are the noises affecting the measurements of $x^i$. The random vectors $\varepsilon^i_t$ affect the measurements of $y^i$. We end with assumptions on the law of these measurement noises.

\begin{assumption}[Noise on the feature time series]
\label{assump:feature_noise}
The noises $(\xi^i_t)_{i\in \{1, \dots, n\}, t \in D^{i}}$ are i.i.d.~$v_\xi$-subgaussian random vectors.
\end{assumption}

\begin{assumption}[Noise on the target time series]
\label{assump:target_noise}
The noises $(\varepsilon^i_t)_{i\in \{1, \dots, n\}, t \in \bar{D}^{i}}$ are i.i.d.~$v_\varepsilon$-subgaussian random vectors and independent from $(\xi^i_t)_{i\in \{1, \dots, n\}, t \in D^{i}}$.

\end{assumption}


The goal of the learning procedure is to learn the solution map $\Psi$, in order to infer the value of $y$ at any time $t$, given observations of $x$ up to time $t$. We have seen previously that this problem boils down to estimating the matrix $\theta_N^\ast$ via Equation \eqref{eq:linear_problem}. The truncation order $N \geq 1$ is an hyperparameter and will be selected by cross-validation.

\paragraph{Single target measurement.} For the sake of simplicity, we first present the case where $Y^{i}$ is measured only at the end of the observation period, as in the example of LA measurements in obstetrics. In this case, for all $i=1,\dots,n$, we have $m_i=1$ such that $M=n$,
\[D^{i} = (t^{i}_1, \dots, t^{i}_{k_i}), \quad \text{and} \quad \bar{D}^{i} = (t^{i}_{k_i}). \]
Then $\mathbf{Y}^{i} \in \R^p$, and we denote by
\begin{equation*}
\mathbf{Y} = \begin{bmatrix}  \mathbf{Y}^{1} , \ldots , \mathbf{Y}^{n}\end{bmatrix}^\top \in \R^{n \times p}
\end{equation*}
the matrix containing all target measurements. Since we do not have access to the feature paths $x^{i}$ but only to the discrete measurements $\mathbf{X}^{i}$, we compute the signature of its linear interpolation normalized by its total variation, sampled up to final time $t^i_{k_i}$, and denote by $\mathbf{S}^{\mathcal{D}}_N \in \mathbb{R}^{n \times s_d(N)}$ the matrix of stacked signatures. 
Note that signatures of piecewise linear functions are fast to compute with packages such as \texttt{signatory}~\citep{kidger2020signatory} or \texttt{iisignature}~\citep{reizenstein2021algorithm}. The complexity to compute the signature truncated at order $N$ of the $i$th feature time series is of the order $\mathcal{O}(\#D^{i} d^N)$.
Finally, we approximate $\theta_N^\ast \in \mathbb{R}^{s_d(N) \times p}$ by solving the optimisation problem 
\begin{align}
\label{eq:optimization_problem}
    \min_{\theta \in \mathbb{R}^{s_d(N) \times p}} \frac{1}{2n}\norm{\mathbf{Y} -\mathbf{S}_N^\mathcal{D} \theta }_{\textnormal{F}}^2 + \Omega(\theta),
\end{align}
where $\Omega: \mathbb{R}^{s_d(N) \times p} \to \R^{+}$ is a regularization term and $\norm{\cdot}_{\textnormal{F}}$ is the Frobenius norm. We have reduced the complex problem of learning the solution map $\Psi$ to a simple penalized linear regression in the signature space. This linear model on the signature is close to the one studied by \citet{levin2013learning,lyons2014feature,fermanian2022functional}. 

\paragraph{Multiple target measurements.} We also cover the case when $\# \bar{D}^i > 1$, that is, the target is measured at multiple times for every individual, as in the example of hemorrhage detection. In this case, we have $\mathbf{Y}^{i} \in \R^{m_i \times p}$ and we stack the different measurement matrices $\mathbf{Y}^i$ to obtain a matrix $\mathbf{Y}$ of size $M \times p$, where $M = m_1 + \dots + m_n$. For any $i=1,\dots,n$ and every $t \in \bar{D}^i$, we predict $Y_t^i$ using the signature of the linear interpolation of the normalized $(X^{i}_{0}, \dots, X^{i}_{t})$. In this manner, we will be able to predict $Y_t^i$ at every point where $X^i_t$ is sampled. The exact workflow of our model is described in Figure \ref{fig:workflow}.

\section{Theoretical Guarantees}
\label{section:oracle_ineq}

\subsection{Mathematical Setup}

We consider a general multiple target measurements setting. To simplify the exposure of our results, we consider a univariate target path, i.e., $p=1$. In this case, the true parameter $\theta_N^*$ is a vector of size $s_d(N)$ and not a matrix. The general case $p \geq 1$, which our algorithm handles as running $p$ Lasso regressions in parallel, is considered in Appendix \ref{appendix:proofs}, and all theoretical results are proved in this general case. In addition, to lighten the presentation of the oracle inequality, we also focus in this section on the case of $\omega$-Lipschitz feature paths, that is, for every $i=1,\dots,n$ and for all $s,t \in [0,1]$, $\norm{x^i_t - x^i_s} \leq \omega \abs{t-s}$. We stress that our results are valid for continuous paths of bounded variations. We let $\mathbf{y} \in \mathbb{R}^{M}$ be the matrix collecting all unobserved values of the target paths at measurement times such that $\mathbf{y} = \mathbb{E}\big(\mathbf{Y}\big),$
where the expectation is taken over the noises $\varepsilon^{i}_t$, and define $\widehat{\theta}_{N,M}$ as
\begin{align}\label{eq:thetahatdef}
     \widehat  \theta_{N,M} \in \argmin\limits_{\theta \in \mathbb{R}^{s_d(N)}} \frac{1}{2M}\norm{\mathbf{Y}-\mathbf{S}_N^{\mathcal{D}}\theta}_{2}^2 + \Omega(\theta).
 \end{align}

For $\delta \in (0,1)$, we define the set 
\begin{align}
\label{eqn:a_xi}
    A_\xi(\delta) =  \Big\{\max\limits \norm{\xi^i_t} \leq \underbrace{v_\xi\sqrt{d} +  v_\xi \sqrt{\frac{1}{c}\log \frac{\# \mathcal{D}}{\delta}}}_{=: C_\delta} \Big\} 
\end{align}
where the maximum is taken on all $i=1,\dots,n$ and $t\in D^i$, and $c$ is a universal constant.
This set is of probability greater than $1-\delta$ under Assumption \ref{assump:feature_noise} (see Appendix \ref{appendix:subgaussian}). Similarly, for $k \geq 0$ and $\bar \delta \in (0,1)$, let 
\[C_k(\bar{\delta}) = \sqrt{v_\varepsilon \log(2Nd^k / \Bar{\delta})} \] 
and define 
\begin{align}
\label{eq:A_epsilon}
    A_\varepsilon (\Bar{\delta})
    &= \bigcap_{k=0}^N \Big\{ \big\|\bm{\varepsilon}^\top\mathbf{S}^\mathcal{D}_{\cdot,[k]}\big\|_\infty \leq \frac{M^{\frac{1}{2}}C_k(\bar{\delta})}{k!} \Big\} ,
\end{align}
where $\mathbf{S}^\mathcal{D}_{\cdot,[k]}$ is the sub-matrix of size $M \times d^k$ of $\mathbf{S}^\mathcal{D}_N$ associated to the signature coefficients of order $k$, and $\bm{\varepsilon} \in \R^M$ is a vector of i.i.d.~noise terms satisfying Assumption \ref{assump:target_noise} (see Appendix \ref{appendix:proof_notations}). Under Assumptions \ref{assump:feature_noise} and \ref{assump:target_noise}, $A_\varepsilon (\Bar{\delta})$ is of probability at least $1-\Bar{\delta}$, and $A_\xi(\delta) \cap A_\varepsilon (\Bar{\delta})$  is of probability at least $(1-\delta)(1-\Bar{\delta})$. Let 
\begin{equation}
\label{eq:penalty}
    \Omega(\theta) = \sum\limits_{k=0}^N \frac{C_k(\bar{\delta})}{k!\sqrt{M}} \big\|\theta_{[k]}\big\|_1,
\end{equation}
where $\theta_{[k]}$ is the subvector of size $d^k$ that collects all elements of $\theta$ associated to words of size $k$ (see Appendix \ref{appendix:proof_notations}). This penalization can be implemented by rescaling the feature matrix $\mathbf{S}_N^\mathcal{D}$ and solving a standard $\ell_1$-penalized regression problem (see Appendix \ref{appendix:layer_penalty}). Our result extends to more general penalties by adapting existing techniques from~\citet{chesneau2008some} for the group-lasso, or~\citet{lederer2019oracle} for the hierarchical lasso. 



\subsection{Main Results}

The error made when learning $\theta_{N}^\ast$ by $\widehat \theta_{N,M}$ comes from three different sources. $(i)$ Truncating the signature used in the regression at depth $N \geq 1$ results in a truncation bias. $(ii)$ Discretization of the feature path and the noise affecting each measurement point induce a discretization error. In particular, there is a trade-off between sampling frequency and variance of the noise. $(iii)$ The measurement error on $\mathbf{y}^{i}$ and the finite-sample setting induce a classical estimation error.

\begin{table*}[t!]
	\centering
 \begin{tiny}
	\caption{Performance of SigLasso, GRU and Neural CDE in different simulation settings, averaged over 10 iterations. In every setting, $n=50$,  $\# \bar{D}^i = 5$ for all $i=1, \dots, n$ (and therefore $M=250$).}
	\label{tab:performance}
	\begin{tabular}{lcccccc}
	\toprule
    & \multicolumn{3}{c}{$L_2$ error}& \multicolumn{3}{c}{MSE on last point} \\
    \cmidrule(lr){2-4}
    \cmidrule(lr){5-7}
    Setting &  SigLasso & GRU & Neural CDE &  SigLasso & GRU & Neural CDE  \\
    \midrule 
    Well-specified & \textbf{0.13 $\pm$ 0.07} & 1.05 $\pm$ 0.42 & 0.61 $\pm$ 0.38  & \textbf{0.73 $\pm$ 0.56} & 3.32 $\pm$ 1.60 &  1.46 $\pm$ 1.20   \\
    Ill-specified & \textbf{0.15 $\pm$ 0.02} & 0.24 $\pm$ 0.11 & 0.29 $\pm$ 0.15 &  \textbf{0.09 $\pm$ 0.05} & 0.19 $\pm$ 0.09 &  0.22 $\pm$ 0.15 \\
    OU  & \textbf{0.01 $\pm$ 0.02} & 0.05 $\pm$ 0.06 & 0.17 $\pm$ 0.12 & 0.018 $\pm$ 0.025 & 0.014 $\pm$ 0.020 & \textbf{0.013 $\pm $ 0.016 }  \\
    Tumor growth & \textbf{0.16 $\pm$ 0.02} & 0.66 $\pm$ 0.09 & 5.29 $\pm$ 1.38 &  \textbf{0.35 $\pm$ 0.12 } & 2.00 $\pm$ 0.38 & 8.76 $\pm$ 9.26\\
	\bottomrule           
	\end{tabular}
 \end{tiny}
\end{table*}

The following lemmas bound each of those errors.
We first bound the variance of the estimator with arguments borrowed from~\citet{bickel2009simultaneous}. 
\begin{lemma}
\label{lemma:estimation_error}
Under Assumptions \ref{assump:X_bv} and \ref{assump:Y_CDE}, on the set $A_\varepsilon(\bar{\delta}) \cap A_\xi(\delta)$, the prediction error
\begin{equation*}
  \frac{1}{2M}\big\|\mathbf{y}-\mathbf{S}_N^{\mathcal{D}} \widehat{\theta}_{N,M}\big\|_{2}^2  
\end{equation*}
is bounded above by
\begin{align*}
  &\frac{1}{2M}\norm{\mathbf{y}-\mathbf{S}_N^{\mathcal{D}} \theta^\ast_N}_{2}^2 + \frac{2C_N(\bar{\delta})}{\sqrt{M}}\sum_{k=0}^N \frac{d^k \Lambda_k(\mathbf F)}{k!}.
\end{align*}
\end{lemma}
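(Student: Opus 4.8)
<br>

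The plan is to follow the standard Lasso oracle-inequality argument of Bickel–Ritov–Tsybakov, adapted to the block-weighted penalty $\Omega$ defined in~\eqref{eq:penalty}. First I would use the basic inequality coming from the definition of $\widehat\theta_{N,M}$ in~\eqref{eq:thetahatdef}: since $\widehat\theta_{N,M}$ minimizes the penalized empirical risk, comparing its objective value against that of $\theta^\ast_N$ and expanding $\mathbf{Y} = \mathbf{y} + \bm{\varepsilon}$ yields, after cancelling the $\|\bm\varepsilon\|_2^2$ terms,
\begin{align*}
  \frac{1}{2M}\big\|\mathbf{y}-\mathbf{S}_N^{\mathcal{D}}\widehat\theta_{N,M}\big\|_2^2
  \leq \frac{1}{2M}\big\|\mathbf{y}-\mathbf{S}_N^{\mathcal{D}}\theta^\ast_N\big\|_2^2
  + \frac{1}{M}\bm\varepsilon^\top \mathbf{S}_N^{\mathcal{D}}(\widehat\theta_{N,M}-\theta^\ast_N)
  + \Omega(\theta^\ast_N) - \Omega(\widehat\theta_{N,M}).
\end{align*}

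Next I would control the cross term $\tfrac1M\bm\varepsilon^\top\mathbf{S}_N^{\mathcal{D}}(\widehat\theta_{N,M}-\theta^\ast_N)$ on the event $A_\varepsilon(\bar\delta)$. Splitting the inner product block-by-block over word lengths $k=0,\dots,N$ and applying Hölder's inequality with the $\ell_\infty/\ell_1$ pairing, the definition~\eqref{eq:A_epsilon} of $A_\varepsilon(\bar\delta)$ gives
\[
  \frac{1}{M}\big|\bm\varepsilon^\top\mathbf{S}_N^{\mathcal{D}}(\widehat\theta_{N,M}-\theta^\ast_N)\big|
  \leq \sum_{k=0}^N \frac{C_k(\bar\delta)}{k!\sqrt M}\big\|(\widehat\theta_{N,M}-\theta^\ast_N)_{[k]}\big\|_1
  = \Omega(\widehat\theta_{N,M}-\theta^\ast_N).
\]
By the triangle inequality $\Omega(\widehat\theta_{N,M}-\theta^\ast_N)\le \Omega(\widehat\theta_{N,M})+\Omega(\theta^\ast_N)$, so the penalty differences telescope: the $\Omega(\widehat\theta_{N,M})$ contributions cancel and one is left with $2\Omega(\theta^\ast_N)$ added to the approximation term. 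This is the slick part of the argument — unlike a full oracle inequality that must track the support set of $\theta^\ast_N$ and invoke a restricted eigenvalue / compatibility condition, here we simply bound everything by $2\Omega(\theta^\ast_N)$, which is why no curvature assumption on $\mathbf{S}_N^{\mathcal D}$ appears in this lemma.

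Finally I would bound $2\Omega(\theta^\ast_N)$ explicitly. By definition $\theta^\ast_N$ collects the differential products $\Phi^I_{\mathbf F}(y_0)$, so $\big\|(\theta^\ast_N)_{[k]}\big\|_1 \le d^k \max_{I\in\{1,\dots,d\}^k}\|\Phi^I_{\mathbf F}(y_0)\| = d^k\Lambda_k(\mathbf F)$ using the definition of $\Lambda_k(\mathbf F)$ from Assumption~\ref{assump:decay_derivatives_F}. Hence
\[
  2\Omega(\theta^\ast_N) = 2\sum_{k=0}^N \frac{C_k(\bar\delta)}{k!\sqrt M}\big\|(\theta^\ast_N)_{[k]}\big\|_1
  \leq \frac{2}{\sqrt M}\sum_{k=0}^N \frac{C_k(\bar\delta)\, d^k\Lambda_k(\mathbf F)}{k!}
  \leq \frac{2C_N(\bar\delta)}{\sqrt M}\sum_{k=0}^N \frac{d^k\Lambda_k(\mathbf F)}{k!},
\]
where the last step uses that $k\mapsto C_k(\bar\delta)=\sqrt{v_\varepsilon\log(2Nd^k/\bar\delta)}$ is nondecreasing in $k$, so $C_k(\bar\delta)\le C_N(\bar\delta)$. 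Combining the three displays gives exactly the claimed bound. The main technical obstacle is not in this lemma's chain of inequalities — which is essentially bookkeeping — but in verifying that $A_\varepsilon(\bar\delta)$ indeed has probability at least $1-\bar\delta$ (a subgaussian maximal inequality applied blockwise, deferred to the appendix) and, when $p\ge1$, in carrying the Frobenius-norm version of each step; I would handle the general $p$ case by treating the $p$ columns as parallel regressions and summing, exactly as the paper indicates.
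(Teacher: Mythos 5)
Your proposal is correct and follows essentially the same route as the paper's proof: the basic inequality from the minimizer definition, block-wise $\ell_1$--$\ell_\infty$ duality on the cross term controlled via $A_\varepsilon(\bar\delta)$, the triangle-inequality cancellation leaving $2\Omega(\theta^\ast_N)$, and the final bound $\|(\theta^\ast_N)_{[k]}\|_1 \le d^k\Lambda_k(\mathbf F)$ together with the monotonicity $C_k(\bar\delta)\le C_N(\bar\delta)$. The only ingredient you defer that the paper spells out is the justification that $A_\varepsilon(\bar\delta)$ has the claimed probability (via the boundedness $\|\mathbf{S}^{\mathcal{D}}_{\cdot,[k]}\|_\infty \le 1/k!$ on $A_\xi(\delta)$ and a subgaussian union bound), which you correctly identify as the auxiliary work.
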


See Appendix \ref{proof:bias_variance} for a proof. This inequality decomposes the error into a bias and a variance term. We denote the signature matrix of the unobserved paths $x^{i}$ by $\mathbf{S}_N \in \mathbb{R}^{M \times s_d(N)}$.

As for the bias term, notice that one can write
\begin{align*}
    \frac{1}{2M}\norm{\mathbf{y}-\mathbf{S}_N^{\mathcal{D}} \theta^\ast_N}_{2}^2 & \leq \frac{1}{M} \underbrace{\norm{\mathbf{y} - \mathbf{S}_N \theta^\ast_N}_{2}^2}_{\text{Truncation bias}}  \\
    & + \frac{1}{M}\underbrace{\norm{\mathbf{S}_N \theta^\ast_N - \mathbf{S}_N^{\mathcal{D}} \theta^\ast_N }_{2}^2}_{\text{Discretization error}}.
\end{align*}
Bounding each of these terms corresponds to, respectively, Lemmas~\ref{lemma:bias_bound} and~\ref{lemma:disc_error}. We stress that the truncation bias is of a different nature than the discretization error since it depends on a choice of hyperparameter while the latter is inherent to the data at hand.

\begin{lemma}
\label{lemma:bias_bound}
Under Assumptions \ref{assump:X_bv} and \ref{assump:Y_CDE}, for any $N \geq 1$,
\begin{align*}
    \frac{1}{M}\norm{\mathbf{y}-\mathbf{S}_N \theta^*_N}^2_{2} \leq  \Bigg(\frac{d^{N+1}\Lambda_{N+1}(\mathbf{F})}{(N+1)!}\Bigg)^2.
\end{align*}
\end{lemma}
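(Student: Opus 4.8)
The plan is to bound $\frac{1}{M}\norm{\mathbf{y} - \mathbf{S}_N \theta_N^*}_2^2$ by controlling each of its $M$ entries separately and then using that the number of entries is $M$. Fix an individual $i$ and a target measurement time $t \in \bar{D}^i$. The corresponding entry of $\mathbf{y}$ is the true value $y^i_t$ of the solution of the CDE~\eqref{eq:cde} driven by $x^i$, while the corresponding entry of $\mathbf{S}_N \theta_N^*$ is, by the matrix-vector reformulation~\eqref{eq:linear_problem} of the Taylor expansion, exactly $\overline{y}_{N,t}^i$, the $N$-th order Taylor expansion of $y^i$ evaluated at time $t$ (here we use that $\mathbf{S}_N$ is the signature matrix of the \emph{true} paths $x^i$, so no discretization is involved). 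Hence the $(i,t)$ entry of $\mathbf{y} - \mathbf{S}_N \theta_N^*$ is precisely the Taylor remainder $y^i_t - \overline{y}_{N,t}^i$.

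The key step is then a quantitative version of the convergence~\eqref{eq:cvg_taylor}: I need an explicit bound on $\norm{y^i_t - \overline{y}_{N,t}^i}$. This is the standard Euler/Taylor remainder estimate for CDEs (the references \citet{friz2008euler} and \citet[Proposition 4]{fermanian2021framing} give it). The remainder is controlled by the first omitted layer: terms of order $N+1$, each of which is a product of a signature coefficient $S^I(x^i_{[0,t]})$ with $|I| = N+1$ and a differential product $\Phi^I_{\mathbf F}(y_0)$. Using Assumption~\ref{assump:X_bv} (total variation bounded by $L < 1$), the factorial decay of signature coefficients gives $|S^I(x^i_{[0,t]})| \leq \norm{x^i}_{\textnormal{1-var}}^{N+1}/(N+1)! \leq L^{N+1}/(N+1)! \leq 1/(N+1)!$; there are $d^{N+1}$ words of length $N+1$; and each differential product is bounded by $\Lambda_{N+1}(\mathbf F)$ by definition. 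Summing gives
\[
\norm{y^i_t - \overline{y}^i_{N,t}} \leq \frac{d^{N+1}\Lambda_{N+1}(\mathbf F)}{(N+1)!}.
\]
(One should check whether the precise remainder bound in the cited references carries an extra geometric factor that sums because $L<1$; if so it is absorbed, and the clean bound above is exactly what one expects from the factorial-decay heuristic — this reconciliation with the cited estimate is the one place that needs care.)

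Finally, squaring and summing over all $M = \sum_i m_i$ target measurement entries,
\[
\frac{1}{M}\norm{\mathbf{y} - \mathbf{S}_N \theta_N^*}_2^2 = \frac{1}{M}\sum_{i,\,t \in \bar{D}^i} \big(y^i_t - \overline{y}^i_{N,t}\big)^2 \leq \frac{1}{M} \cdot M \cdot \Bigg(\frac{d^{N+1}\Lambda_{N+1}(\mathbf F)}{(N+1)!}\Bigg)^2,
\]
which is the claimed inequality. The main obstacle is making the Taylor remainder bound for CDEs fully rigorous with the stated constant — i.e., invoking the right form of the CDE Taylor theorem from \citet{friz2008euler} / \citet{fermanian2021framing} and verifying that Assumption~\ref{assump:X_bv}'s condition $L<1$ is exactly what is needed to suppress the tail beyond order $N+1$ into this single-layer bound; everything else is bookkeeping (counting words, factorial decay of signatures, and the trivial sum over $M$ entries).
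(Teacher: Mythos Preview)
Your proposal is correct and follows essentially the same approach as the paper: bound each of the $M$ entries by the CDE Taylor remainder estimate $\norm{y^i_t - \overline{y}^i_{N,t}} \leq d^{N+1}\Lambda_{N+1}(\mathbf F)/(N+1)!$ (which the paper obtains by directly invoking \citet[Proposition 4]{fermanian2021framing}), then square and average. Your extra heuristic unpacking of why the remainder has this form and your caveat about a possible geometric tail are unnecessary here, since the paper simply cites the bound as a black box from that reference.
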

Under Assumption \ref{assump:decay_derivatives_F}, the right-hand-side decays exponentially fast with $N$.
This lemma is an immediate consequence of \citet{fermanian2021framing} (see Appendix~\ref{proof:bias_bound}). We now turn to the error induced by the discretization of the feature path.
\begin{lemma}
\label{lemma:disc_error}
Under Assumptions \ref{assump:X_bv},  \ref{assump:sampling_grid}, and \ref{assump:feature_noise}, on the set $A_{\xi}(\delta)$, one has
\[
\frac{1}{M}\norm{(\mathbf{S}_N - \mathbf{S}^\mathcal{D}_N)\theta^\ast_N}^2_{2} \leq C_{\mathcal{D},N}(\delta) \sum \limits_{k=0}^N  \frac{d^k \Lambda_k(\mathbf{F})^2}{k!^2},
\]
where $C_{\mathcal{D},N}(\delta)$ is equal to
\begin{align*}
& 4e^2 L^2 N!^2 \times\\ & \Big(\omega \abs{\mathcal{D}} + C_\delta + \frac{1-L+ 2\# \mathcal{D} C_\delta}{\eta}\big(\norm{x_0}+L+C_\delta\big)\Big)^2.
\end{align*}  
\end{lemma}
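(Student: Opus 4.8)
The plan is to pass from the averaged quantity to a uniform, per-observation bound, and then to control the difference between the signature of the true path $x^i$ and that of its observed, normalized, piecewise-linear surrogate $\widehat X^i$ (i.e.\ the linear interpolation of $(X^i_s)_{s\in D^i}$ rescaled by its total variation) via a quantitative continuity estimate for the signature. Writing the left-hand side as $\tfrac1M\sum_{(i,t)}\langle S_N(x^i_{[0,t]})-S_N(\widehat X^i_{[0,t]}),\theta^\ast_N\rangle^2$, where the sum runs over the $M$ target-measurement indices, averaging gives $\tfrac1M\sum\le\max$, so it is enough to bound $|\langle S_N(x_{[0,t]})-S_N(\widehat X_{[0,t]}),\theta^\ast_N\rangle|$ uniformly over $i$ and over $t\in\bar D^i$. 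Recall that $\theta^\ast_N$ collects the differential products $\Phi^I_{\mathbf F}(y_0)$, with $\norm{\Phi^I_{\mathbf F}(y_0)}\le\Lambda_{|I|}(\mathbf F)$, together with the offset $y_0$; the offset lives in the level-$0$ coordinate, where $S^{()}(x)=S^{()}(\widehat X)=1$, so it cancels, leaving $\sum_{k=1}^N\sum_{|I|=k}(S^I(x_{[0,t]})-S^I(\widehat X_{[0,t]}))\Phi^I_{\mathbf F}(y_0)$.

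First I would control the path distance $\norm{x^i-\widehat X^i}_{\infty,[0,t]}$ by splitting $\widehat X^i-x^i$ into three parts. (a)~The interpolation error of the signal: $\norm{x^{i,\mathrm{lin}}-x^i}_\infty\le\omega\abs{D^i}\le\omega\abs{\mathcal D}$, by $\omega$-Lipschitzness. (b)~The interpolated noise $\xi^{i,\mathrm{lin}}$: on $A_\xi(\delta)$ each node satisfies $\norm{\xi^i_t}\le C_\delta$, so $\norm{\xi^{i,\mathrm{lin}}}_\infty\le C_\delta$ and, since each of the fewer than $\#D^i$ edge-increments has norm at most $2C_\delta$, $\norm{\xi^{i,\mathrm{lin}}}_{\textnormal{1-var}}\le2\#D^i C_\delta\le2\#\mathcal D\,C_\delta$. (c)~The rescaling: $\widehat X^i=X^{i,\mathrm{lin}}/v_i$ with $v_i=\norm{X^{i,\mathrm{lin}}}_{\textnormal{1-var}}$; here $v_i\ge t^i_{k_i}\ge\eta$ because the monotone time coordinate alone contributes $t^i_{k_i}$ to the total variation, and $|1-v_i|\le(1-L)+2\#\mathcal D\,C_\delta$ by Assumption~\ref{assump:X_bv} and (b), while $\norm{X^{i,\mathrm{lin}}}_\infty\le\norm{x_0}+L+C_\delta$; hence $\norm{\widehat X^i-X^{i,\mathrm{lin}}}_\infty=|v_i^{-1}-1|\,\norm{X^{i,\mathrm{lin}}}_\infty\le\eta^{-1}\big((1-L)+2\#\mathcal D\,C_\delta\big)\big(\norm{x_0}+L+C_\delta\big)$. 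Summing (a)--(c) gives $\norm{x^i-\widehat X^i}_{\infty,[0,t]}\le\Delta$, where $\Delta$ is exactly the bracketed quantity inside $C_{\mathcal D,N}(\delta)$.

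Next I would invoke the core analytic ingredient: a quantitative continuity estimate for the signature of bounded-variation paths with respect to the supremum norm. Writing $S^I(x)-S^I(\widehat X)$ as a telescoping sum over which of the $k$ iterated-integral factors is switched from $x$ to $\widehat X$, integrating by parts in that factor so as to pull the difference $x-\widehat X$ out under a supremum norm, and bounding the remaining iterated integrals by the factorial decay of the signature, one obtains, for $|I|=k$, a bound of the form $|S^I(x_{[0,t]})-S^I(\widehat X_{[0,t]})|\le\tfrac{c^{k}}{k!}\,\big(\norm{x}_{\textnormal{1-var}}\vee\norm{\widehat X}_{\textnormal{1-var}}\big)^{k-1}\norm{x-\widehat X}_{\infty,[0,t]}$ for a universal constant $c$. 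Plugging this into the inner product, splitting level by level, using $\norm{\theta^\ast_{[k]}}_1\le d^k\Lambda_k(\mathbf F)$, and applying Cauchy--Schwarz across the $N$ levels with weights $1/k!$ reproduces the factor $\sum_{k=0}^N d^k\Lambda_k(\mathbf F)^2/k!^2$; collecting the leftover combinatorial sum $\sum_k c^{2k}$ (bounded by a constant times $N!^2$), the path-variation powers (bounded using $L<1$), and the $\Delta^2$ from Step~1, one arrives at $C_{\mathcal D,N}(\delta)=4e^2L^2N!^2\Delta^2$. Squaring the per-observation bound and taking the maximum over $(i,t)$ finishes the proof; for general $p\ge1$ the same argument is run coordinatewise and combined through the Frobenius norm.

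The main obstacle is the supremum-norm continuity estimate for the signature together with the bookkeeping that follows it: the integration-by-parts step produces several competing combinatorial factors, and one must arrange the cross-level Cauchy--Schwarz (and the choice of weights $1/k!$) so that the final prefactor collapses to the clean $4e^2L^2N!^2$ rather than to something merely exponential in $N$. A secondary subtlety is that the renormalization constant $v_i$ is itself random through $\xi^i$, so the lower bound $v_i\ge\eta$ and the two-sided control of $|1-v_i|$ have to be carried out on the event $A_\xi(\delta)$ --- which is precisely why the statement is conditional on that event.
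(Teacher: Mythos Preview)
Your overall architecture is the paper's: first bound the uniform path distance $\norm{x^i-\widehat X^i}_{\infty}$ by the quantity $\Delta$ inside $C_{\mathcal D,N}(\delta)$ via the three-term split (a)--(c), then feed this into a layer-wise continuity estimate for the signature. The path-distance part is essentially identical to the paper's argument (including the lower bound $v_i\ge\eta$ from the time channel and the control of $|1-v_i|$ on $A_\xi(\delta)$), and is correct.

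The gap is in the second step. Your claimed estimate $|S^I(x)-S^I(\widehat X)|\le \tfrac{c^k}{k!}(\cdot)^{k-1}\norm{x-\widehat X}_\infty$ is not available: the sup-norm continuity bound for the level-$k$ signature does \emph{not} carry a $1/k!$. The paper's layer-wise estimate (their Theorem~B.5), obtained exactly by the telescoping/integration-by-parts you describe, gives
\[
\norm{\mathbb X^k-\widehat{\mathbb X}^k}_{(\mathbb R^d)^{\otimes k}}\le 2eL^{k-1}\norm{x-\widehat X}_{\infty},
\]
with no factorial decay; the $1/(k-1)!$ that appears in the first term of the recursion is destroyed when the recursion is unrolled, because the dominant contribution comes from iterating the $L\norm{\mathbb X^{k-1}-\widehat{\mathbb X}^{k-1}}$ term all the way down. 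Consequently your Cauchy--Schwarz-across-levels plan, which relies on that phantom $1/k!$ to manufacture $\sum_k d^k\Lambda_k^2/k!^2$, collapses. In particular, ``$\sum_k c^{2k}$ bounded by a constant times $N!^2$'' is simply false: that sum is geometric, not factorial.

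What the paper actually does is much more pedestrian. It pairs the level-$k$ signature difference with $\theta^\ast_{[k]}$ via the $\ell_2$--$\ell_2$ (Cauchy--Schwarz) inequality, not $\ell_\infty$--$\ell_1$: this gives a level-$k$ contribution bounded by $\norm{\mathbb X^k-\widehat{\mathbb X}^k}^2\cdot d^k\Lambda_k^2\le (2eL^{k-1}\Delta)^2 d^k\Lambda_k^2$. Summing over $k$, using $L<1$, and then \emph{artificially} writing $d^k\Lambda_k^2=\tfrac{d^k\Lambda_k^2}{k!^2}\cdot k!^2\le N!^2\cdot\tfrac{d^k\Lambda_k^2}{k!^2}$ is the entire source of both the factor $\sum_k d^k\Lambda_k^2/k!^2$ and the $N!^2$ in $C_{\mathcal D,N}(\delta)$. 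There is no Cauchy--Schwarz across levels; the $N!^2$ is a cosmetic device to put the bound in the same currency as the other terms of the oracle inequality. Note also that your $\ell_\infty$--$\ell_1$ pairing would cost an extra $d^{k/2}$ per level relative to the paper's $\ell_2$--$\ell_2$ pairing, so even after fixing the continuity estimate you would land on $d^{2k}$ rather than $d^k$ inside the sum.
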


This lemma relies on a fine analysis of the distance between two signature layers. The dependence of $C_{\mathcal{D},N}(\delta)$ on sampling mechanisms and noise is of particular interest. First, the term $\omega |\mathcal{D}|$ refers to the longest time between two observations amongst individuals. Not sampling an individual during a long period of time causes a loss in information, which is bounded by the Lipschitz control $\omega$ of the feature path. 

The second part of $C_{\mathcal{D},N}(\delta)$ is a consequence of the noises $\xi^{i}_t$ affecting the measurement points of the feature time series and does not vanish with $n$. The emergence of such a bias is a well studied phenomenon in errors-in-variable models, and cannot be corrected without precise knowledge of the noise's variance~\citep{loh2011high}. 

The last term corresponds to a bias coming from the interplay of the normalisation of the signatures by the total variation of the path and the noise, which is a standard practice \cite{morrill2020generalised}. We found that this normalization performs best empirically. Note that this bias term is equal to $0$ if the path has total variation exactly equal to $1$ and is observed without noise.

From Lemmas~\ref{lemma:estimation_error}  to~\ref{lemma:disc_error} and the definitions of $A_\xi$ and $A_\varepsilon$ finally we get the following oracle inequality. The proof is given in Appendix \ref{proof:oracle_bound}. 
\begin{theorem}[An oracle inequality for learning with signatures]
\label{thm:oracle_inequatility}
Under Assumptions \ref{assump:X_bv}, \ref{assump:Y_CDE}, \ref{assump:sampling_grid}, \ref{assump:feature_noise}, and \ref{assump:target_noise}, with probability at least  $(1-\delta)(1-\bar{\delta})$, the prediction error
is bounded above by
\begin{align}
\frac{1}{2M}\norm{\mathbf{y} - \mathbf{S}_N^{\mathcal{D}}\widehat \theta_{N,M}}_2^2 & \leq \Bigg(\frac{d^{N+1}\Lambda_{N+1}(\mathbf{F})}{(N+1)!}\Bigg)^2  \label{term_1}\\
& +C_{\mathcal{D},N}(\delta)  \sum \limits_{k=0}^N  \frac{d^k \Lambda_k(\mathbf{F})^2}{k!^2} \label{term_2}\\
& + \frac{2C_N(\Bar{\delta})}{\sqrt{M}}\sum_{k=0}^N \frac{d^k \Lambda_k(\mathbf F)}{k!} \label{term_3}.
\end{align}
\end{theorem}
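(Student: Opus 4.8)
The statement is essentially a bookkeeping assembly of Lemmas \ref{lemma:estimation_error}, \ref{lemma:bias_bound}, and \ref{lemma:disc_error}, so I would not reprove anything already established but rather chain the three bounds together while keeping careful track of the event on which each one holds. First I would place myself on the event $A_\xi(\delta) \cap A_\varepsilon(\bar\delta)$; by Assumptions \ref{assump:feature_noise} and \ref{assump:target_noise} (and the subgaussian tail computations of Appendices \ref{appendix:subgaussian} and \ref{appendix:proof_notations}) this event has probability at least $(1-\delta)(1-\bar\delta)$, which is exactly the probability claimed in the theorem. Everything below is deterministic on this event.

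On $A_\xi(\delta) \cap A_\varepsilon(\bar\delta)$, Lemma \ref{lemma:estimation_error} gives
$\tfrac{1}{2M}\|\mathbf{y}-\mathbf{S}_N^{\mathcal{D}}\widehat\theta_{N,M}\|_2^2 \le \tfrac{1}{2M}\|\mathbf{y}-\mathbf{S}_N^{\mathcal{D}}\theta_N^\ast\|_2^2 + \tfrac{2C_N(\bar\delta)}{\sqrt{M}}\sum_{k=0}^N \tfrac{d^k \Lambda_k(\mathbf{F})}{k!}$, the second summand being precisely term~\eqref{term_3}. (Here I would double-check that the penalty $\Omega$ in \eqref{eq:penalty} is the one used in the proof of Lemma \ref{lemma:estimation_error}, and that $C_k(\bar\delta)\le C_N(\bar\delta)$ for all $k\le N$, which holds since $k\mapsto C_k(\bar\delta)$ is nondecreasing; this is what lets the layer-wise penalty constants collapse into the single prefactor $C_N(\bar\delta)$.) It remains to bound the first summand, the "oracle" term $\tfrac{1}{2M}\|\mathbf{y}-\mathbf{S}_N^{\mathcal{D}}\theta_N^\ast\|_2^2$.

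For that term I would write $\mathbf{y}-\mathbf{S}_N^{\mathcal{D}}\theta_N^\ast = (\mathbf{y}-\mathbf{S}_N\theta_N^\ast) + (\mathbf{S}_N-\mathbf{S}_N^{\mathcal{D}})\theta_N^\ast$ and apply $\|a+b\|_2^2\le 2\|a\|_2^2+2\|b\|_2^2$, so that
$\tfrac{1}{2M}\|\mathbf{y}-\mathbf{S}_N^{\mathcal{D}}\theta_N^\ast\|_2^2 \le \tfrac{1}{M}\|\mathbf{y}-\mathbf{S}_N\theta_N^\ast\|_2^2 + \tfrac{1}{M}\|(\mathbf{S}_N-\mathbf{S}_N^{\mathcal{D}})\theta_N^\ast\|_2^2$, which is exactly the truncation-bias/discretization-error split stated right after Lemma \ref{lemma:estimation_error}. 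The first piece is bounded by $\big(d^{N+1}\Lambda_{N+1}(\mathbf{F})/(N+1)!\big)^2$ via Lemma \ref{lemma:bias_bound} (this is term~\eqref{term_1}, and it uses only Assumptions \ref{assump:X_bv} and \ref{assump:Y_CDE}); the second piece is bounded by $C_{\mathcal{D},N}(\delta)\sum_{k=0}^N d^k\Lambda_k(\mathbf{F})^2/k!^2$ via Lemma \ref{lemma:disc_error}, which holds on $A_\xi(\delta)$ under Assumptions \ref{assump:X_bv}, \ref{assump:sampling_grid}, \ref{assump:feature_noise} — this is term~\eqref{term_2}. Summing the three contributions yields the asserted inequality.

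\textbf{Main obstacle.} At the level of this theorem there is no real difficulty: it is a triangle-inequality decomposition plus invocation of three lemmas and a union bound over the two noise events. The genuine work is pushed into the lemmas — in particular the proof of Lemma \ref{lemma:disc_error}, which requires a careful layer-by-layer comparison between the signature of the true path $x^i$ and that of the normalized piecewise-linear interpolation of the noisy samples $X^i$, propagating the mesh size $|\mathcal{D}|$, the Lipschitz constant $\omega$, and the subgaussian bound $C_\delta$ through the iterated-integral structure — and into the concentration arguments showing $\mathbb{P}(A_\xi(\delta))\ge 1-\delta$ and $\mathbb{P}(A_\varepsilon(\bar\delta))\ge 1-\bar\delta$. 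The only points to be careful about in the assembly itself are the constant $2$ from $\|a+b\|_2^2\le 2\|a\|_2^2+2\|b\|_2^2$ (which converts the $\tfrac{1}{2M}$ prefactor into $\tfrac{1}{M}$ on each piece, matching the lemma normalizations) and the monotonicity $C_k(\bar\delta)\le C_N(\bar\delta)$ used to present term~\eqref{term_3} with a single constant.
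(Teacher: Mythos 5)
Your assembly is correct and follows essentially the same route as the paper: invoke Lemma~\ref{lemma:estimation_error} on $A_\xi(\delta)\cap A_\varepsilon(\bar\delta)$, split the oracle term via $\|a+b\|_2^2\le 2\|a\|_2^2+2\|b\|_2^2$ into truncation bias and discretization error, and bound those with Lemmas~\ref{lemma:bias_bound} and~\ref{lemma:disc_error}. One small remark: the probability $(1-\delta)(1-\bar\delta)$ is obtained in the paper by conditioning ($\mathbb{P}(A_\varepsilon(\bar\delta)\mid A_\xi(\delta))\ge 1-\bar\delta$, since $A_\varepsilon$ is defined through the $A_\xi$-measurable matrix $\mathbf{S}_N^{\mathcal{D}}$), not by the union bound you mention in passing, which would only yield $1-\delta-\bar\delta$.
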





All terms depend on the regularity of the vector field $\mathbf{F}$ via the constants $\Lambda_k(\mathbf{F})$: the bigger these constants, the faster the vector field $\mathbf{F}$ may vary, making the CDE harder to predict. The convergence speed in $1/ \sqrt{M}$ is classical. Also note that our bound is non-asymptotic and is valid for any $M \geq 1$. 

The dependence of the bound on $N$ is highly non-trivial and requires an in-depth analysis of the regularity of $\mathbf{F}$ in order to bound $\Lambda_k(\mathbf{F})$, which is out of the scope of this paper. The asymptotic behaviour of this oracle inequality is discussed in Appendix \ref{appendix:assymptotics}. 


\section{Experiments}
\label{section:experiments}

We study the performance of SigLasso obtained by solving the optimization problem \eqref{eq:thetahatdef}, where $\Omega(\theta)$ is defined by Equation~\eqref{eq:penalty}. All details are given in Appendix \ref{appendix:experiments}.

\subsection{Simulations}

We consider several settings of data generation. First, in the well-specified setting, the data is generated from a model with regular feature paths $x$ (piecewise polynomials) and target paths $y$ wich are solutions to the CDE $$dy_t = \text{tanh}(Ay_t )dx_t,$$
where $A$ is a randomly drawn matrix.  In the ill-specified setting, the target $y$ is equal to
$$y_t = \log \| \sum_{h=1}^{10} x_{t-h} \|$$ for any $t \in [0,1]$. In the third setting, called OU setting, the feature paths are realizations of Brownian motions and the target paths are  Ornstein-Uhlenbeck processes~\citep{borodin2015handbook} driven by the feature paths. The last setting corresponds to the tumor growth model from \citet{simeoni2004predictive}. The feature path represents the concentration of a treatment drug, generated as the squared value of the smooth paths used in the well-specified setting, and the target path $y$, the weight of the tumor, is governed by a system of differential equations given in Appendix \ref{appendix:details_tumor_growth}.

\begin{figure}[ht]
    \centering
    \includegraphics[width=0.4\textwidth]{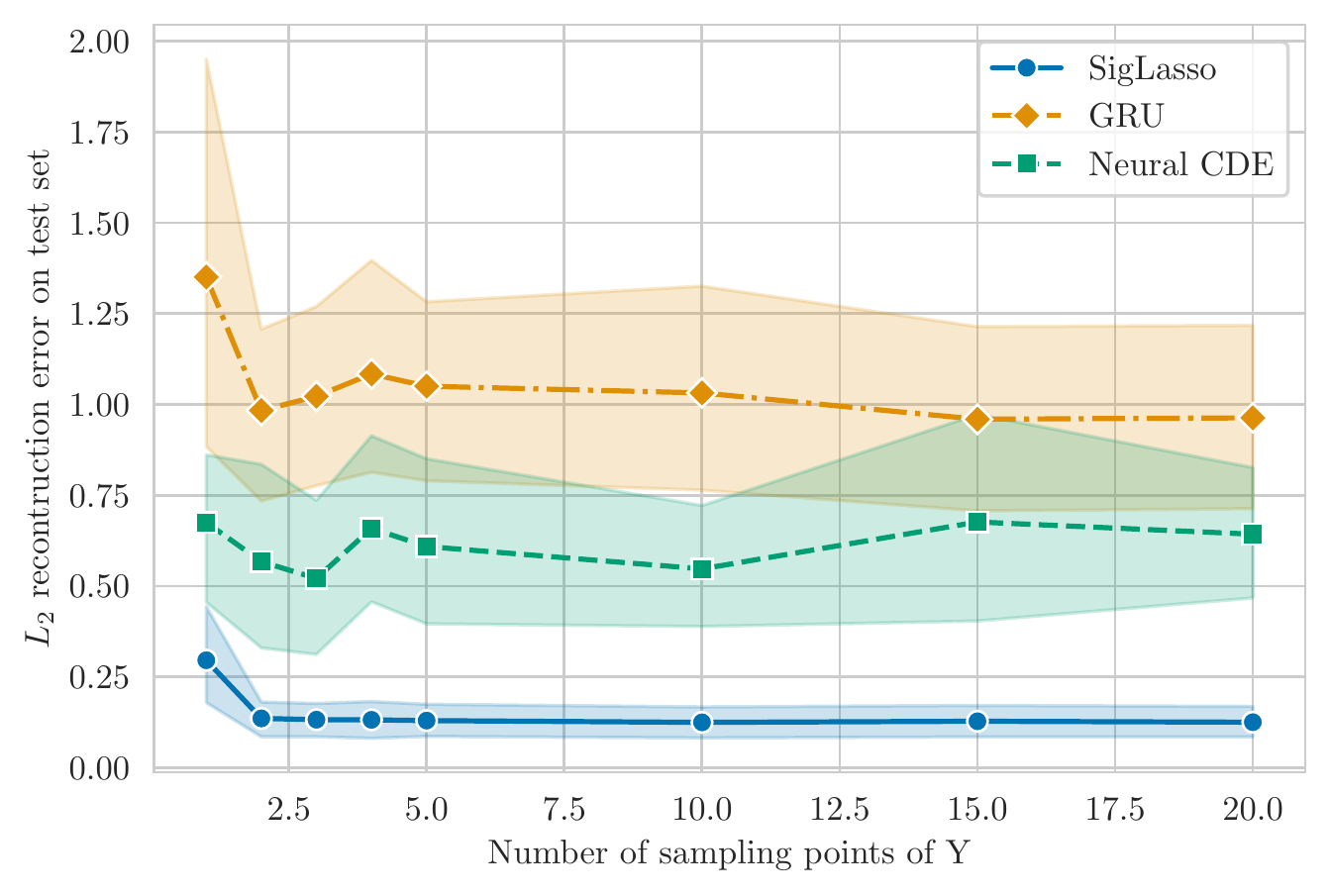}
    \caption{$L_2$ reconstruction error of SigLasso, GRU and Neural CDE in the well-specified setting, for varying number of target samples.}
    \label{fig:sampling_study_smooth}
\end{figure}

We compare SigLasso to a GRU and Neural CDE \citep{kidger2020neural}. We measure the performance of the models with two metrics on a test set: the mean squared error for predicting the last observation point of the target paths and the $L_2$ error for predicting the full path on a fine grid.

The results are shown in Table \ref{tab:performance} and Figure \ref{fig:sampling_study_smooth}. In Figure \ref{fig:sampling_study_smooth} we consider the well-specified setting and vary the number of sampling points of the target paths between 1 and 20. In Table \ref{tab:performance} it is fixed to 5 but the simulation settings change. Sampling of both the target and the feature time series is highly irregular. SigLasso outperforms Neural CDE and GRU models in generalizing from a few learning points of the target to its full trajectory in all settings. We conduct supplementary experiments with RNN and LSTM, which are also outperformed by SigLasso (see Table \ref{table:supplementary_results} in Appendix \ref{appendix:additional_results}). 
An additional byproduct of SigLasso's simple form is its training speed: its is approximately 10 times faster than GRU and 100 times faster than Neural CDE, including cross-validation to select $N$ and regularization strength (see Appendix \ref{appendix:additional_results}).

\subsection{Forecasting the Growth Rate of Hospitalizations in France During the Covid-19 Pandemic}

\begin{figure}
    \centering
    \includegraphics[width=0.40\textwidth]{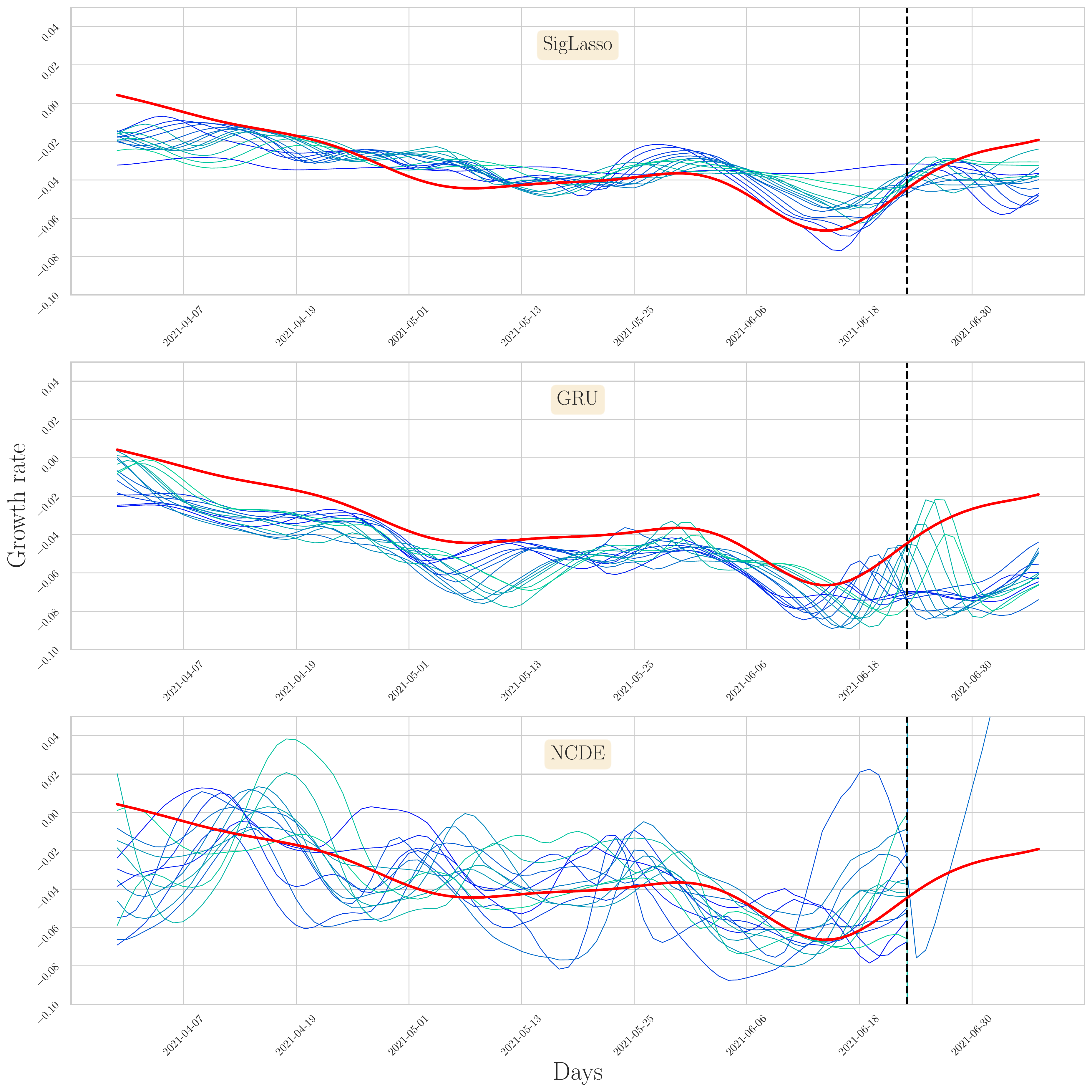}
    \caption{Interpolation (left of dotted line) and prediction (right of dotted line) of HGR in region Île de France for SigLasso, GRU and NCDE. The lighter the blue, the smaller the horizon $h$. Ground truth is in red. NCDE overfits and is unable to predict the HGR during the test period.}
    \label{fig:covid_example}
\end{figure}

Forecasting hospitalizations in real time during the Covid-19 pandemic is a notably difficult task. In this experiment, we train our model to learn the dynamics linking population data related to mobility, vaccination, and weather, and the hospitalization growth rate (HGR) in each of the 9 metropolitan regions of France based on the data of \citet{paireau2022ensemble}. The feature time series is regularly sampled and specific to each region and $12$-dimensional. We consider prediction horizons $h=1,\dots,14$, meaning that we predict the hospital saturation at time $t$ using the history of the feature time series up to $t-h$. Using our notations, we have for each region $d=12$, $p=1$, $n=1$. Concretely, this means that we train one Siglasso model per region and per horizon. The target is sampled every day during the training period left of the dotted line in Figure \ref{fig:covid_example}, and the models are fitted to those values: on this time span, the models learns to interpolate the target time series. The model is then asked to predict the HGR on the days right of the dotted line seing only the feature time series. This means that it performs a prediction task on this time span.

Both GRU and SigLasso learn smooth and precise dynamics, which generalize well above the learning horizon and yield similar prediction performance (see Appendix~\ref{appendix:french_covid}). Our model is slightly outperformed by GRU for $h\geq 7$ for the prediction task, the difference in MSE being always less than $0.08$. It performs similarly or better for most values of $h$ for the interpolation task. Neural CDE and the original method proposed by~\citet{paireau2022ensemble} perform poorly. Figure \ref{fig:covid_example} shows an example of reconstruction and prediction of HGR obtained with SigLasso, GRU and NCDE.

\section{Conclusion}

We have introduced a novel CDE-based model for interacting systems. Drawing on the theory of signatures, we derive an oracle bound that depends explicitly on the roughness of the data sampling. We illustrate the high performance of our approach on synthetic and real-world data.


The obtained theoretical guarantees rely on strong regularity assumptions on the vector field $\mathbf{F}$. The exact approximation properties of this class of vector field are a very interesting direction for future work. Considering other penalties that take into account the underlying structure of $\theta_N^*$ would also be an interesting extension of our work.  
  
\paragraph{Acknowledgement.} We thank anonymous ICML reviewers for their remarks, which helped us improve this paper. LB thanks Gérard Biau and Claire Boyer for supervising a previous internship which sparked his interest in signatures. LB and AF thank the Sorbonne Center for Artificial Intelligence (SCAI) and its team. 

\bibliography{refs}

\begin{thebibliography}{56}
\providecommand{\natexlab}[1]{#1}
\providecommand{\url}[1]{\texttt{#1}}
\expandafter\ifx\csname urlstyle\endcsname\relax
  \providecommand{\doi}[1]{doi: #1}\else
  \providecommand{\doi}{doi: \begingroup \urlstyle{rm}\Url}\fi

\bibitem[Bickel et~al.(2009)Bickel, Ritov, and
  Tsybakov]{bickel2009simultaneous}
Bickel, P.~J., Ritov, Y., and Tsybakov, A.~B.
\newblock Simultaneous analysis of lasso and dantzig selector.
\newblock \emph{The Annals of Statistics}, 37\penalty0 (4):\penalty0
  1705--1732, 2009.

\bibitem[Borodin \& Salminen(2012)Borodin and Salminen]{borodin2015handbook}
Borodin, A.~N. and Salminen, P.
\newblock \emph{Handbook of Brownian Motion-Facts and Formulae}.
\newblock Birkhäuser, Basel, 2012.

\bibitem[Brunton et~al.(2020)Brunton, Noack, and
  Koumoutsakos]{brunton2020machine}
Brunton, S.~L., Noack, B.~R., and Koumoutsakos, P.
\newblock Machine learning for fluid mechanics.
\newblock \emph{Annual review of fluid mechanics}, 52:\penalty0 477--508, 2020.

\bibitem[Buehler et~al.(2020)Buehler, Horvath, Lyons, Arribas, and
  Wood]{buehler2020data}
Buehler, H., Horvath, B., Lyons, T., Arribas, I.~P., and Wood, B.
\newblock A data-driven market simulator for small data environments.
\newblock \emph{arXiv preprint arXiv:2006.14498}, 2020.

\bibitem[Chen(1958)]{chen1958integration}
Chen, K.-T.
\newblock Integration of paths—a faithful representation of paths by
  non-commutative formal power series.
\newblock \emph{Transactions of the American Mathematical Society},
  89:\penalty0 395--407, 1958.

\bibitem[Chen et~al.(2018)Chen, Rubanova, Bettencourt, and
  Duvenaud]{chen2018neural}
Chen, R. T.~Q., Rubanova, Y., Bettencourt, J., and Duvenaud, D.~K.
\newblock Neural ordinary differential equations.
\newblock In Bengio, S., Wallach, H., Larochelle, H., Grauman, K.,
  Cesa-Bianchi, N., and Garnett, R. (eds.), \emph{Advances in Neural
  Information Processing Systems}, volume~31, pp.\  6572--6583. Curran
  Associates, Inc., 2018.

\bibitem[Chesneau \& Hebiri(2008)Chesneau and Hebiri]{chesneau2008some}
Chesneau, C. and Hebiri, M.
\newblock Some theoretical results on the grouped variables lasso.
\newblock \emph{Mathematical Methods of Statistics}, 17\penalty0 (4):\penalty0
  317--326, 2008.

\bibitem[Chevyrev \& Kormilitzin(2016)Chevyrev and
  Kormilitzin]{chevyrev2016primer}
Chevyrev, I. and Kormilitzin, A.
\newblock A primer on the signature method in machine learning.
\newblock \emph{arXiv preprint arXiv:1603.03788}, 2016.

\bibitem[De~Brouwer et~al.(2019)De~Brouwer, Simm, Arany, and Moreau]{de2019gru}
De~Brouwer, E., Simm, J., Arany, A., and Moreau, Y.
\newblock {GRU}-{ODE}-{B}ayes: {C}ontinuous modeling of sporadically-observed
  time series.
\newblock In Wallach, H., Larochelle, H., Beygelzimer, A., d\textquotesingle
  Alch\'{e}-Buc, F., Fox, E., and Garnett, R. (eds.), \emph{Advances in Neural
  Information Processing Systems}, volume~32, pp.\  7379--7390. Curran
  Associates, Inc., 2019.

\bibitem[Fattahi et~al.(2019)Fattahi, Matni, and Sojoudi]{fattahi2019learning}
Fattahi, S., Matni, N., and Sojoudi, S.
\newblock Learning sparse dynamical systems from a single sample trajectory.
\newblock In \emph{2019 IEEE 58th Conference on Decision and Control (CDC)},
  pp.\  2682--2689. IEEE, 2019.

\bibitem[Fermanian(2021)]{fermanian2021embedding}
Fermanian, A.
\newblock Embedding and learning with signatures.
\newblock \emph{Computational Statistics \& Data Analysis}, 157:\penalty0
  107148, 2021.

\bibitem[Fermanian(2022)]{fermanian2022functional}
Fermanian, A.
\newblock Functional linear regression with truncated signatures.
\newblock \emph{Journal of Multivariate Analysis}, 192:\penalty0 105031, 2022.

\bibitem[Fermanian et~al.(2021)Fermanian, Marion, Vert, and
  Biau]{fermanian2021framing}
Fermanian, A., Marion, P., Vert, J.-P., and Biau, G.
\newblock Framing {RNN} as a kernel method: {A} neural {ODE} approach.
\newblock In Ranzato, M., Beygelzimer, A., Dauphin, Y., Liang, P., and Vaughan,
  J.~W. (eds.), \emph{Advances in Neural Information Processing Systems},
  volume~34, pp.\  3121--3134. Curran Associates, Inc., 2021.

\bibitem[Friz \& Victoir(2008)Friz and Victoir]{friz2008euler}
Friz, P. and Victoir, N.
\newblock Euler estimates for rough differential equations.
\newblock \emph{Journal of Differential Equations}, 244:\penalty0 388--412,
  2008.

\bibitem[Friz \& Victoir(2010)Friz and Victoir]{friz2010multidimensional}
Friz, P.~K. and Victoir, N.~B.
\newblock \emph{Multidimensional Stochastic Processes as Rough Paths: Theory
  and Applications}, volume 120 of \emph{Cambridge Studies in Advanced
  Mathematics}.
\newblock Cambridge University Press, Cambridge, 2010.

\bibitem[Gehring et~al.(2017)Gehring, Auli, Grangier, Yarats, and
  Dauphin]{gehring2017convolutional}
Gehring, J., Auli, M., Grangier, D., Yarats, D., and Dauphin, Y.~N.
\newblock Convolutional sequence to sequence learning.
\newblock In Precup, D. and Teh, Y.~W. (eds.), \emph{Proceedings of the 34th
  International Conference on Machine Learning}, volume~70, pp.\  1243--1252.
  PMLR, 2017.

\bibitem[Greydanus et~al.(2019)Greydanus, Dzamba, and
  Yosinski]{greydanus2019hamiltonian}
Greydanus, S., Dzamba, M., and Yosinski, J.
\newblock Hamiltonian neural networks.
\newblock In Wallach, H., Larochelle, H., Beygelzimer, A., d\textquotesingle
  Alch\'{e}-Buc, F., Fox, E., and Garnett, R. (eds.), \emph{Advances in Neural
  Information Processing Systems}, volume~32. Curran Associates, Inc., 2019.

\bibitem[Han et~al.(2018)Han, Jentzen, and E]{han2018solving}
Han, J., Jentzen, A., and E, W.
\newblock Solving high-dimensional partial differential equations using deep
  learning.
\newblock \emph{Proceedings of the National Academy of Sciences}, 115\penalty0
  (34):\penalty0 8505--8510, 2018.

\bibitem[Herrera et~al.(2021)Herrera, Krach, and Teichmann]{herrera2021neural}
Herrera, C., Krach, F., and Teichmann, J.
\newblock Neural jump ordinary differential equations: Consistent
  continuous-time prediction and filtering.
\newblock In \emph{International Conference on Learning Representations}, 2021.

\bibitem[Hochreiter \& Schmidhuber(1997)Hochreiter and
  Schmidhuber]{hochreiter1997long}
Hochreiter, S. and Schmidhuber, J.
\newblock Long short-term memory.
\newblock \emph{Neural Computation}, 9:\penalty0 1735--1780, 1997.

\bibitem[Kidger(2022)]{kidger2022neural}
Kidger, P.
\newblock On neural differential equations.
\newblock \emph{arXiv preprint arXiv:2202.02435}, 2022.

\bibitem[Kidger \& Lyons(2020)Kidger and Lyons]{kidger2020signatory}
Kidger, P. and Lyons, T.
\newblock Signatory: differentiable computations of the signature and
  logsignature transforms, on both cpu and gpu.
\newblock In \emph{International Conference on Learning Representations}, 2020.

\bibitem[Kidger et~al.(2019)Kidger, Bonnier, Perez~Arribas, Salvi, and
  Lyons]{kidger2019deep}
Kidger, P., Bonnier, P., Perez~Arribas, I., Salvi, C., and Lyons, T.
\newblock {Deep signature transforms}.
\newblock In Wallach, H., Larochelle, H., Beygelzimer, A., d\textquotesingle
  Alch\'{e}-Buc, F., Fox, E., and Garnett, R. (eds.), \emph{Advances in Neural
  Information Processing Systems}, volume~32, pp.\  3099--3109. Curran
  Associates, Inc., 2019.

\bibitem[Kidger et~al.(2020)Kidger, Morrill, Foster, and
  Lyons]{kidger2020neural}
Kidger, P., Morrill, J., Foster, J., and Lyons, T.
\newblock Neural controlled differential equations for irregular time series.
\newblock In Larochelle, H., Ranzato, M., Hadsell, R., Balcan, M.~F., and Lin,
  H. (eds.), \emph{Advances in Neural Information Processing Systems},
  volume~33, pp.\  6696--6707. Curran Associates, Inc., 2020.

\bibitem[Lagaris et~al.(1998)Lagaris, Likas, and
  Fotiadis]{lagaris1998artificial}
Lagaris, I.~E., Likas, A., and Fotiadis, D.~I.
\newblock Artificial neural networks for solving ordinary and partial
  differential equations.
\newblock \emph{IEEE transactions on neural networks}, 9\penalty0 (5):\penalty0
  987--1000, 1998.

\bibitem[Lederer et~al.(2019)Lederer, Yu, and Gaynanova]{lederer2019oracle}
Lederer, J., Yu, L., and Gaynanova, I.
\newblock Oracle inequalities for high-dimensional prediction.
\newblock \emph{Bernoulli}, 25\penalty0 (2):\penalty0 1225--1255, 2019.

\bibitem[Lemercier et~al.(2021)Lemercier, Salvi, Damoulas, Bonilla, and
  Lyons]{lemercier2021distribution}
Lemercier, M., Salvi, C., Damoulas, T., Bonilla, E., and Lyons, T.
\newblock Distribution regression for sequential data.
\newblock In Banerjee, A. and Fukumizu, K. (eds.), \emph{Proceedings of The
  24th International Conference on Artificial Intelligence and Statistics},
  volume 130, pp.\  3754--3762. PMLR, 2021.

\bibitem[Levin et~al.(2013)Levin, Lyons, and Ni]{levin2013learning}
Levin, D., Lyons, T., and Ni, H.
\newblock Learning from the past, predicting the statistics for the future,
  learning an evolving system.
\newblock \emph{arXiv preprint arXiv:1309.0260}, 2013.

\bibitem[Loh \& Wainwright(2011)Loh and Wainwright]{loh2011high}
Loh, P.-l. and Wainwright, M.~J.
\newblock High-dimensional regression with noisy and missing data: Provable
  guarantees with non-convexity.
\newblock In Shawe-Taylor, J., Zemel, R., Bartlett, P., Pereira, F., and
  Weinberger, K. (eds.), \emph{Advances in Neural Information Processing
  Systems}, volume~24. Curran Associates, Inc., 2011.

\bibitem[Long et~al.(2018)Long, Lu, Ma, and Dong]{long2018pde}
Long, Z., Lu, Y., Ma, X., and Dong, B.
\newblock {PDE}-net: Learning {PDE}s from data.
\newblock In Dy, J. and Krause, A. (eds.), \emph{Proceedings of the 35th
  International Conference on Machine Learning}, volume~80, pp.\  3208--3216.
  PMLR, 2018.

\bibitem[Lyons \& McLeod(2022)Lyons and McLeod]{lyons2022signature}
Lyons, T. and McLeod, A.~D.
\newblock Signature methods in machine learning.
\newblock \emph{arXiv preprint arXiv:2206.14674}, 2022.

\bibitem[Lyons et~al.(2007)Lyons, Caruana, and L{\'e}vy]{lyons2007differential}
Lyons, T., Caruana, M., and L{\'e}vy, T.
\newblock \emph{Differential Equations driven by Rough Paths}, volume 1908 of
  \emph{Lecture Notes in Mathematics}.
\newblock Springer, Berlin, 2007.

\bibitem[Lyons et~al.(2014)Lyons, Ni, and Oberhauser]{lyons2014feature}
Lyons, T., Ni, H., and Oberhauser, H.
\newblock A feature set for streams and an application to high-frequency
  financial tick data.
\newblock In \emph{Proceedings of the 2014 International Conference on Big Data
  Science and Computing}, pp.\ ~5. ACM, 2014.

\bibitem[Marion et~al.(2022)Marion, Fermanian, Biau, and
  Vert]{marion2022scaling}
Marion, P., Fermanian, A., Biau, G., and Vert, J.-P.
\newblock Scaling resnets in the large-depth regime.
\newblock \emph{arXiv preprint arXiv:2206.06929}, 2022.

\bibitem[Marx \& Eilers(1999)Marx and Eilers]{marx1999generalized}
Marx, B.~D. and Eilers, P.~H.
\newblock Generalized linear regression on sampled signals and curves: a
  {P}-spline approach.
\newblock \emph{Technometrics}, 41:\penalty0 1--13, 1999.

\bibitem[Morrill et~al.(2020{\natexlab{a}})Morrill, Fermanian, Kidger, and
  Lyons]{morrill2020generalised}
Morrill, J., Fermanian, A., Kidger, P., and Lyons, T.
\newblock A generalised signature method for multivariate time series feature
  extraction.
\newblock \emph{arXiv preprint arXiv:2006.00873}, 2020{\natexlab{a}}.

\bibitem[Morrill et~al.(2021{\natexlab{a}})Morrill, Kidger, Yang, and
  Lyons]{morrill2021neural}
Morrill, J., Kidger, P., Yang, L., and Lyons, T.
\newblock Neural controlled differential equations for online prediction tasks.
\newblock \emph{arXiv preprint arXiv:2106.11028}, 2021{\natexlab{a}}.

\bibitem[Morrill et~al.(2021{\natexlab{b}})Morrill, Salvi, Kidger, and
  Foster]{morrill2021neuralrough}
Morrill, J., Salvi, C., Kidger, P., and Foster, J.
\newblock Neural rough differential equations for long time series.
\newblock In Meila, M. and Zhang, T. (eds.), \emph{Proceedings of the 38th
  International Conference on Machine Learning}, volume 139, pp.\  7829--7838.
  PMLR, 2021{\natexlab{b}}.

\bibitem[Morrill et~al.(2020{\natexlab{b}})Morrill, Kormilitzin,
  Nevado-Holgado, Swaminathan, Howison, and Lyons]{morrill2020utilization}
Morrill, J.~H., Kormilitzin, A., Nevado-Holgado, A.~J., Swaminathan, S.,
  Howison, S.~D., and Lyons, T.~J.
\newblock Utilization of the signature method to identify the early onset of
  sepsis from multivariate physiological time series in critical care
  monitoring.
\newblock \emph{Critical Care Medicine}, 48:\penalty0 e976--e981,
  2020{\natexlab{b}}.

\bibitem[Morris(2015)]{morris2015functional}
Morris, J.~S.
\newblock Functional regression.
\newblock \emph{Annual Review of Statistics and Its Application}, 2:\penalty0
  321--359, 2015.

\bibitem[Paireau et~al.(2022)Paireau, Andronico, Hoz{\'e}, Layan, Crepey,
  Roumagnac, Lavielle, Bo{\"e}lle, and Cauchemez]{paireau2022ensemble}
Paireau, J., Andronico, A., Hoz{\'e}, N., Layan, M., Crepey, P., Roumagnac, A.,
  Lavielle, M., Bo{\"e}lle, P.-Y., and Cauchemez, S.
\newblock An ensemble model based on early predictors to forecast covid-19
  health care demand in france.
\newblock \emph{Proceedings of the National Academy of Sciences}, 119\penalty0
  (18):\penalty0 e2103302119, 2022.

\bibitem[Papavasiliou \& Ladroue(2011)Papavasiliou and
  Ladroue]{papavasiliou2011parameter}
Papavasiliou, A. and Ladroue, C.
\newblock Parameter estimation for rough differential equations.
\newblock \emph{The Annals of Statistics}, 39\penalty0 (4):\penalty0
  2047--2073, 2011.

\bibitem[Pedregosa et~al.(2011)Pedregosa, Varoquaux, Gramfort, Michel, Thirion,
  Grisel, Blondel, Prettenhofer, Weiss, Dubourg, Vanderplas, Passos,
  Cournapeau, Brucher, Perrot, and Duchesnay]{pedregosa2011scikit}
Pedregosa, F., Varoquaux, G., Gramfort, A., Michel, V., Thirion, B., Grisel,
  O., Blondel, M., Prettenhofer, P., Weiss, R., Dubourg, V., Vanderplas, J.,
  Passos, A., Cournapeau, D., Brucher, M., Perrot, M., and Duchesnay, E.
\newblock {Scikit}-learn: {M}achine learning in {P}ython.
\newblock \emph{Journal of Machine Learning Research}, 12:\penalty0 2825--2830,
  2011.

\bibitem[Ramsay \& Dalzell(1991)Ramsay and Dalzell]{ramsay1991some}
Ramsay, J.~O. and Dalzell, C.
\newblock Some tools for functional data analysis.
\newblock \emph{Journal of the Royal Statistical Society. Series B
  (Methodological)}, 53:\penalty0 539--561, 1991.

\bibitem[Ramsay \& Silverman(2005)Ramsay and Silverman]{ramsay2005functional}
Ramsay, J.~O. and Silverman, B.~W.
\newblock \emph{Functional Data Analysis. 2nd Edition.}
\newblock Springer, New York, 2005.

\bibitem[Reizenstein \& Graham(2020)Reizenstein and
  Graham]{reizenstein2021algorithm}
Reizenstein, J.~F. and Graham, B.
\newblock Algorithm 1004: The iisignature library: Efficient calculation of
  iterated-integral signatures and log signatures.
\newblock \emph{ACM Transactions on Mathematical Software}, 46\penalty0
  (1):\penalty0 1--21, 2020.

\bibitem[Rubanova et~al.(2019)Rubanova, Chen, and Duvenaud]{rubanova2019latent}
Rubanova, Y., Chen, R. T.~Q., and Duvenaud, D.~K.
\newblock Latent ordinary differential equations for irregularly-sampled time
  series.
\newblock In Wallach, H., Larochelle, H., Beygelzimer, A., d\textquotesingle
  Alch\'{e}-Buc, F., Fox, E., and Garnett, R. (eds.), \emph{Advances in Neural
  Information Processing Systems}, volume~32, pp.\  5320--5330. Curran
  Associates, Inc., 2019.

\bibitem[Simeoni et~al.(2004)Simeoni, Magni, Cammia, De~Nicolao, Croci,
  Pesenti, Germani, Poggesi, and Rocchetti]{simeoni2004predictive}
Simeoni, M., Magni, P., Cammia, C., De~Nicolao, G., Croci, V., Pesenti, E.,
  Germani, M., Poggesi, I., and Rocchetti, M.
\newblock Predictive pharmacokinetic-pharmacodynamic modeling of tumor growth
  kinetics in xenograft models after administration of anticancer agents.
\newblock \emph{Cancer research}, 64\penalty0 (3):\penalty0 1094--1101, 2004.

\bibitem[Sutskever et~al.(2014)Sutskever, Vinyals, and
  Le]{sutskever2014sequence}
Sutskever, I., Vinyals, O., and Le, Q.~V.
\newblock Sequence to sequence learning with neural networks.
\newblock In Ghahramani, Z., Welling, M., Cortes, C., Lawrence, N., and
  Weinberger, K. (eds.), \emph{Advances in Neural Information Processing
  Systems}, volume~27. Curran Associates, Inc., 2014.

\bibitem[Vershynin(2010)]{vershynin2010introduction}
Vershynin, R.
\newblock Introduction to the non-asymptotic analysis of random matrices.
\newblock \emph{arXiv preprint arXiv:1011.3027}, 2010.

\bibitem[Vershynin(2018)]{vershynin-high-dimensional-2018}
Vershynin, R.
\newblock \emph{High-dimensional probability: {An} introduction with
  applications in data science}, volume~47.
\newblock Cambridge university press, 2018.

\bibitem[Wang et~al.(2020)Wang, Wu, Taylor, Lyons, Liakata, Nevado-Holgado, and
  Saunders]{wang2020learning}
Wang, B., Wu, Y., Taylor, N., Lyons, T., Liakata, M., Nevado-Holgado, A.~J.,
  and Saunders, K.~E.
\newblock Learning to detect bipolar disorder and borderline personality
  disorder with language and speech in non-clinical interviews.
\newblock \emph{Interspeech 2020}, pp.\  437--441, 2020.

\bibitem[Willard et~al.(2020)Willard, Jia, Xu, Steinbach, and
  Kumar]{willard2020integrating}
Willard, J., Jia, X., Xu, S., Steinbach, M., and Kumar, V.
\newblock Integrating physics-based modeling with machine learning: A survey.
\newblock \emph{arXiv preprint arXiv:2003.04919}, 2020.

\bibitem[Yang et~al.(2022)Yang, Lyons, Ni, Schmid, and Jin]{yang2022leveraging}
Yang, W., Lyons, T., Ni, H., Schmid, C., and Jin, L.
\newblock Developing the path signature methodology and its application to
  landmark-based human action recognition.
\newblock In Yin, G. and Zariphopoulou, T. (eds.), \emph{Stochastic Analysis,
  Filtering, and Stochastic Optimization: A Commemorative Volume to Honor Mark
  H.A. Davis's Contributions}, pp.\  431--464. Springer, 2022.

\bibitem[Zubov et~al.(2021)Zubov, McCarthy, Ma, Calisto, Pagliarino, Azeglio,
  Bottero, Luj{\'a}n, Sulzer, Bharambe, et~al.]{zubov2021neuralpde}
Zubov, K., McCarthy, Z., Ma, Y., Calisto, F., Pagliarino, V., Azeglio, S.,
  Bottero, L., Luj{\'a}n, E., Sulzer, V., Bharambe, A., et~al.
\newblock Neural{PDE}: Automating physics-informed neural networks ({PINN}s)
  with error approximations.
\newblock \emph{arXiv preprint arXiv:2107.09443}, 2021.

\bibitem[Zwillinger(1989)]{zwillinger1998handbook}
Zwillinger, D.
\newblock \emph{Handbook of differential equations}.
\newblock Academic Press, Inc., London, 1989.

\end{thebibliography}
\bibliographystyle{icml2023}
\clearpage
\appendix
\onecolumn 

\begin{center}
    \Large \textbf{Supplementary Material} 
\end{center}
\bigskip 

\section{Summary of used notations}
\label{appendix:definitions}
The following table provides an exhaustive list of notations used in the main body of the paper. Notations are grouped by subsections.

\begin{table}[h!]
\centering
\footnotesize
\begin{tabular}{cll}
\toprule
\textbf{Notation} & \textbf{Definition} & \textbf{Reference} \\
\midrule 
$y_t$ & Target path & Introduction, page 1\\
$x_t$ & Feature path & ---  \\
$Y_t$ & Target time series & Introduction, page 2\\
$X_t$ & Feature time series & --- \\
$D^i$ & Sampling grid of the features of individual $i$ & ---\\
$\Bar{D}^i$ & Sampling grid of the target of individual $i$ & ---\\
$\xi^i_t$ & Noise on the feature time series & ---\\
$\varepsilon^i_t$ & Noise on the target time series & --- \\
$m_i$ & Number of sampling points of the target for individual $i$ & --- \\
$n$ & Number of samples & --- \\
$M$ & Total number of sampling points of the target & --- \\
$\mathbf{X}^i$ & Matrix of measurements of the feature time series for individual $i$ & ---\\
$\mathbf{Y}^i$ & Matrix of measurements of the target time series for individual $i$ & ---\\
\midrule 
$\norm{x}_{\textnormal{1-var},[0,t]}$ & Total variation of the path $x$ on the interval $[0,t]$ & Section 2, page 3, Assumption \ref{assump:X_bv} \\
$C_L^{\textnormal{1-var}([0,1],\mathbb{R}^d)}$ & Set of continuous paths of total variation bounded by $L$ & --- \\
$\mathbf{F}$ & Unknown smooth generative vector field & Section 2, page 3, Assumption \ref{assump:Y_CDE} \\ 
\midrule 
$S^I(x_{[0,t]})$ & Signature coefficient of the path $x$ on $[0,t]$ associated to the word $I$ & Section 2, page 4, Definition \ref{def:signature}\\
$\mathbb{X}_{k,[0,t]}$ & Signature of order $k$ & ---\\
$S(x_{[0,t]})$ & Full signature at order $N$ & ---\\
$S_N(x_{[0,t]})$ & Truncated signature at order $N$ & ---\\
$s_d(N)$ & Size of the signature truncated at order $N$ of a $d$ dimensional path & ---\\
$\Bar{y}_{N,t}$ & Taylor expansion of the solution of a CDE of order $N$ evaluated in $t$ & Section 2, page 4, Definition \ref{def:taylor_expansion}\\
$\Phi^I_\mathbf{F}$ & Differential product of the vector field $\mathbf{F}$ along $I$ & Appendix \ref{appendix:diff_prod}, page 15, Definition \ref{def:diff_prod}\\
$\theta^\star_N$ & Matrix collecting all differential products up to order $N$ & Section 2, page 5, Equation \eqref{eq:linear_problem}\\
$\Lambda_k(\mathbf{F})$ & Norm on the differential product of $\mathbf{F}$ & Section 2, page 5, Assumption \ref{assump:decay_derivatives_F}\\
\midrule 
$\eta$ & Minimal sampling time & Section 2, page 5, Assumption \ref{assump:sampling_grid} \\
$\mathcal{D}$ & Set of individual specific sampling grids of features & Section 2, page 5\\
$\Bar{\mathcal{D}}$ & Set of individual specific sampling grids of targets & --- \\
$\abs{D}$ & Meshisze of a sampling grid $D$ & ---\\
$\#D$ & Number of sampling points in a sampling grid & ---\\
$\mathbf{Y}$ & Matrix collecting all target measurements of the sample & --- \\
$\mathbf{S}^\mathcal{D}_N$ & Matrix of stacked signatures & Section 2, page 6\\
\midrule 
$\mathbf{y}$ & Expectation of $\mathbf{Y}$ & Section 3, page 6\\
$\Hat{\theta}_{N,M}$ & Siglasso estimator & Section 3, page 6, Equation \eqref{eq:optimization_problem}\\
$A_\xi(\delta)$ & Set bounding the magnitude of noises $(\xi_t^i)$ & Section 3, page 6, Equation \eqref{eqn:a_xi}\\
$C_k(\Bar{\delta})$ & Constant & Section 3, page 6\\
$A_\varepsilon(\delta)$ & Set bounding the magnitude of the noises $(\varepsilon_t^i)$ & Section 3, page 7, Equation \eqref{eq:A_epsilon}\\
$\Omega(\theta)$ & Penalty evalutated at $\theta$ & Section 3, page 7, Equation \eqref{eq:penalty}\\
\bottomrule
\end{tabular}
\end{table}

\section{Mathematical details}

\subsection{The Riemann-Stieltjes integral}
\label{appendix:RS_integral}

We fitst recall two key properties on the Riemann-Stieljes integral. For a general presentation of the Riemann-Stieltjes integral, we refer to \citet{friz2010multidimensional}.




\begin{proposition}
Let $x \in C_L^{\textnormal{1-var}}([0,1],\mathbb{R}^d)$ and $y:[0,1] \rightarrow \mathbb{R}^d$ be a continuous path. Then 
\begin{align*}
    \norm{\int_0^t y_s dx_s} \leq \norm{y}_{\infty,[0,t]} \norm{x}_{\textnormal{1-var}, [0,t]}
\end{align*}
\end{proposition}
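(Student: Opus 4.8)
The plan is to argue directly from the definition of the Riemann--Stieltjes integral as a limit of Riemann--Stieltjes sums. Recall that since $x$ has bounded variation and $y$ is continuous, the integral $\int_0^t y_s\,dx_s$ exists and equals the limit, along any sequence of partitions of $[0,t]$ whose mesh tends to zero, of the sums
\[
S(\pi) = \sum_{j=0}^{m-1} y_{\zeta_j}\bigl(x_{u_{j+1}} - x_{u_j}\bigr),
\]
where $\pi = \{0 = u_0 < u_1 < \dots < u_m = t\}$ and $\zeta_j \in [u_j, u_{j+1}]$ is an arbitrary tag point (see \citet{friz2010multidimensional}).

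First I would bound a single Riemann--Stieltjes sum. By the triangle inequality and the submultiplicativity of the Euclidean norm with respect to the product appearing in the integrand (Cauchy--Schwarz, respectively the operator-norm inequality),
\[
\norm{S(\pi)} \le \sum_{j=0}^{m-1} \norm{y_{\zeta_j}}\,\norm{x_{u_{j+1}} - x_{u_j}} \le \norm{y}_{\infty,[0,t]} \sum_{j=0}^{m-1} \norm{x_{u_{j+1}} - x_{u_j}}.
\]
Then, since by definition $\norm{x}_{\textnormal{1-var},[0,t]}$ is the supremum of $\sum_j \norm{x_{u_{j+1}} - x_{u_j}}$ over all finite dissections of $[0,t]$, the right-hand side is at most $\norm{y}_{\infty,[0,t]}\,\norm{x}_{\textnormal{1-var},[0,t]}$, uniformly in $\pi$.

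Finally, picking any sequence of partitions $\pi_\ell$ with mesh tending to $0$, we have $S(\pi_\ell) \to \int_0^t y_s\,dx_s$; since $\norm{\cdot}$ is continuous and every $\norm{S(\pi_\ell)}$ is bounded by $\norm{y}_{\infty,[0,t]}\norm{x}_{\textnormal{1-var},[0,t]}$, the limit inherits the same bound, which is exactly the claim. I do not expect a genuine obstacle here: the only points requiring a little care are invoking existence of the integral (classical for a continuous integrand against a BV integrator) and transferring the bound through the limit via continuity of the norm; the core estimate is just the triangle inequality combined with the definition of total variation.
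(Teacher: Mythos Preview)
Your argument is correct and is the standard proof via Riemann--Stieltjes sums. The paper does not give its own proof but simply refers to \citet[Proposition 2.2]{friz2010multidimensional}, so your proposal is strictly more informative than what the paper provides; the route you take is exactly the classical one underlying that reference.
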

We refer the reader to \citet[][Proposition 2.2]{friz2010multidimensional} for a proof.

\begin{proposition}[Integration by parts] \label{prop:int_by_part_riemann}
Let $x,y \in C_L^{\textnormal{1-var}}([0,1],\mathbb{R}^d)$. Then 
\begin{align*}
    \int_s^t y_u dx_u + \int_s^t x_u dy_u = y_t x_t - y_sx_s
\end{align*}
\end{proposition}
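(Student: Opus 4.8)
The plan is to establish the identity at the level of Riemann–Stieltjes sums and then pass to the limit as the mesh of the partition shrinks to zero. Since both sides of the claimed identity are bilinear in the pair $(x,y)$ and each coordinate $x^{(j)}, y^{(j)}$ is a real-valued continuous path of bounded variation on $[s,t]$, it suffices to treat the scalar case $d=1$ and sum over coordinates at the end; so assume $x,y:[s,t]\to\mathbb{R}$ are continuous with finite total variation.

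First I would fix a partition $\mathcal{P} = \{s = u_0 < u_1 < \dots < u_m = t\}$ and consider the sum
\[
\Sigma(\mathcal{P}) = \sum_{i=0}^{m-1} y_{u_i}\big(x_{u_{i+1}} - x_{u_i}\big) + \sum_{i=0}^{m-1} x_{u_{i+1}}\big(y_{u_{i+1}} - y_{u_i}\big).
\]
The key algebraic step is Abel (summation-by-parts): expanding the products, each summand equals $y_{u_{i+1}}x_{u_{i+1}} - y_{u_i}x_{u_i}$, so $\Sigma(\mathcal{P})$ telescopes exactly to $y_t x_t - y_s x_s$, independently of $\mathcal{P}$.

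Next I would let the mesh $|\mathcal{P}| \to 0$. The Riemann–Stieltjes integral $\int_s^t y_u\,dx_u$ exists because $y$ is continuous and $x$ has bounded variation (classical existence theorem; see \citet{friz2010multidimensional}), hence the first sum — which is a Riemann–Stieltjes sum with left-endpoint tags — converges to $\int_s^t y_u\,dx_u$. Likewise $\int_s^t x_u\,dy_u$ exists since $x$ is continuous and $y$ has bounded variation, and the second sum is a Riemann–Stieltjes sum for it with right-endpoint tags; because the integral exists, the limit does not depend on the choice of tags, so the second sum converges to $\int_s^t x_u\,dy_u$. Passing to the limit in $\Sigma(\mathcal{P}) = y_tx_t - y_sx_s$ yields the identity, and summing over the $d$ coordinates gives the statement in $\mathbb{R}^d$.

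The only genuine subtlety — the point I would be most careful about — is the tag mismatch in the two sums (left endpoints in one, right endpoints in the other); this is harmless precisely because each Riemann–Stieltjes integral is already known to exist, so any sequence of tagged partitions with vanishing mesh converges to the same value. If one wanted to avoid invoking the abstract existence theorem, one could instead estimate the difference between the right-tagged and left-tagged sums for $\int x\,dy$ directly by $\sum_i |x_{u_{i+1}} - x_{u_i}|\,|y_{u_{i+1}} - y_{u_i}| \le \big(\max_i |x_{u_{i+1}}-x_{u_i}|\big)\,\|y\|_{\textnormal{1-var},[s,t]} \to 0$ by uniform continuity of $x$, but the existence-theorem route is cleaner.
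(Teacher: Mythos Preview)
Your argument is correct: the Abel summation-by-parts identity telescopes exactly, and the passage to the limit is justified by the existence of each Riemann--Stieltjes integral (continuous integrand against a bounded-variation integrator), so the tag mismatch is harmless; your alternative direct estimate of the cross term is also valid. The paper itself does not prove this proposition at all but simply refers the reader to \citet[Proposition 2.4]{friz2010multidimensional}, so you have in fact supplied more than the paper does.
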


See \citet[][Proposition 2.4]{friz2010multidimensional} for a proof.

\subsection{The truncated tensor algebra}
\label{appendix:truncated_algebra}
This section introduces notations and definitions on the space in which signatures are defined, namely, the tensor algebra. While for the exposition of our main results, the truncated signature of a path $x \in C_L^{\textnormal{1-var}}([0,1],\mathbb{R}^d)$ at depth $N \geq 1$ can be assimilated to an element of $\mathbb{R}^{s_d(N)}$, it is often useful to place ourselves in the tensor algebra to obtain finer bounds or technical results.\\

Let $x \in C_L^{\textnormal{1-var}}([0,1],\mathbb{R}^d)$ be a path of bounded variation. For a word $I = (i_1,\dots,i_k) \in \left\{ 1,\dots,d\right\}^k$ of size $k$, the signature coefficient $S^I(x_{[0,1]})$ can be seen as an element of the $k$-th tensor product of $\mathbb{R}^d$ with itself, denoted by $\big(\mathbb{R}^d\big)^{\otimes k}$. For instance, the coefficients of order $k=1$ can be written as a vector and the coefficients of order $k=2$ as a matrix, and so on, i.e.,
\begin{align*}
    \mathbb{X}_{[0,1]}^1 = \begin{bmatrix} \int_0^1 dx_s^{(1)}\\
    \vdots\\
    \int_0^1 dx_s^{(d)}
    \end{bmatrix}
\quad 
\text{and} \quad 
    \mathbb{X}_{[0,1]}^2 = \begin{bmatrix} \int_0^1 dx_s^{(1)}dx_s^{(1)} & \dots & \int_0^1 dx_s^{(1)}dx_s^{(d)}  \\
    \vdots & & \vdots\\
    \int_0^1 dx_s^{(d)}dx_s^{(1)} & \dots & \int_0^1 dx_s^{(d)}dx_s^{(d)}
    \end{bmatrix}.
\end{align*}

We now define a norm on $(\mathbb{R}^d)^{\otimes k}$. Let $a \in (\mathbb{R}^d)^{\otimes k}$ and $(e_1,\dots,e_d)$ be the canonical basis of $\mathbb{R}^d$. Then $(e_{i_1}\otimes \dots \otimes e_{i_k})_{(i_1,\dots,i_k) \in \{1,\dots,d\}^k}$ is a basis of $(\mathbb{R}^d)^{\otimes k}$. We can thus write $a$ as $a = (a^I)_{I \in \left\{1,\dots,d \right\}^k}$.  For every $k \geq 0$, the vector space $(\mathbb{R}^d)^{\otimes k}$ is naturally endowed with the norm 
\[
\norm{a}_{(\mathbb{R}^d)^{\otimes k}}^2 = \sum\limits_{I \in \left\{1,\dots,d \right\}^k} \big(a^I \big)^2.
\]

Remark that this norm satisfies for any $x \in (\mathbb{R}^d)^{\otimes k}$ and $y\in (\mathbb{R}^d)^{\otimes m}$, 
\begin{align}
\label{eq:admissibility}
\norm{x \otimes y}_{(\mathbb{R}^d)^{\otimes (k+m)}} = \norm{x}_{(\mathbb{R}^d)^{\otimes k}} \norm{y}_{(\mathbb{R}^d)^{\otimes m}}.
\end{align}
We refer to \citet[]{fermanian2021framing} for further deails.
The signature truncated at depth $N \geq 1$ collects elements from $\mathbb{R}, \big( \mathbb{R}^d\big)^{\otimes 2}, \dots, \big( \mathbb{R}^d\big)^{\otimes N}$. It can thus be seen as an element of the truncated tensor algebra
\[
T_N(\mathbb{R}^d) = \mathbb{R} \oplus \big( \mathbb{R}^d\big)^{\otimes 2} \oplus \dots \oplus \big( \mathbb{R}^d\big)^{\otimes N}.
\]

Let $a = (a_0,\dots,a_N) \in T_N(\mathbb{R}^d)$, where every $a_k \in (\mathbb{R}^d)^{\otimes k}$. We define the norm 
\begin{align*}
    \norm{a}_{T_N(\mathbb{R}^d)} = \Big(\sum\limits_{k=0}^N \norm{a_k}_{(\mathbb{R}^d)^{\otimes k}}^2 \Big)^{1/2}.
\end{align*}

To clarify, if we consider the truncated signature of $x$ at depth $N \geq 1$, which is an element of $T_N(\mathbb{R}^d)$, then 
\begin{align*}
    \norm{S_N(x_{[0,t]})}_{T_N(\mathbb{R}^d)} = \Big(\sum\limits_{k=0}^N \norm{\mathbb{X}_{[0,t]}^k}^2_{(\mathbb{R}^d)^{\otimes k}}\Big)^{1/2} = \Big(\sum\limits_{k=0}^N \quad \sum\limits_{I \in \left\{1,\dots,d \right\}^k} S^I(x_{[0,t]}) ^2\Big)^{1/2}.
\end{align*}

Note that this norm is exactly equivalent to the Euclidian norm of $\mathbb{R}^{s_d(N)}$, which is the space we consider in the exposition of our main results for the sake of simplicity.

We are now ready to define the tensor product on the truncated tensor algebra. For two elements $a = (a_0,\dots,a_N)$ and $b = (b_0,\dots,b_N)$ both in $T_N(\mathbb{R}^d)$, we define
\[
a \otimes b = (c_0,\dots,c_j,\dots, c_N), \quad \text{ where } c_j = \sum\limits_{k=0}^j a_k \otimes b_{j-k}.
\]
For any $k=0, \dots ,N$, we let $\pi_k : T_N(\mathbb{R}^d) \to (\mathbb{R}^d)^{\otimes k}$ be the canonical projection of $T_N(\mathbb{R}^d)$ onto $(\mathbb{R}^d)^{\otimes k}$. More precisely, for every $a = (a_0,\dots,a_N) \in T_N(\mathbb{R}^d)$, 
\[
\pi_k(a) = a_k
\]
We also define 
the canonical projection $\Pi_k : T_N(\mathbb{R}^d) \to T_k(\mathbb{R}^d)$ defined by
\[
\Pi_k(a) = \left(a_0,\dots,a_k\right).
\]

\subsection{The differential product}
\label{appendix:diff_prod}

We first define the differential product, in order to give a precise statement of Assumption~\ref{assump:decay_derivatives_F}.

\begin{definition}
\label{def:diff_prod}
Let $F,G:\mathbb{R}^p \rightarrow \mathbb{R}^p$ be two smooth vector fields, i.e., each of their components is $\mathcal{C}^\infty$. Denote by $J(\cdot)$ the Jacobian matrix. The differential product $F \star G: \mathbb{R}^p \rightarrow \mathbb{R}^p$ is the smooth vector field defined for any $h \in \mathbb{R}^p$
\begin{align*}
    (F \star G)(h) = \sum\limits_{j=1}^e \frac{\partial G}{\partial h_j}(h)F_j(h) = J(G)(h)F(h).
\end{align*}
\end{definition}

The differential product is not associative. We therefore use the convention to evaluate it from right to left, that is,
\begin{align*}
    F^{1}\star F^{2} \star F^{3} =  F^{1}\star \left(F^{2} \star F^{3}\right).
\end{align*}

Let $F: \mathbb{R}^p \rightarrow \mathbb{R}^{p \times d}$ be a smooth vector field. We write $F^1,\dots, F^d$ the columns of $\mathbf{F}$. Every $F^i$, for $i=1,\dots,d$, can thus be seen as a map from $\mathbb{R}^p$ to $\mathbb{R}^p$. Recall that $y_0 \in  \mathbb{R}^p$ is the initial condition of the CDE defined in Assumption \ref{assump:Y_CDE}. Let $I = (i_1,\dots,i_k)  \in \left\{ 1,\dots,d\right\}^k$. We now define
\begin{align*}
    \Phi^I_\mathbf{F}(y_0) = \left(F^{i_1}\star \dots \star F^{i_k}\right)(y_0) \in \mathbb{R}^p.
\end{align*}

We refer to \citet{fermanian2021framing} for greater details on the differential product. We now define for all $k \geq 1$
\begin{align} \label{eq:def_Lambda_k}
    \Lambda_k(\mathbf{F}) = \sup\limits_{1 \leq i_1,\dots,i_k \leq d} \norm{\Phi^I_\mathbf{F}(y_0)} \in \mathbb{R},
\end{align}
and use the convention $\Lambda_0(\mathbf{F}) = \norm{y_0}$. Remark that Assumption \ref{assump:decay_derivatives_F} implies that 
\[
\frac{d^{N+1}\Lambda_{N+1}(\mathbf{F})}{(N+1)!} \underset{N \to + \infty}{\longrightarrow} 0.
\]
As an immediate consequence, the truncation bias
\[
\Bigg(\frac{d^{N+1}\Lambda_{N+1}(\mathbf{F})}{(N+1)!}\Bigg)^2
\]
introduced in Lemma \ref{lemma:bias_bound}, vanishes as $N$ grows.

\subsection{Analogy with the Taylor extension}
\label{appendix:analogy_taylor}

The definition of the Taylor expansion of a CDE exposed in the previous subsection is technical. However, it can simply be thought of as a generalization of the the classical Taylor expansion of a $\mathcal{C}^\infty$ function $f: \mathbb{R} \rightarrow \mathbb{R}$. Recall that in this case, the Taylor expansion at $0$ evaluated at $t \in \mathbb{R}$ of order $N \in \mathbb{N}$ writes as a power series
\begin{align}
\label{eq:vanillaTaylor}
    f(t) \approx f(0) + \frac{f'(0)}{1!}t + \dots + \frac{f^{(N)}(0)}{N!} t^N.
\end{align}
Every element of this power series is a product of two terms: the derivatives of $f$ encode some information about the regularity of $f$ at the initial point $0$ and do not depend on $t$, while the polynomial terms $t^k$ allow this linearized form to evolve with time $t$. Remark that these polynomial terms do not depend on $f$.

Similarly, Equation~\eqref{eq:taylor_expansion} is also a sum of products of two terms. On the one hand, the evolving nature of the system, instead of being handled by the polynomial terms $t^k$, are now captured by the signature coefficients $S^{I}(x_{[0,t]})$. As the polynomial terms, they do not depend on $\mathbf{F}$. On the other hand, the information about the initial value of the system at time $t=0$ and the dynamics of $\mathbf{F}$ are summarized by the differential product $\Phi^{I}_{\mathbf{F}}\left(y_0\right)$, which play the same role as the successive derivatives in Equation~\eqref{eq:vanillaTaylor}. To capture the multivariate nature of the paths, the Taylor expansion is summed over multi-indexes, or words, $I=(i_1,\dots,i_k) \in \{1,\dots,d\}^k$ of size $k$ for $k \in \mathbb{N}$.

\subsection{Properties of subgaussian random vectors}
\label{appendix:subgaussian}

We start with the definition of a subgaussian random variable, see \citet{vershynin2010introduction} for more details.

\begin{definition}
\label{def:subgaussian}
    A real-valued random variable $X$ is said to be $\sigma^2$-subgaussian if for all $t > 0$ 
    \[
    \mathbb{P}(X > t) \leq \exp(-t^2/\sigma^2),
    \]
    or, equivalently, if for all $t \in \mathbb R $
      \[
    \mathbb{
    E}(e^{tX}) \leq \exp(-c t^2\sigma^2),
    \]
    where $c$ is an universal constant.
    A random vector $Z$ is subgaussian if, for any vector $c$ of norm $1$, $\langle Z, c \rangle$ is subgaussian.
\end{definition}

The norm of a sequence of $d$ subgaussian random variables concentrates around $\sqrt{d}$, as stated by the following lemma. 

\begin{lemma}
\label{lemma:simple_subgaussian_norm}
    Let $X_1,\dots,X_n$ be a sequence of i.i.d.~$\sigma^2$-subgaussian random variables. Let $X=(X_1,\dots,X_d) \in \mathbb{R}^d$. There exists a universal constant $c$ such that for all $t >0$
    \[
    \mathbb{P}(\norm{X}_2 \geq t+ \sigma \sqrt{d}) \leq \exp(-ct^2/\sigma^2).
    \]
\end{lemma}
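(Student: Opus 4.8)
\textbf{Proof plan for Lemma~\ref{lemma:simple_subgaussian_norm}.}

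The plan is to bound the Euclidean norm $\norm{X}_2$ via its expectation plus a deviation term, and to obtain each piece from standard subgaussian arguments. First I would establish a bound on $\mathbb{E}\norm{X}_2$. Since the $X_i$ are $\sigma^2$-subgaussian, each satisfies $\mathbb{E}(X_i^2) \leq c'\sigma^2$ for a universal constant $c'$ (this follows from integrating the tail bound in Definition~\ref{def:subgaussian}). Hence by Jensen's inequality $\mathbb{E}\norm{X}_2 \leq \sqrt{\mathbb{E}\norm{X}_2^2} = \sqrt{\sum_{i=1}^d \mathbb{E}(X_i^2)} \leq \sqrt{c'}\,\sigma\sqrt{d}$. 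Up to absorbing $\sqrt{c'}$ into the constant (or, if one wants the clean $\sigma\sqrt d$, noting that the variance proxy can be taken so that $\mathbb{E}(X_i^2)\le\sigma^2$), this gives the centering term $\sigma\sqrt{d}$.

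Next I would show that $\norm{X}_2$ concentrates around its mean with a subgaussian tail of scale $\sigma$. The key observation is that the map $x \mapsto \norm{x}_2$ is $1$-Lipschitz on $\mathbb{R}^d$. The vector $X$ has independent $\sigma^2$-subgaussian coordinates, and a standard concentration result for Lipschitz functions of independent subgaussian random variables (obtained, e.g., via a bounded-differences / Gaussian-type argument, or via the tensorization of the subgaussian property and a chaining estimate) yields
\[
\mathbb{P}\big(\norm{X}_2 \geq \mathbb{E}\norm{X}_2 + t\big) \leq \exp(-ct^2/\sigma^2)
\]
for a universal constant $c$. Combining this with the bound $\mathbb{E}\norm{X}_2 \leq \sigma\sqrt{d}$ and monotonicity of the tail probability — if $\norm{X}_2 \geq t + \sigma\sqrt{d} \geq t + \mathbb{E}\norm{X}_2$, then the deviation event above occurs — gives exactly the claimed inequality $\mathbb{P}(\norm{X}_2 \geq t + \sigma\sqrt{d}) \leq \exp(-ct^2/\sigma^2)$.

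The main obstacle is the concentration step: unlike the Gaussian case where Lipschitz concentration is immediate from the Gaussian isoperimetric inequality, for general subgaussian coordinates one needs a slightly more careful argument. The cleanest route is probably to invoke a known result (e.g.\ from \citet{vershynin2010introduction}) stating that a random vector with independent subgaussian coordinates is itself a subgaussian vector with comparable parameter, and then combine the Lipschitz property with a union-bound / $\varepsilon$-net discretization of the sphere to control $\norm{X}_2 = \sup_{\norm{u}_2=1}\langle X,u\rangle$. Alternatively, one can use a direct Chernoff bound on $\sum_i X_i^2$ using that squares of subgaussians are subexponential, which gives a tail of the form $\exp(-c\min(t^2/\sigma^2, t/\sigma))$ for $\norm{X}_2^2$ deviations; translating back to $\norm{X}_2$ and restricting to the relevant regime of $t$ recovers the stated subgaussian tail. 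Either way, all constants are universal, matching the statement.
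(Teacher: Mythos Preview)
The paper does not give its own proof: it simply cites \citet[Theorem~3.1.1]{vershynin-high-dimensional-2018}. Your ``alternative'' route via the subexponential tails of $X_i^2$ is precisely Vershynin's argument there, so in that sense you have landed on the same approach. The key point you gloss over is that the Bernstein-type bound on $\|X\|_2^2 - d\sigma^2$ actually yields the clean subgaussian tail $\exp(-ct^2/\sigma^2)$ for \emph{all} $t>0$, not just a restricted regime: writing $\|X\|_2^2 - d\sigma^2 \gtrsim \max(t^2, t\sigma\sqrt{d})$ on the event $\{\|X\|_2 \ge \sigma\sqrt{d}+t\}$ and plugging into $\exp(-c\min(u^2/(d\sigma^4),u/\sigma^2))$ gives $\exp(-ct^2/\sigma^2)$ in both the small- and large-$t$ cases.

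Your primary approach, however, has a genuine gap. Concentration of $1$-Lipschitz functions around their mean with a subgaussian tail is \emph{not} available for general product measures with subgaussian marginals; this is a Gaussian (or bounded, or log-concave) phenomenon, and there is no ``standard concentration result'' of the form you invoke. The $\varepsilon$-net argument you sketch as a workaround does not give concentration of $\|X\|_2$ around its mean either --- it gives an upper tail bound on $\|X\|_2$ directly, with centering $C\sigma\sqrt{d}$ for some $C>1$ determined by the covering number, which is a weaker statement than the lemma. So if you want the stated inequality with centering $\sigma\sqrt{d}$, the subexponential route is really the only viable option, and you should present it as the main argument rather than a fallback.
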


\begin{proof}
    We refer to \citet[Theorem 3.1.1]{vershynin-high-dimensional-2018} for a proof.
\end{proof}

We can use this lemma to bound the maximum of $n$ sequences of $d$ subgaussian random variables with high probability.

\begin{lemma}
\label{lemma:concentration_max_subgaussian}
    Let $X_1,\dots,X_n$ be a sequence of i.i.d.~$\sigma^2$-subgaussian random variables, such that for all $i=1,\dots,n$, $X_i = (X_{i1},\dots,X_{id})$. Then there exists a universal constant $c$ such that for all $\delta \in (0,1)$
    \[
    \mathbb{P} \Big(\max\limits_{i=1,\dots,n}\norm{X_i} \leq \sigma \sqrt{d} + \sigma \sqrt{\frac{1}{c}\log(n/\delta)}\Big) \geq 1-\delta.
    \] 
\end{lemma}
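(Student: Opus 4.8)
The plan is to derive Lemma~\ref{lemma:concentration_max_subgaussian} as a direct consequence of Lemma~\ref{lemma:simple_subgaussian_norm} via a union bound over the $n$ vectors $X_1, \dots, X_n$. First I would fix $\delta \in (0,1)$ and, for each $i = 1, \dots, n$, apply Lemma~\ref{lemma:simple_subgaussian_norm} to the vector $X_i = (X_{i1}, \dots, X_{id})$, whose coordinates are i.i.d.\ $\sigma^2$-subgaussian. This yields, for every $t > 0$,
\[
\mathbb{P}\big(\norm{X_i} \geq t + \sigma\sqrt{d}\big) \leq \exp(-ct^2/\sigma^2),
\]
where $c$ is the universal constant from that lemma.

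Next I would take a union bound over $i = 1, \dots, n$:
\[
\mathbb{P}\Big(\max_{i=1,\dots,n} \norm{X_i} \geq t + \sigma\sqrt{d}\Big) \leq \sum_{i=1}^n \mathbb{P}\big(\norm{X_i} \geq t + \sigma\sqrt{d}\big) \leq n\exp(-ct^2/\sigma^2).
\]
Then I would choose $t$ so that the right-hand side equals $\delta$, i.e.\ $n\exp(-ct^2/\sigma^2) = \delta$, which gives $t^2 = \frac{\sigma^2}{c}\log(n/\delta)$, hence $t = \sigma\sqrt{\frac{1}{c}\log(n/\delta)}$. Substituting this value of $t$ back in, the complement event
\[
\Big\{\max_{i=1,\dots,n}\norm{X_i} \leq \sigma\sqrt{d} + \sigma\sqrt{\tfrac{1}{c}\log(n/\delta)}\Big\}
\]
has probability at least $1 - \delta$, which is exactly the claimed bound.

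There is no real obstacle here; the statement is essentially a packaging of the single-vector concentration bound with a union bound, and the only mild subtlety is bookkeeping the universal constant $c$ consistently (the same $c$ appears in the tail exponent and in the final expression, so one must use the constant delivered by Lemma~\ref{lemma:simple_subgaussian_norm} throughout, possibly absorbing harmless numerical factors into it). One should also note that Lemma~\ref{lemma:simple_subgaussian_norm} only requires the coordinates within each $X_i$ to be i.i.d.\ $\sigma^2$-subgaussian; independence \emph{across} the $X_i$ is not needed for the union bound, so the hypothesis as stated is more than sufficient.
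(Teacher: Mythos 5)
Your proof is correct and is essentially identical to the paper's: both apply the single-vector concentration bound of Lemma~\ref{lemma:simple_subgaussian_norm} to each $X_i$, take a union bound over $i=1,\dots,n$, and set $t = \sigma\sqrt{\tfrac{1}{c}\log(n/\delta)}$ so that the total failure probability is $\delta$. Your remarks on tracking the universal constant and on not needing independence across the $X_i$ are accurate but add nothing beyond the paper's argument.
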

\begin{proof}
    Using Lemma \ref{lemma:simple_subgaussian_norm} and a union bound, we have 
    \begin{align*}
        \mathbb{P} \Big(\max\limits_{i=1,\dots,n}\norm{X_i} \geq \sigma \sqrt{d} + \sigma \sqrt{\frac{1}{C}\log(n/\delta)}\Big) & = \mathbb{P} \Big(\bigcup\limits_{i=1}^n \Big\{ \norm{X_i}_2 \geq \sigma \sqrt{d} + \sigma \sqrt{\frac{1}{C}\log(n/\delta)} \Big\} \Big) \\
        & \quad \leq \sum\limits_{i=1}^n \mathbb{P} \Big( \norm{X_i}_2 \geq \sigma \sqrt{d} + \sigma \sqrt{\frac{1}{c}\log(n/\delta)} \Big)\\
        & \quad \leq \delta,
    \end{align*}
    which yields the desired inequality.
\end{proof}

Notice that the universal constant is identical between both lemmas. As a consequence of this last lemma, under Assumption \ref{assump:feature_noise}, the set 
\begin{align}
    A_\xi(\delta) =  \Big\{ \max\limits_{i=1,\dots,n,t\in D^i}\norm{\xi^i_t} \leq v_\xi \sqrt{d} + v_\xi  \sqrt{\frac{1}{c}\log(\# \mathcal{D} / \delta)} \Big\}  
\end{align}
where $\# \mathcal{D} = \sum\limits_{i=1}^n \#D^i$ is of probability at least $1-\delta$.

We also need the following lemma. 

\begin{lemma}
\label{lemma:sum_subgaussian}
     Let $X_1,\dots,X_n$ be a sequence of i.i.d.~$\sigma^2$-subgaussian random variables. Let $Z_1,\dots,Z_n$ be random variables such that for all $i=1,\dots,n$, $|Z_i| \leq \alpha$ almost surely. Then $\sum\limits_{i=1}^n X_i Z_i$ is $n \sigma^2 \alpha^2$-subgaussian. 
\end{lemma}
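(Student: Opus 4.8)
The plan is to prove that $\sum_{i=1}^n X_i Z_i$ is $n\sigma^2\alpha^2$-subgaussian by working with the moment generating function and conditioning on the $Z_i$'s. Note that Lemma~\ref{lemma:sum_subgaussian} as stated does not assume independence between the $X_i$ and the $Z_i$, so I would first remark that some joint structure must be interpreted carefully; the cleanest reading consistent with the intended use (and with the MGF characterization in Definition~\ref{def:subgaussian}) is that we bound $\mathbb{E}(e^{t\sum_i X_i Z_i})$ by conditioning on $\mathcal{Z} = (Z_1,\dots,Z_n)$ and using that, conditionally, the $X_i$ remain $\sigma^2$-subgaussian. I will present the argument under the natural independence-type hypothesis that makes this conditioning valid; this is how the lemma is applied later (the $Z_i$ will be deterministic signature-type quantities, hence trivially independent of the noise $X_i$).

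First I would fix $t \in \mathbb{R}$ and write, using the tower property,
\begin{align*}
\mathbb{E}\Big(e^{t\sum_{i=1}^n X_i Z_i}\Big) = \mathbb{E}\Big(\mathbb{E}\big(e^{t\sum_{i=1}^n X_i Z_i} \,\big|\, \mathcal{Z}\big)\Big).
\end{align*}
Conditionally on $\mathcal{Z}$, the $X_i$ are independent $\sigma^2$-subgaussian random variables and the $Z_i$ are constants, so the conditional MGF factorizes and each factor is controlled by Definition~\ref{def:subgaussian}: $\mathbb{E}(e^{t Z_i X_i}\mid \mathcal{Z}) \leq \exp(c\, t^2 Z_i^2 \sigma^2)$ (with the sign convention of the definition; I would match it exactly as written there). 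Hence
\begin{align*}
\mathbb{E}\big(e^{t\sum_{i=1}^n X_i Z_i}\,\big|\,\mathcal{Z}\big) \leq \exp\Big(c\, t^2 \sigma^2 \sum_{i=1}^n Z_i^2\Big).
\end{align*}

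Then I would use the almost-sure bound $|Z_i| \leq \alpha$, which gives $\sum_{i=1}^n Z_i^2 \leq n\alpha^2$, so the conditional bound is at most $\exp(c\, t^2 \sigma^2 n \alpha^2)$, a deterministic quantity. Taking expectation over $\mathcal{Z}$ preserves this bound, yielding $\mathbb{E}(e^{t\sum_i X_i Z_i}) \leq \exp(c\, t^2 (n\sigma^2\alpha^2))$, which is exactly the MGF characterization of an $n\sigma^2\alpha^2$-subgaussian random variable. The main obstacle — really the only subtle point — is the dependence structure between the $X_i$ and the $Z_i$: the conditioning step requires that, given $\mathcal{Z}$, the $X_i$ still satisfy the subgaussian MGF bound with the same parameter $\sigma^2$. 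I would state this explicitly as the operating hypothesis (it holds trivially when the $Z_i$ are deterministic or independent of the $X_i$, which is the case in all our applications), and otherwise the rest is a routine conditioning-plus-factorization computation.
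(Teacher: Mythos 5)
Your proof is correct and follows essentially the same route as the paper's: condition on $(Z_1,\dots,Z_n)$, factorize the conditional MGF, apply the subgaussian MGF bound to each factor, and use $|Z_i|\leq\alpha$ to get the $n\sigma^2\alpha^2$ parameter. If anything, your version is slightly more careful than the paper's (bounding each conditional factor by $\exp(c\,t^2 Z_i^2\sigma^2)$ rather than replacing $Z_i$ by $\alpha$ inside the exponent, and making explicit the independence hypothesis needed for the conditioning step, which the paper leaves implicit).
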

\begin{proof}
    We use the characterization of subgaussian random variables by their characteristic function. For all $t > 0$,
    \begin{align*}
        \mathbb{E}\bigg[e^{t \sum_{i=1}^{n} X_i Z_i}\bigg] = \mathbb{E}\bigg[ \prod_{i=1}^n \mathbb{E} \big[ e^{t X_i Z_i} \,|\, Z_1,\dots,Z_n\big] \bigg] \leq \mathbb{E} \bigg[\prod_{i=1}^n \mathbb{E} \big[e^{tX_i \alpha}\big]\bigg] \leq \mathbb{E} \big[e^{c t^2 n\alpha^2\sigma^2}\big].
    \end{align*}
    This finally yields that 
    \begin{align*}
        \mathbb{E}\bigg[e^{t \sum X_i Z_i}\bigg] \leq \mathbb{E}\big[ e^{ct^2n\alpha^2 \sigma^2}\big],
    \end{align*}
    which concludes the proof.
\end{proof}

\section{Proofs}

\label{appendix:proofs}

\subsection{Preliminary notations}
\label{appendix:proof_notations}
Let $(E, \norm{\cdot}_E)$ be a normed vector space and $x:[0,1]\to E$. The supremum norm of $x$ is defined for all $t\in [0,1]$ as 
\[
\norm{x}_{\infty,[0,t]} = \sup_{s \in [0,t]} \norm{x_s}_E.
\]

When referring to the total variation $\norm{x}_{\textnormal{1-var},[0,1]}$ of a path $x:[0,1]\to \mathbb{R}^d$ over the whole domain, depending on the mathematical context,  we will sometimes drop the time subscript and simply write $\norm{x}_{\textnormal{1-var}}$.

When referring to a matrix $A = (A_{ij}) \in \mathbb{R}^{n \times p}$, we define classicaly the infinite and Frobenius norms by 
\[
\norm{A}_{\infty} = \max_{\substack{i=1,\dots,n \\ j=1,\dots,p}} \abs{A_{ij}} \quad \text{and} \quad \norm{A}_F = \sqrt{\sum\limits_{\substack{i=1,\dots,n \\ j=1,\dots,p}} \abs{A_{ij}}^2}.
\]

We now introduce some notations to take advantage of the structure of $\theta^*_N$. The true parameter of the Taylor expansion of the model CDE, defined in Equation \eqref{eq:linear_problem}, can be written in block notation as 

\begin{align} 
\label{eq:theta_matrix}
\theta^*_N =  \left[\begin{array}{ccc}
    \begin{matrix}
        \theta^*_{[0],1} & \cdots & \theta^*_{[0],p} \\
        \hline
        &&\\
        \theta^*_{[1],1} & \cdots & \theta^*_{[1],p} \\
        &&\\
        \hline \\
        && \\
        & & \\
        \theta^*_{[2],1} & \cdots & \theta^*_{[2],p} \\
        && \\
        &&\\
        \hline 
        && \\
        & \vdots &\\
        && \\
        \hline 
        &&\\
        && \\
        &&\\
        \theta^*_{[N],1} & \cdots & \theta^*_{[N],p} \\
        && \\
        && \\
        &&\\
    \end{matrix} 
\end{array}\right] \in \mathbb{R}^{s_d(N) \times p}, \quad \text{where} \quad \theta^\ast_{[k], \ell} \in \R^{d^k \times 1}, k=0, \dots, N, \, \ell=1, \dots, p.
\end{align}

Every column of $\theta_N^*$ corresponds to a dimension of the target, while blocks of lines correspond to signatures layers. Thus for every $k=0,\dots, N$ and $\ell = 1, \dots, p$, $\theta^*_{[k], \ell}$ is a column vector of size $d^k$. 

Similarly, for a general $\theta \in \mathbb{R}^{s_d(N) \times p}$ and the SigLasso estimator $\widehat{\theta}_{N,M}$, we will refere to the blocks forming these matrices as respectively $\theta_{[k],\ell}$ and $\widehat{\theta}_{[k],\ell}$, for $k=0,\dots, N$ and $\ell=1,\dots p$.

Likewise, the signature feature matrix $\mathbf{S}_N^\mathcal{D}  \in \mathbb{R}^{M \times s_d(N)}$ can be written in block notation as 
\[ \mathbf{S}_N^\mathcal{D} = \begin{bmatrix}
    \, 1 & \vline & \mathbf{S}^\mathcal{D}_{\cdot,[1]} & \vline & \mathbf{S}^\mathcal{D}_{\cdot,[2]} & \vline & \cdots & \vline & \mathbf{S}^\mathcal{D}_{\cdot,[N]}
  \end{bmatrix} = 
\begin{bmatrix}
    \, 1 & \vline & \mathbf{S}^\mathcal{D}_{1,[1]} & \vline && \mathbf{S}^\mathcal{D}_{1,[2]} && \vline &  & \vline &&& \mathbf{S}^\mathcal{D}_{1,[N]} &&&  \\
    \, \vdots & \vline & \vdots & \vline && \vdots && \vline & \cdots & \vline &&& \vdots &&& \\
    \, 1 & \vline & \mathbf{S}^\mathcal{D}_{n,[1]} & \vline && \mathbf{S}^\mathcal{D}_{n,[2]} && \vline &  & \vline &&& \mathbf{S}^\mathcal{D}_{n,[N]} &&  \end{bmatrix},\]
where for any $k=1, \dots, N$, $\mathbf{S}^\mathcal{D}_{\cdot,[k]} \in \R^{M \times d^k}$ and, for every individual $i=1,\dots,n$,  $\mathbf{S}^\mathcal{D}_{i,[k]} \in \R^{m_i \times d^k}$ (recall that $m_i$ is the number of measurements of the target path $y^i$). More precisely, given her target sampling grid $\Bar{D}^i = (\Bar{t}_1^i,\dots, \Bar{t}^i_{m_i})$, the individual-specific signature block of depth $k$ is equal to
\[
\mathbf{S}^\mathcal{D}_{i,[k]} = \begin{bmatrix}
    1 & S^{(1)}(X^i_{[0,\Bar{t}^i_1]}) & \cdots & S^{(d,\dots,d)}(X^i_{[0,\Bar{t}^i_1]})\\
    \vdots & \vdots & & \vdots 
    \\
    1 & S^{(1)}(X^i_{[0,\Bar{t}^i_{m_i}]}) & \cdots & S^{(d,\dots,d)}(X^i_{[0,\Bar{t}^i_{m_i}]})\\
\end{bmatrix},
\]
where the path $t \to X^{i}_t$ is a linear interpolation of the observed time series $\mathbf{X}^{i}$. The same notations will be used for the true signature feature matrix $\mathbf{S}_N$. We use the bracket notation $[\cdot]$ both in $\theta^*_N$ and $\mathbf{S}_N^\mathcal{D}$ to emphasise that both the columns of the feature matrix and the lines of learned parameter correspond to words of the alphabet $\{1,\dots,d\}$.

The unobserved matrix of true values of the target writes as

\begin{align}
\label{eq:true_target_matrix}
\mathbf{y} = \begin{bmatrix}
    \mathbf{y}^1 \\
    \vdots \\
    \mathbf{y}^n
\end{bmatrix} =
\begin{bmatrix}
    \mathbf{y}^1_{1} & \cdots & \mathbf{y}^1_{p}\\
    \vdots & & \vdots \\
     \mathbf{y}^n_{1} & \cdots & \mathbf{y}^n_{p}
\end{bmatrix} = 
\begin{bmatrix}
    y^1_{1,\Bar{t}^1_1} & \cdots & y^1_{p,\Bar{t}^1_1}\\
    \vdots & & \vdots \\
    y^1_{1,\Bar{t}^1_{m_1}} & \cdots & y^1_{p,\Bar{t}^1_{m_1}}\\
    \vdots & & \vdots\\
    y^n_{1,\Bar{t}^n_{1}} & \cdots & y^n_{p,\Bar{t}^n_{1}}\\
    \vdots & & \vdots \\
    y^n_{1,\Bar{t}^n_{m_n}} & \cdots &  y^n_{p,\Bar{t}^n_{m_n}} 
\end{bmatrix} \in \mathbb{R}^{M \times p}
\end{align}

and the measurement matrix $\mathbf{Y} \in \mathbb{R}^{M \times p} $ can be written in a similar fashion as 
\begin{align}
\label{eq:target_matrix}
\mathbf{Y} = \begin{bmatrix}
    \mathbf{Y}^1 \\
    \vdots \\
    \mathbf{Y}^n
\end{bmatrix} =
\begin{bmatrix}
    \mathbf{Y}^1_{1} & \cdots & \mathbf{Y}^1_{p}\\
    \vdots & & \vdots \\
     \mathbf{Y}^n_{1} & \cdots & \mathbf{Y}^n_{p}
\end{bmatrix} = 
\begin{bmatrix}
    Y^1_{1,\Bar{t}^1_1} & \cdots & Y^1_{p,\Bar{t}^1_1}\\
    \vdots & & \vdots \\
    Y^1_{1,\Bar{t}^1_{m_1}} & \cdots & Y^1_{p,\Bar{t}^1_{m_1}}\\
    \vdots & & \vdots\\
    Y^n_{1,\Bar{t}^n_{1}} & \cdots & Y^n_{p,\Bar{t}^n_{1}}\\
    \vdots & & \vdots \\
    Y^n_{1,\Bar{t}^n_{m_n}} & \cdots & Y^n_{p,\Bar{t}^n_{m_n}} 
\end{bmatrix} = 
\begin{bmatrix}
    y^1_{1,\Bar{t}^1_1} +  \varepsilon^1_{1,\Bar{t}^1_1} & \cdots & y^1_{p,\Bar{t}^1_1} + \varepsilon^1_{p,\Bar{t}^1_1}\\
    \vdots & & \vdots \\
    y^1_{1,\Bar{t}^1_{m_1}} + \varepsilon^1_{1,\Bar{t}^1_{m_1}}& \cdots & y^1_{p,\Bar{t}^1_{m_1}} + \varepsilon^1_{p,\Bar{t}^1_{m_1}}\\
    \vdots & & \vdots\\
    y^n_{1,\Bar{t}^n_{1}} +  \varepsilon^n_{1,\Bar{t}^n_{1}}& \cdots & y^n_{p,\Bar{t}^n_{1}} + \varepsilon^n_{p,\Bar{t}^n_{1}}\\
    \vdots & & \vdots \\
    y^n_{1,\Bar{t}^n_{m_n}} +  \varepsilon^n_{1,\Bar{t}^n_{m_n}}& \cdots &  y^n_{p,\Bar{t}^n_{m_n}}  + \varepsilon^n_{p,\Bar{t}^n_{m_n}} 
\end{bmatrix}
\end{align}

\subsection{Proof of Lemma~\ref{lemma:estimation_error}}
\label{proof:bias_variance}






Using the definition of $\Lambda_k(\mathbf{F})$ (see Equation \eqref{eq:def_Lambda_k}), we get the following proposition which allows to obtain an explicit dependence of the oracle bound on the regularity of $\mathbf{F}$. 
\begin{proposition}\label{prop:norm_theta_star}
Let $\theta^\ast_N$ be defined as in Equation ~\eqref{eq:linear_problem}. Then
\label{lemma:bound_norm_theta}
\[
    \norm{\theta^*_N}_\textnormal{F}^2 \leq \sum\limits_{k=0}^N d^k \Lambda_k(\mathbf{F})^2,
\]
and, for all $k =0,\dots,N$ and $\ell=1,\dots,p$,
\[
\big\|\theta^*_{[k],\ell}\big\|_1 \leq d^k\Lambda_k(\mathbf{F}).
\]
\end{proposition}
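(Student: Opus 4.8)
The plan is to exploit the block structure of $\theta^*_N$ given in Equation~\eqref{eq:theta_matrix}: the block $\theta^*_{[k],\ell}$ is precisely the vector collecting the $\ell$-th coordinates of the differential products $\Phi^I_{\mathbf{F}}(y_0)$ as $I$ ranges over the $d^k$ words of length $k$ (with the convention that $\theta^*_{[0],\ell}$ is the $\ell$-th coordinate of $y_0$). So first I would establish the $\ell_1$ bound, which is the more elementary of the two claims. For fixed $k$ and $\ell$, write
\[
\big\|\theta^*_{[k],\ell}\big\|_1 = \sum_{I \in \{1,\dots,d\}^k} \big|\big(\Phi^I_{\mathbf{F}}(y_0)\big)_\ell\big| \leq \sum_{I \in \{1,\dots,d\}^k} \big\|\Phi^I_{\mathbf{F}}(y_0)\big\| \leq d^k \sup_{I \in \{1,\dots,d\}^k}\big\|\Phi^I_{\mathbf{F}}(y_0)\big\| = d^k \Lambda_k(\mathbf{F}),
\]
using $|a_\ell| \le \|a\|$ coordinatewise, the crude bound that each of the $d^k$ summands is at most the supremum, and the definition of $\Lambda_k(\mathbf{F})$ from Equation~\eqref{eq:def_Lambda_k}; the $k=0$ case holds by the convention $\Lambda_0(\mathbf{F}) = \|y_0\|$.

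For the Frobenius norm bound, I would decompose $\norm{\theta^*_N}_\textnormal{F}^2$ over the blocks. Since the Frobenius norm squared is just the sum of squares of all entries, grouping rows by signature depth $k$ and columns by target coordinate $\ell$ gives
\[
\norm{\theta^*_N}_\textnormal{F}^2 = \sum_{k=0}^N \sum_{\ell=1}^p \big\|\theta^*_{[k],\ell}\big\|_2^2 = \sum_{k=0}^N \sum_{\ell=1}^p \sum_{I \in \{1,\dots,d\}^k} \big(\Phi^I_{\mathbf{F}}(y_0)\big)_\ell^2 = \sum_{k=0}^N \sum_{I \in \{1,\dots,d\}^k} \big\|\Phi^I_{\mathbf{F}}(y_0)\big\|^2,
\]
where in the last step I swap the order of summation over $\ell$ and $I$ and recognise $\sum_\ell (\Phi^I_{\mathbf{F}}(y_0))_\ell^2 = \|\Phi^I_{\mathbf{F}}(y_0)\|^2$. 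Then bounding each term by $\Lambda_k(\mathbf{F})^2$ and noting there are $d^k$ words of length $k$ yields $\norm{\theta^*_N}_\textnormal{F}^2 \leq \sum_{k=0}^N d^k \Lambda_k(\mathbf{F})^2$, as claimed (the $p=1$ case of the main text is recovered by dropping the $\ell$ sum).

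Honestly, there is no serious obstacle here — the proposition is a bookkeeping statement that unpacks the definition of $\theta^*_N$ in terms of differential products and applies the trivial bound $\#\{$words of length $k\} = d^k$ together with the definition of $\Lambda_k$. The only point requiring a modicum of care is keeping the indexing consistent between the matrix layout of Equation~\eqref{eq:theta_matrix}, the definition of $\Phi^I_{\mathbf{F}}(y_0) \in \mathbb{R}^p$ from Appendix~\ref{appendix:diff_prod}, and the $\ell_2$/$\ell_1$ norms in play; one should also remember to treat the $k=0$ offset term separately and invoke the stated convention $\Lambda_0(\mathbf{F}) = \|y_0\|$. I would write the proof as the two displays above, preceded by one sentence recalling that the columns of $\theta^*_N$ index target coordinates and the row-blocks index signature layers.
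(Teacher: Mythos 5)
Your proof is correct and follows essentially the same route as the paper's: expand the Frobenius norm over blocks and words, bound each differential product by $\Lambda_k(\mathbf{F})$, and count the $d^k$ words of length $k$. If anything, your $\ell_1$ argument is slightly cleaner, since you bound $\big\|\theta^*_{[k],\ell}\big\|_1$ directly via $|(\Phi^I_{\mathbf{F}}(y_0))_\ell| \leq \big\|\Phi^I_{\mathbf{F}}(y_0)\big\|$ rather than passing through the $\ell_1$ norm of the whole block $\theta^*_{[k],\cdot}$ as the paper does.
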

\begin{proof}
By definition, 
\[
\norm{\theta^*_N}_\textnormal{F}^2 = \sum \limits_{k=0}^N \quad \sum\limits_{1 \leq i_1,\dots,i_k \leq d} \norm{F^{i_1}\star \dots \star F^{i_k} (y_0)}_2^2.
\]

Since for all $(i_1,\dots,i_k) \in \left\{1,\dots,d \right\}^k$,
\[
\norm{F^{i_1}\star \dots \star F^{i_k} (y_0)}_2^2 \leq \Lambda_k(\mathbf{F})^2,
\]
we get
\begin{align*}
    \sum \limits_{k=0}^N \quad \sum\limits_{1 \leq i_1,\dots,i_k \leq d} \norm{F^{i_1}\star \dots \star F^{i_k} (y_0)}_2^2 & \leq \sum \limits_{k=0}^N \quad \sum\limits_{1 \leq i_1,\dots,i_k \leq d} \Lambda_k(\mathbf{F})^2\\
    & \leq \sum \limits_{k=0}^N  d^k \Lambda_k(\mathbf{F})^2.
\end{align*}

We now turn to the second inequality. For $k=0$, the inequality holds by definition. For $k=1,\dots,N$ and $\ell=1,\dots,p$, by definition of the $\ell_1$ norm, 
\begin{align*}
    \norm{\theta^*_{[k],\cdot}}_1 =  \sum\limits_{1 \leq i_1,\dots,i_k \leq d} \norm{\Phi^{I}_\mathbf{F}(y_0)}_1,
\end{align*}
This yields
\[
\norm{\theta^*_{[k],\cdot}}_1 \leq d^k \Lambda_k(\mathbf{F})
\] and thus
\[
\norm{\theta^*_{[k],\ell}}_1 \leq d^k \Lambda_k(\mathbf{F})
\]
for $\ell=1,\dots,p$.
\end{proof}

The following lemma is needed to leverage classical proof techniques to bound the prediction error of the Lasso estimator. 

\begin{lemma}
\label{lemma:word_bound}
Let $x \in C^{\textnormal{1-var}}_L([0,1],\mathbb{R}^d)$. Then conditionally on $A_\xi(\delta)$, for a given signature layer $k \geq 1$, the maximum among all signature coefficients and individuals is bounded from above, that is
\[
\norm{\mathbf{S}^\mathcal{D}_{\cdot, [k]}}_{\infty} \leq \frac{1}{k !}.
\]
\end{lemma}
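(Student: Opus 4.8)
The plan is to bound each signature coefficient $S^I(X^i_{[0,\bar t^i_j]})$ uniformly over all words $I$ of length $k$, all individuals $i$, and all target sampling times $\bar t^i_j$, using the standard factorial decay of iterated integrals together with the normalization of the path. The key analytic fact is the classical estimate
\[
\big|S^I(z_{[0,t]})\big| \leq \frac{\|z\|_{\textnormal{1-var},[0,t]}^k}{k!}
\]
valid for any continuous bounded-variation path $z$ and any word $I$ of length $k$; this follows by induction on $k$ from the Riemann--Stieltjes bound of the first proposition in Appendix~\ref{appendix:RS_integral}, since the $k$-fold iterated integral is an integral of the $(k-1)$-fold one against $dz^{(i_k)}$, whose sup-norm is controlled by $\|z\|_{\textnormal{1-var}}^{k-1}/(k-1)!$. (One has to carry the restriction to the simplex $0<u_1<\dots<u_k<t$, which is exactly what produces the $1/k!$ rather than just a power of the total variation.)

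The second ingredient is that the path actually fed into the signature is the \emph{normalized} linear interpolation of $\mathbf X^i$, rescaled by its total variation; hence by construction its $1$-variation norm over $[0,1]$, and a fortiori over any subinterval $[0,\bar t^i_j]$, is at most $1$. Plugging $\|z\|_{\textnormal{1-var}} \leq 1$ into the estimate above gives $\big|S^I(X^i_{[0,\bar t^i_j]})\big| \leq 1/k!$ for every word $I$ of length $k$, every $i$, and every $j$. Since $\mathbf S^\mathcal{D}_{\cdot,[k]}$ is precisely the matrix whose entries are these coefficients (the order-$k$ block of the stacked signature matrix, see Appendix~\ref{appendix:proof_notations}), taking the maximum of absolute values over all its entries yields $\norm{\mathbf S^\mathcal{D}_{\cdot,[k]}}_\infty \leq 1/k!$, which is the claim.

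The role of conditioning on $A_\xi(\delta)$ is that on this event the noise $\xi^i_t$ affecting the observed points $X^i_t = x^i_t + \xi^i_t$ is bounded by $C_\delta$, so the observed (pre-normalization) path has finite, controlled total variation; combined with the rescaling step this guarantees the normalization by total variation is well defined and produces a path of $1$-variation exactly (or at most) one, so that the bound $\|z\|_{\textnormal{1-var}} \le 1$ genuinely holds. The main obstacle is thus not the factorial estimate itself — that is a routine induction — but making sure the normalization argument is airtight: one must check that the total variation of the observed path is nonzero (so division is legitimate) and that linear interpolation does not inflate the $1$-variation beyond the sum of increments, which is standard since the $1$-variation of a piecewise-linear interpolant equals $\sum_k \|X^i_{t_{k+1}}-X^i_{t_k}\|$. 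With those observations in place the bound $1/k!$ is immediate and uniform.
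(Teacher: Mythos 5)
Your proposal is correct and follows essentially the same route as the paper's proof: the factorial decay bound $|S^I(z)| \leq \|z\|_{\textnormal{1-var}}^k/k!$ for words of length $k$, combined with the fact that the interpolated time series is normalized by its total variation so that the path fed into the signature has $1$-variation at most one, yielding the $1/k!$ bound entrywise and hence in $\|\cdot\|_\infty$. Your additional remarks on the role of $A_\xi(\delta)$ and the non-vanishing of the total variation (guaranteed by time augmentation and Assumption~\ref{assump:sampling_grid}) are consistent with how the paper handles these points elsewhere.
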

\begin{proof}
It is well known \citep[see, e.g.,][Proposition 3]{fermanian2022functional} that if $\mathbb{X}^k$ is the signature of a path $x \in C_L^{\textnormal{1-var}}([0,1], \mathbb{R}^d)$, then
\begin{align*} \norm{\mathbb{X}^k}_{(\mathbb{R}^d)^{\otimes k}} \leq \frac{\norm{x}_{\textnormal{1-var}}^k}{k!}.
\end{align*}
As a consequence, for every word $I$ of size $k$, one gets 
\begin{align*}
    \big|S^I(x))\big| \leq \frac{\norm{x}_{\textnormal{1-var}}^k}{k !}. 
\end{align*}
The matrix $\mathbf{S}_N^\mathcal{D}$ is constructed by taking signatures of linear interpolations of the $\mathbf{X}^{i}$s normalized by their total variation. It therefore contains only signatures of paths of total variation bounded by 1. Taking the maximum on $I \in \{1,\dots,d \}^k$ and individuals $i=1,\dots,n$, we get 
\[
\norm{\mathbf{S}^\mathcal{D}_{\cdot, [k]}}_{\infty} \leq \frac{1}{k !}.
\]

\end{proof}

This final inequality being stated, we can now go back to the proof of Lemma~\ref{lemma:estimation_error}. We prove it in full generality for $p \geq 1$. In this proof, we make extensive use of the notations introduced in Subsection \ref{appendix:proof_notations} and refer the reader to it if a notation is unclear.

\begin{proof}
In all the proof, we place ourselves on the set $A_\xi(\delta)$ defined by Equation~\eqref{eqn:a_xi}, which ensures that the matrix $\mathbf{S}_N^{\mathcal{D}}$, seen as a random quantity, is well defined. Recall that we have two sources of randomness: the feature noises $\xi^{i}_t$ on the $\mathbf{X}^{i}$s and the target noises $\varepsilon^{i}_t$ on the $\mathbf{Y}^{i}$s. The feature noises appear only in $\mathbf{S}_N^{\mathcal{D}}$ and make it a random quantity. 
For $\mathbf{S}_N^{\mathcal{D}}$ to be well-defined, we then need the total variation of the linear interpolation of the feature time series $\mathbf{X}^{i}$ to be finite. This holds on the set $A_\xi(\delta)$ since all noises are then bounded.

Recall that we have defined $\widehat{\theta}_{N,M}$ as
\begin{equation*}
    \widehat{\theta}_{N,M} \in \argmin\limits_{\theta\in \mathbb{R}^{  s_d(N) \times p}} \frac{1}{2M}\norm{\mathbf{Y}-\mathbf{S}_N^{\mathcal{D}}\theta}_\textnormal{F}^2 + \Omega(\theta).
\end{equation*}
Note that 
\begin{equation*}
    \frac{1}{2M}\norm{\mathbf{Y}-\mathbf{S}_N^{\mathcal{D}}\theta}_\textnormal{F}^2 + \Omega(\theta) = \sum_{\ell=1}^p \frac{1}{2M}\norm{\mathbf{Y}_{\ell}-\mathbf{S}_N^{\mathcal{D}}\theta_{[\cdot],\ell}}_2^2 + \Omega(\theta_{[\cdot],\ell}),
\end{equation*}
where $\mathbf{Y}_{\ell} \in \mathbb R^M$ is the $\ell$-th column of the target measurement matrix defined in Equation \eqref{eq:target_matrix}. The quantity $\theta_{[\cdot],\ell}\in  \mathbb{R}^{s_d(N)}$ is the $\ell$-th column of the parameter matrix defined in Equation \eqref{eq:theta_matrix}.

By definition, for any $\theta \in \R^{s_d(N)}$, we have
\begin{align*}
\norm{\mathbf Y_{\ell}-\mathbf{S}_N^{\mathcal{D}} \widehat{\theta}_{[\cdot],\ell}}_2^2 \leq \norm{\mathbf Y_{\ell}-\mathbf{S}_N^{\mathcal{D}} \theta_{[\cdot],\ell} }_2^2 + \Omega(\theta_{[\cdot],\ell}) - \Omega(\widehat{\theta}_{[\cdot],\ell}).
\end{align*}
Moreover, letting $\bm{\varepsilon}_\ell = (\varepsilon^1_{\ell, \Bar{t}^1_1},\dots, \varepsilon^n_{\ell, \Bar{t}^n_{m_n}})^\top \in \mathbb R^M$ be a vector of i.i.d.~noises (see Equation \eqref{eq:target_matrix}), we have $\mathbf{Y}_\ell = \mathbf{y}_\ell + \bm{\varepsilon}_\ell$. The Pythagorean theorem then yields for any $\theta \in \R^{s_d(N)}$,
\begin{align*}
    \norm{\mathbf Y_{\ell}-\mathbf{S}_N^{\mathcal{D}} \theta }_2^2 = \norm{\mathbf y_{\ell}-\mathbf{S}_N^{\mathcal{D}} \theta }_2^2 + \norm{\bm{\varepsilon}_\ell}^2 + 2 \langle \bm{\varepsilon}_\ell, y_{\ell}-\mathbf{S}_N^{\mathcal{D}} \theta \rangle.
\end{align*}
Applying this equation to $ \theta_{[\cdot],\ell}$ and $\widehat{\theta}_{[\cdot],\ell}$, we obtain
\begin{equation}\label{eqn:thm2step1}
   \frac{1}{2M}\norm{\mathbf y_{\ell}-\mathbf{S}_N^{\mathcal{D}} \widehat{\theta}_{[\cdot],\ell}}_2^2 \leq \frac{1}{2M}\norm{\mathbf y_{\ell}-\mathbf{S}_N^{\mathcal{D}} \theta_{[\cdot],\ell}}_2^2 + \frac{1}{M} \langle \bm{\varepsilon}_{\ell} , \mathbf{S}_N^{\mathcal{D}} (\widehat{\theta}_{[\cdot],\ell} - \theta_{[\cdot],\ell}) \rangle +  \Omega(\theta_{[\cdot],\ell}) -  \Omega(\widehat{\theta}_{[\cdot],\ell}).
\end{equation}
We now work at each layer of the signature matrix $\mathbf{S}_N^{\mathcal{D}}$. Towards that end, we rewrite 
\begin{equation*}
    \mathbf{S}_N^{\mathcal{D}} \big(\widehat{\theta}_{[\cdot],\ell} - \theta_{[\cdot],\ell} \big) = \sum_{k=0}^N \mathbf{S}_{\cdot,[k]}^{\mathcal{D}} \big(\widehat{\theta}_{[k],\ell} - \theta_{[k], \ell }\big),
\end{equation*}
and bound
\begin{equation*}
    \big\langle \bm{\varepsilon}_{\ell} , \mathbf{S}_N^{\mathcal{D}} (\widehat{\theta}_{[\cdot],\ell} - \theta_{[\cdot],\ell}) \big\rangle = \sum_{k=0}^N \big\langle \bm{\varepsilon}_{\ell} , \mathbf{S}_{\cdot, [k]}^{\mathcal{D}} (\widehat{\theta}_{[k],\ell)} - \theta_{[k],\ell}) \big\rangle \leq \sum_{k=0}^N \| \bm{\varepsilon}_{\ell}^\top \mathbf{S}_{\cdot,[k]}^{\mathcal{D}} \|_{\infty} \|\widehat{\theta}_{[k],\ell} - \theta_{[k],\ell}\|_1
\end{equation*}
by $\ell_1 - \ell_\infty$ norms duality. We fix $k$ and study the term $\| \bm{\varepsilon}_{\ell}^\top \mathbf{S}_{\cdot,[k]}^{\mathcal{D}} \|_{\infty}$. 
Lemma~\ref{lemma:word_bound} ensures that 
each of the words of the signature layer of depth $k$ is bounded by $1/k!$. As a consequence, by Lemma \ref{lemma:sum_subgaussian}, under Assumption~\ref{assump:target_noise}, every element of the vector $\bm{\varepsilon}_{\ell}^\top \mathbf{S}_{\cdot,[k]}^{\mathcal{D}}$ is $v_\varepsilon M / k!^2$-subgaussian. It follows that, for any real number $\mu>0$,
\begin{equation*}
   \mathbb P \Big( \| \bm{\varepsilon}_{\ell}^\top \mathbf{S}_{\cdot,[k]}^{\mathcal{D}} \|_{\infty}> \mu \Big)\leq 2 d^k \exp \Big( - \frac{(k!)^2 \mu^2 }{v_\varepsilon M}\Big).
\end{equation*}
We furthermore place ourselves on $A_\varepsilon(\Bar{\delta})$ defined 
by
\begin{align*}
    A_\varepsilon(\Bar{\delta})  =  \bigcap_{\ell=1}^p \bigcap_{k=0}^N \Big\{ \| \bm{\varepsilon}_{\ell}^\top \mathbf{S}_{\cdot,[k]}^{\mathcal{D}} \|_{\infty} \leq \frac{1}{k!}\sqrt{v_\varepsilon M \log(2pNd^k/\Bar{\delta})}\Big\}.
\end{align*}
We have just seen that, under Assumption \ref{assump:target_noise} (and still conditionally on $A_\xi(\delta)$), one has $ \mathbb{P}(A_\varepsilon(\Bar{\delta})) \geq 1-\Bar{\delta}$. Putting  together all terms in Equation~\eqref{eqn:thm2step1} and plugging the definition of $\Omega$ given in Equation~\eqref{eq:penalty}, we obtain that, on the set $A_\varepsilon(\Bar{\delta}) \cap A_\xi(\delta)$, for all $\theta \in \mathbb{R}^{s_d(N)\times p}$,
\begin{align*}
   \frac{1}{2M}\norm{\mathbf y-\mathbf{S}_N^{\mathcal{D}} \widehat{\theta}_{N,M}}_\textnormal{F}^2 &\leq \frac{1}{2M}\norm{\mathbf y-\mathbf{S}_N^{\mathcal{D}} \theta}_\textnormal{F}^2 \\
   & \quad + \sum_{\ell=1}^p\sum_{k=0}^N \Big( \frac{1}{M} \| \bm{\varepsilon}_{\ell}^\top \mathbf{S}_{\cdot,[k]}^{\mathcal{D}} \|_{\infty} \|\widehat{\theta}_{[k],\ell} - \theta_{[k],\ell}\|_1 + \frac{C_k(\bar{\delta})}{k!\sqrt{M}}
   \big( \|\theta_{[k],\ell}\|_1  -  \|\widehat{\theta}_{[k],\ell}\|_1 \big) \Big)\\
   & \leq \frac{1}{2M}\norm{\mathbf y-\mathbf{S}_N^{\mathcal{D}} \theta}_\textnormal{F}^2 \\
   & \quad +  \sum_{\ell=1}^p\sum_{k=0}^N  \frac{1}{k!\sqrt{M}}\sqrt{v_\varepsilon \log(2pNd^k/\Bar{\delta})} \big(\|\widehat{\theta}_{[k],\ell} - \theta_{[k],\ell}\|_1 +   \|\theta_{[k],\ell}\|_1  -  \|\widehat{\theta}_{[k],\ell}\|_1 \big).
\end{align*}
Choosing $\theta=\theta_N^\ast$, by the triangular inequality,
\begin{align*}
    \|\widehat{\theta}_{[k],\ell} - \theta^\ast_{[k],\ell}\|_1 +   \|\theta^\ast_{[k],\ell}\|_1  -  \|\widehat{\theta}_{[k],\ell}\|_1  \leq 2 \|\theta^\ast_{[k],\ell}\|_1,
\end{align*}
which finally gives us
\begin{align*}
   \frac{1}{2M}\norm{\mathbf y-\mathbf{S}_N^{\mathcal{D}} \widehat{\theta}_{N,M}}_\textnormal{F}^2 
    &\leq \frac{1}{2M}\norm{\mathbf y-\mathbf{S}_N^{\mathcal{D}} \theta_N^\ast}_\textnormal{F}^2 + \frac{2}{\sqrt{M}}\sqrt{v_\varepsilon \log(2pNd^N/\Bar{\delta})}\sum_{\ell=1}^p\sum_{k=0}^N \frac{ \| \theta^*_{[k],\ell}\|_1 }{k!}  
\\  &\leq \frac{1}{2M}\norm{\mathbf y-\mathbf{S}_N^{\mathcal{D}} \theta_N^\ast}_\textnormal{F}^2 + \frac{2p}{\sqrt{M}}\sqrt{v_\varepsilon \log(2pNd^N/\Bar{\delta})}\sum_{k=0}^N \frac{d^k \Lambda_k(\mathbf F)}{k!}\\
& = \frac{1}{2M}\norm{\mathbf y-\mathbf{S}_N^{\mathcal{D}} \theta_N^\ast}_\textnormal{F}^2 + \frac{2pC_N(\Bar{\delta})}{\sqrt{M}}\sum_{k=0}^N \frac{d^k \Lambda_k(\mathbf F)}{k!},
\end{align*}
where the second inequality comes from Proposition~\ref{prop:norm_theta_star}. To conclude the proof, we just need to compute the probability of the set $A_\xi(\delta) \cap A_\varepsilon(\Bar{\delta})$. It is an immediate consequence of Lemma \ref{lemma:concentration_max_subgaussian} that $\mathbb P(A_\xi(\delta)) \geq 1 - \delta$, and we have seen that $\mathbb P (A_\varepsilon(\bar{\delta}) | A_\xi(\delta)) \geq 1 - \bar{\delta}$, which yields that
\[ \mathbb P(A_\xi(\delta) \cap A_\varepsilon(\bar{\delta})) \geq (1-\bar{\delta})(1-\delta).\]

\end{proof}

\subsection{Proof of Lemma \ref{lemma:bias_bound}}
\label{proof:bias_bound}

This proof relies on bouding the remainder of the Taylor expansion of the CDE. 

\begin{proof}
For every $i=1,\dots,n$ and a given point $t_i \in \Bar{D}^i$, one has, using the upper bound of the approximation error of a CDE by its Taylor expansion provided by \citet[Proposition 4]{fermanian2021framing}
\[
\norm{y^i_{t_i} - S_N(x^i_{[0,t_i]}) \theta_N^*} \leq \frac{d^{N+1}\Lambda_{N+1}(\mathbf{F})}{(N+1)!}.
\]
This immediately gives
\begin{align*}
 \frac{1}{M} \norm{\mathbf{y}-\mathbf{S}_N\theta_N^*}^2_{\textnormal{F}} = \frac{1}{M} \sum\limits_{i=1}^n  \sum\limits_{t_i \in \Bar{D}^i} \norm{y^i_{t_i} - S_N(x^i_{[0,t_i]}) \theta_N^*}^2 & \leq \frac{1}{M} \sum\limits_{i=1}^M \Big(\frac{d^{N+1}\Lambda_{N+1}(\mathbf{F})}{(N+1)!}\Big)^2 = \Big(\frac{d^{N+1}\Lambda_{N+1}(\mathbf{F})}{(N+1)!}\Big)^2, 
\end{align*}
which concludes the proof.
\end{proof}

\subsection{A layer-wise bound on the signature}

We now prove that signature layers are locally Lipschitz mappings. We start with the following proposition.

\begin{proposition}
\label{lemma:1varklayer}
Let $x \in C_L^{\textnormal{1-var}}([0,1],\mathbb{R}^d)$. Then for all $t \in [0,1]$, the path $t \mapsto \mathbb{X}^k_{[0,t]}$ has $1$-variation bounded by
\begin{align*}
    \norm{\mathbb{X}^k}_{\textnormal{1-var}, [0,t]} \leq \frac{L^k}{k!}.
\end{align*}

\end{proposition}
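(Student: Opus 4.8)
The plan is to fix $t\in[0,1]$, take an arbitrary dissection $0 = t_0 < t_1 < \cdots < t_m = t$, bound each increment $\big\|\mathbb{X}^k_{[0,t_{j+1}]} - \mathbb{X}^k_{[0,t_j]}\big\|$ by a telescoping quantity, and then sum and pass to the supremum over dissections. Throughout I would write $\phi(s) := \|x\|_{\textnormal{1-var},[0,s]}$; it is non-decreasing, satisfies $\phi(0)=0$ and $\phi(1)\le L$, and is additive in the sense that $\|x\|_{\textnormal{1-var},[u,v]} = \phi(v)-\phi(u)$ for $u\le v$.

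The key step is Chen's identity $S(x_{[0,t_{j+1}]}) = S(x_{[0,t_j]}) \otimes S(x_{[t_j,t_{j+1}]})$: reading off its degree-$k$ component in the tensor algebra gives $\mathbb{X}^k_{[0,t_{j+1}]} = \sum_{i=0}^{k} \mathbb{X}^{k-i}_{[0,t_j]} \otimes \mathbb{X}^{i}_{[t_j,t_{j+1}]}$, where $\mathbb{X}^i_{[t_j,t_{j+1}]}$ denotes the order-$i$ signature of $x$ restricted to $[t_j,t_{j+1}]$. Subtracting the $i=0$ term ($\mathbb{X}^k_{[0,t_j]}$) and then using the triangle inequality, the multiplicativity of the tensor norm (Equation~\eqref{eq:admissibility}), and the factorial bound on signature norms recalled in the proof of Lemma~\ref{lemma:word_bound} applied both on $[0,t_j]$ and on $[t_j,t_{j+1}]$ (namely $\|\mathbb{X}^m_{[u,v]}\| \le \|x\|^m_{\textnormal{1-var},[u,v]}/m!$), I would obtain
\[
\big\|\mathbb{X}^k_{[0,t_{j+1}]} - \mathbb{X}^k_{[0,t_j]}\big\| \;\le\; \sum_{i=1}^{k} \frac{\phi(t_j)^{k-i}\,\big(\phi(t_{j+1})-\phi(t_j)\big)^{i}}{(k-i)!\,i!} \;=\; \frac{\phi(t_{j+1})^{k} - \phi(t_j)^{k}}{k!},
\]
where the last equality is the binomial theorem applied to $\big(\phi(t_j) + (\phi(t_{j+1})-\phi(t_j))\big)^{k}$ after factoring out $1/k!$.

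Summing over $j = 0,\dots,m-1$ telescopes to $\big(\phi(t)^k - \phi(0)^k\big)/k! = \phi(t)^k/k! \le L^k/k!$, and since the dissection was arbitrary this bounds $\|\mathbb{X}^k\|_{\textnormal{1-var},[0,t]}$, which is the claim. I do not expect a serious obstacle: the argument is essentially bookkeeping once one has the Chen decomposition of the increment and the factorial decay of signature norms. The only points requiring a little care are invoking the additivity of total variation (so that the per-interval factorial bounds recombine exactly into a binomial expansion rather than merely an upper bound) and the identity $\phi(0)=0$, which is what makes the telescoping sum collapse precisely to $\phi(t)^k/k!$.
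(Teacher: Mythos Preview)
Your proof is correct and takes a different route from the paper's. The paper asserts that the increment $\mathbb{X}^k_{[0,t_{i+1}]}-\mathbb{X}^k_{[0,t_i]}$ equals $\mathbb{X}^k_{[t_i,t_{i+1}]}$, bounds the latter by $\|x\|_{\textnormal{1-var},[t_i,t_{i+1}]}^k/k!$, and then controls $\sum_i a_i^k$ by $(\sum_i a_i)^k$ via the multinomial theorem together with additivity of total variation. You instead decompose the increment with Chen's identity, apply the factorial bound to each tensor factor, recognise the binomial expansion of $\big(\phi(t_{j+1})^k-\phi(t_j)^k\big)/k!$, and telescope. Your approach is the more careful one: the paper's claimed identity is not literally true for $k\ge 2$ (Chen's formula produces cross terms, so the level-$k$ increment is not the level-$k$ signature over the subinterval), whereas your Chen-based decomposition makes the increment estimate rigorous, and the telescoping delivers the bound exactly rather than through a multinomial inequality. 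The paper's argument is a line shorter if one grants its first step; yours is the one that actually closes without that gap.
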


\begin{proof}
By definition of the total variation, 
\[
\norm{\mathbb{X}^k}_{\textnormal{1-var}, [0,t]} = \sup_D \sum\limits_{i=1}^m \norm{\mathbb{X}^k_{[0,t_{i+1}]}-\mathbb{X}^k_{[0,t_{i}]}}_{(\mathbb{R}^d)^{\otimes k}} = \sup_D \sum\limits_{i=1}^m \norm{\mathbb{X}^k_{[t_i,t_{i+1}]}}_{(\mathbb{R}^d)^{\otimes k}}, 
\]
since $\mathbb{X}^k_{[0,t]} = \int_0^t dx_{u_1}\otimes \dots \otimes dx_{u_k}$, and where the supremum is taken over finite dissections $D=\{0=t_1,\dots,t_m = 1\}$ of $[0,1]$. Notice that the signature layer of depth $k$ is here written as an element of $(\mathbb{R}^d)^{\otimes k}$, which is more convenient for this proof. Then
\[
\sup_D \sum\limits_{i=1}^m \norm{\mathbb{X}^k_{[t_i,t_{i+1}]}}_{(\mathbb{R}^d)^{\otimes k}} \leq \sup_D \sum\limits_{i=1}^m \frac{\norm{x}_{\textnormal{1-var},[t_i,t_{i+1}]}^k}{k!} \leq \frac{1}{k!} \sup_D \Big(\sum\limits_{i=1}^m \norm{x}_{\textnormal{1-var},[t_i,t_{i+1}]} \Big)^k = \frac{1}{k!} \sup_D \norm{x}_{\textnormal{1-var},[0,1]}^k \leq \frac{L^k}{k!},
\]
where the second inequality follows from the multinomial theorem and the last equality comes from the fact that for all $s<u<t$, $\norm{x}_{\textnormal{1-var},[s,u]}+ \norm{x}_{\textnormal{1-var},[u,t]} = \norm{x}_{\textnormal{1-var},[s,t]}$. This ends our proof.
\end{proof}

We now state a bound on the difference between the $k$-th layer of the signatures of two different paths.

\begin{theorem}
\label{thm:layerbound}
Let $x,z \in C_L^{\textnormal{1-var}}([0,1],\mathbb{R}^d)$. Then for all $k \geq 2$, the difference in supremum norm between the paths $t \to \mathbb{X}^k_{[0,t]}$ and $t \to \mathbb{Z}^k_{[0,t]}$ is bounded by
\[
\norm{\mathbb{X}^k - \mathbb{Z}^k}_{\infty,[0,t]} \leq 2L^{k-1} \sum\limits_{j=1}^{k-1} \frac{1}{j!}\norm{x-z}_{\infty,[0,t]} \leq 2eL^{k-1}\norm{x-z}_{\infty,[0,t]} 
\]
and 
\[
\norm{ \mathbb{X}^1_{[0,t]}-\mathbb{Z}^1_{[0,t]}} \leq  2 \norm{x-z}_{\infty,[0,t]}.
\]
\end{theorem}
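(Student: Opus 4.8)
The plan is to prove the bound by induction on $k$, exploiting the recursive structure $\mathbb{X}^k_{[0,t]} = \int_0^t \mathbb{X}^{k-1}_{[0,s]} \otimes dx_s$ together with the integration-by-parts / product estimates on Riemann–Stieltjes integrals recalled in Appendix~\ref{appendix:RS_integral}. The base case $k=1$ is immediate: $\mathbb{X}^1_{[0,t]} - \mathbb{Z}^1_{[0,t]} = (x_t - x_0) - (z_t - z_0)$, so $\norm{\mathbb{X}^1_{[0,t]} - \mathbb{Z}^1_{[0,t]}} \leq \norm{x_t - z_t} + \norm{x_0 - z_0} \leq 2\norm{x-z}_{\infty,[0,t]}$, and since the right-hand side is nondecreasing in $t$ this also bounds the supremum norm. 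For $k \geq 2$, I would write the telescoping identity
\[
\mathbb{X}^k_{[0,t]} - \mathbb{Z}^k_{[0,t]} = \int_0^t \big(\mathbb{X}^{k-1}_{[0,s]} - \mathbb{Z}^{k-1}_{[0,s]}\big) \otimes dx_s + \int_0^t \mathbb{Z}^{k-1}_{[0,s]} \otimes d(x-z)_s,
\]
and estimate the two terms separately.

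For the first term, the bound on Riemann–Stieltjes integrals gives $\big\| \int_0^t (\mathbb{X}^{k-1}_{[0,s]} - \mathbb{Z}^{k-1}_{[0,s]}) \otimes dx_s \big\| \leq \norm{\mathbb{X}^{k-1} - \mathbb{Z}^{k-1}}_{\infty,[0,t]} \cdot \norm{x}_{\textnormal{1-var},[0,t]} \leq L \norm{\mathbb{X}^{k-1} - \mathbb{Z}^{k-1}}_{\infty,[0,t]}$, which feeds the induction hypothesis. For the second term, the integrand $\mathbb{Z}^{k-1}$ is not of the form "fixed function times $d(x-z)$" in a way that the basic estimate handles directly, since $x - z$ need not have controlled total variation relative to $L$; here I would use integration by parts (Proposition~\ref{prop:int_by_part_riemann}) to move the differential onto $\mathbb{Z}^{k-1}$, turning $\int_0^t \mathbb{Z}^{k-1}_{[0,s]} \otimes d(x-z)_s$ into a boundary term $\mathbb{Z}^{k-1}_{[0,t]} \otimes (x_t - z_t) - \int_0^t (x-z)_s \otimes d\mathbb{Z}^{k-1}_{[0,s]}$ (minding the noncommutativity of $\otimes$, which only affects bookkeeping, not norms). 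The boundary term is bounded by $\norm{\mathbb{Z}^{k-1}}_{\infty,[0,t]} \norm{x-z}_{\infty,[0,t]} \leq \tfrac{L^{k-1}}{(k-1)!}\norm{x-z}_{\infty,[0,t]}$ using the standard factorial decay of signature coefficients, and the remaining integral is bounded by $\norm{x-z}_{\infty,[0,t]} \cdot \norm{\mathbb{Z}^{k-1}}_{\textnormal{1-var},[0,t]} \leq \tfrac{L^{k-1}}{(k-1)!}\norm{x-z}_{\infty,[0,t]}$ by Proposition~\ref{lemma:1varklayer}. So the second term contributes at most $\tfrac{2L^{k-1}}{(k-1)!}\norm{x-z}_{\infty,[0,t]}$.

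Combining, with $a_k(t) := \norm{\mathbb{X}^k - \mathbb{Z}^k}_{\infty,[0,t]}$, one gets the recursion $a_k(t) \leq L\, a_{k-1}(t) + \tfrac{2L^{k-1}}{(k-1)!}\norm{x-z}_{\infty,[0,t]}$, with $a_1(t) \leq 2\norm{x-z}_{\infty,[0,t]}$. Unrolling this linear recurrence gives $a_k(t) \leq 2L^{k-1}\big(1 + \tfrac{1}{1!} + \dots + \tfrac{1}{(k-1)!}\big)\norm{x-z}_{\infty,[0,t]}$; one has to be slightly careful that the first step contributes $2L^{k-1}$ (from $L^{k-1} a_1$) rather than $2L^{k-1}/0!$ twice, but matching against the claimed $2L^{k-1}\sum_{j=1}^{k-1}\tfrac{1}{j!}$ shows the indices line up. The final inequality $\sum_{j=1}^{k-1}\tfrac1{j!} \leq \sum_{j=0}^\infty \tfrac1{j!} = e$ gives the clean form $a_k(t) \leq 2eL^{k-1}\norm{x-z}_{\infty,[0,t]}$.

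The main obstacle I anticipate is the second integral $\int_0^t \mathbb{Z}^{k-1}_{[0,s]}\otimes d(x-z)_s$: the difference path $x-z$ carries no a priori total-variation bound comparable to $L$, so the naive estimate fails and the integration-by-parts maneuver (which converts it into a supremum-norm estimate plus an integral against $d\mathbb{Z}^{k-1}$, both controllable) is essential. Getting the constants to match the stated bound exactly — in particular tracking that the $j=0$ term appears once, not twice, and that the $L^{k-1}$ rather than $L^k$ power is correct because one factor of $L$ is "spent" either on $\norm{x}_{\textnormal{1-var}}$ in the first term or absorbed into the $\norm{x-z}_\infty$-scaled boundary/variation estimates in the second — is the other point requiring care, though it is routine once the recursion is set up.
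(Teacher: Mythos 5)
Your proof is correct and follows essentially the same route as the paper's: induction on $k$, the telescoping decomposition of $\mathbb{X}^k-\mathbb{Z}^k$ into an integral against the difference path plus an integral of the difference of lower-order layers (you merely swap which of $x,z$ plays each role), integration by parts to handle the $d(x-z)$ integral via the supremum and $1$-variation bounds on signature layers, and unrolling the resulting linear recursion. Your remark about the $j=0$ term is well taken: the correctly unrolled recursion gives $2L^{k-1}\sum_{j=0}^{k-1}\frac{1}{j!}$ rather than the sum starting at $j=1$ displayed in the statement, but this still yields the final constant $2e$.
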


\begin{proof}
    Our proof works by induction. Let $x,z \in C_L^{\textnormal{1-var}}([0,1],\mathbb{R}^d)$, and for $t \in [0,1]$ denote by $\mathbb{X}^k_{[0,t]}$ (resp. $\mathbb{Z}^k_{[0,t]}$) the $k$-th layer of the signature of $x$ (resp. $z$). For $k=1$ and $t \in [0,1]$, remark that 
    \[
    \mathbb{X}^1_{[0,t]}-\mathbb{Z}^1_{[0,t]} = \int_0^t d(x_u - z_u) = x_t-z_t -(x_0 - z_0)
    \]
    such that
    \[
    \norm{ \mathbb{X}^1_{[0,t]}-\mathbb{Z}^1_{[0,t]}} \leq \norm{x-z}_{\infty,[0,t]} + \norm{x_0-z_0} \leq 2 \norm{x-z}_{\infty,[0,t]}. 
    \]

    Consider now $k \geq 2$. We have
    \[
    \mathbb{X}^k_{[0,t]} - \mathbb{Z}^k_{[0,t]} = \int_0^t \mathbb{X}^{k-1}_{[0,s]} \otimes dx_s - \int_0^t \mathbb{Z}^{k-1}_{[0,s]} \otimes dz_s = \int_0^t \mathbb{X}^{k-1}_{[0,s]} \otimes d(x_s-z_s+z_s) - \int_0^t \mathbb{Z}^{k-1}_{[0,s]} \otimes dz_s,
    \]
    and thus 
    \[
    \mathbb{X}^k_{[0,t]} - \mathbb{Z}^k_{[0,t]} = \int_0^t \mathbb{X}^{k-1}_{[0,s]} \otimes d(x_s-z_s) + \int_0^t \big(\mathbb{X}^{k-1}_{[0,s]} -\mathbb{Z}^{k-1}_{[0,s]} \big)\otimes dz_s.
    \]
    We now bound each of these terms separately. First, 
    \[
    \norm{\int_0^t \big(\mathbb{X}^{k-1}_{[0,s]} -\mathbb{Z}^{k-1}_{[0,s]} \big)\otimes dz_s}_{(\mathbb{R}^d)^{\otimes k}} \leq \norm{\mathbb{X}^{k-1} -\mathbb{Z}^{k-1}}_{\infty,[0,t]} \norm{z}_{\textnormal{1-var},[0,t]} \leq \norm{\mathbb{X}^{k-1} -\mathbb{Z}^{k-1}}_{\infty,[0,t]}L.
    \]
    Moving to the first integral, integration by parts yields 
    \[
    \int_0^t \mathbb{X}^{k-1}_{[0,s]} \otimes d(x_s-z_s) = \mathbb{X}^k_{[0,t]}\otimes(x_t-z_t) -\mathbb{X}^k_{[0,0]}\otimes(x_0-z_0) - \int_0^t (x_s-z_s) \otimes d\mathbb{X}^{k-1}_{[0,s]}. 
    \] 
    We stress that Proposition \eqref{prop:int_by_part_riemann} applies since the integral over the tensor product is taken coordinate-wise. Since $\mathbb{X}^{k-1}_{[0,0]} = 0$, we are left with 
    \[
    \int_0^t \mathbb{X}^{k-1}_{[0,s]} \otimes d(x_s-z_s) = \mathbb{X}^k_{[0,t]}\otimes(x_t-z_t) -\int_0^t (x_s-z_s) \otimes d\mathbb{X}^{k-1}_{[0,s]}. 
    \]
    Using Lemma \ref{lemma:1varklayer} and submultiplicativity of the tensor norms, this can thus be bounded by 
    \begin{align*}
    \norm{\int_0^t \mathbb{X}^{k-1}_{[0,s]} \otimes d(x_s-z_s)}_{(\mathbb{R}^d)^{\otimes k}} & \leq \norm{\mathbb{X}^{k-1}_{[0,t]}}_{(\mathbb{R}^d)^{\otimes (k-1)}}\norm{x-z}_{\infty,[0,t]} + \norm{x-z}_{\infty,[0,t]} \norm{\mathbb{X}^{k-1}}_{\textnormal{1-var}, [0,t]}\\
    & = \frac{2L^{k-1}}{(k-1)!}  \norm{x-z}_{\infty,[0,t]}.
    \end{align*}
    Finally, we are left with
    \begin{align*}
         \norm{\mathbb{X}^k - \mathbb{Z}^k}_{\infty,[0,t]} \leq \frac{2L^{k-1}}{(k-1)!}  \norm{x-z}_{\infty,[0,t]} + \norm{\mathbb{X}^{k-1} -\mathbb{Z}^{k-1}}_{\infty,[0,t]}L,
    \end{align*}
    which can be recursively bounded by
    \[
    \norm{\mathbb{X}^k - \mathbb{Z}^k}_{\infty,[0,t]} \leq 2L^{k-1}\norm{x-z}_{\infty,[0,t]} \sum\limits_{j=1}^{k-1} \frac{1}{j!} \leq 2L^{k-1}e \norm{x-z}_{\infty,[0,t]}. 
    \]
\end{proof}

Note that this inequality implies that if $z$ is chosen as the linear interpolation of a discretization of $x$ on a grid $D$, and if the grid gets finer, all signature layers converge at speed $\norm{x-z}_{\infty,[0,t]}$ but the multiplicative constant increases with depth (if $L \geq 1$). Figure \ref{fig:sig_convergence} illustrates this phenomenon.

\begin{figure}[ht]
    \centering
    \includegraphics[width=0.3\textwidth]{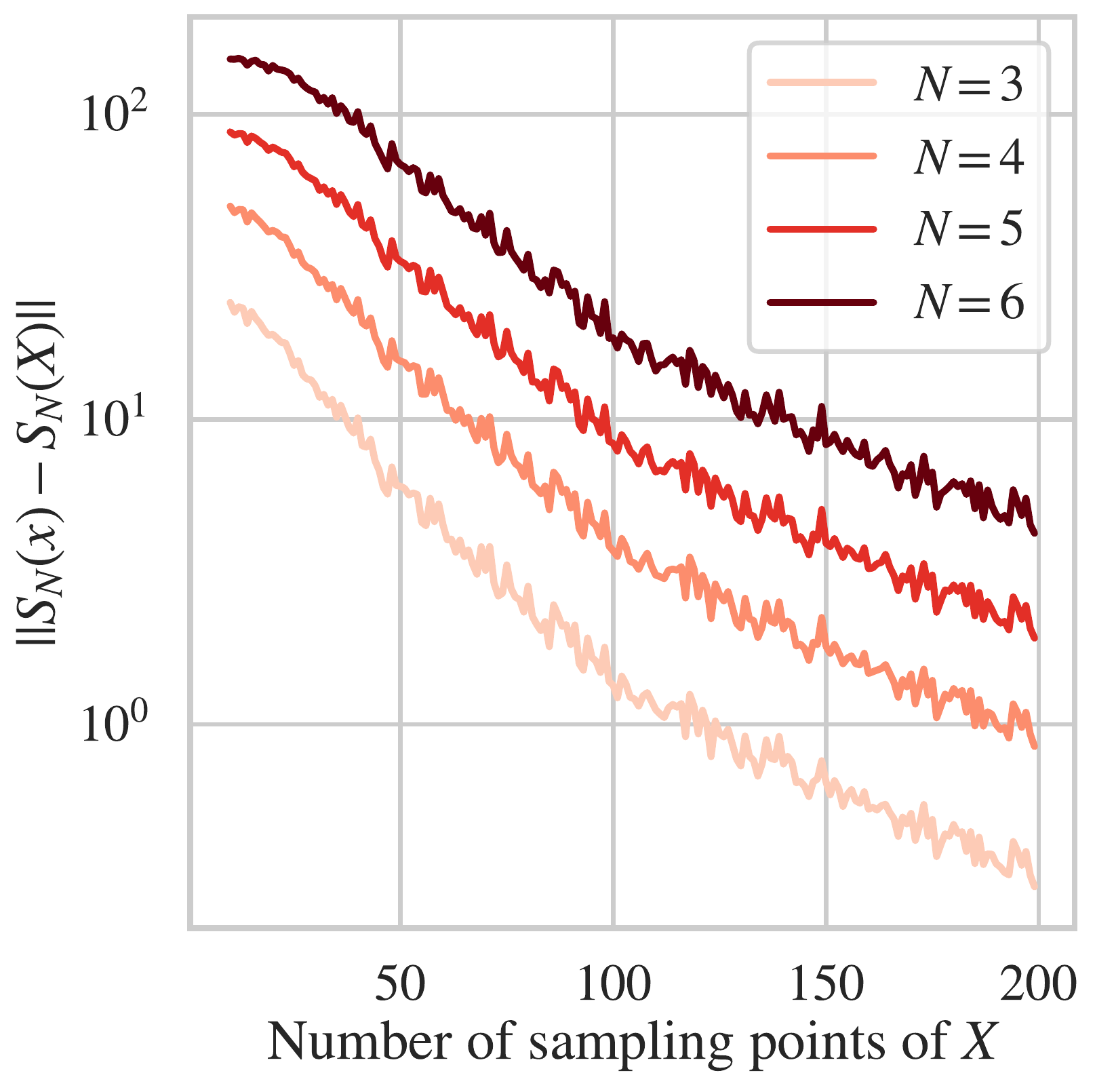}
    \includegraphics[width=0.3\textwidth]{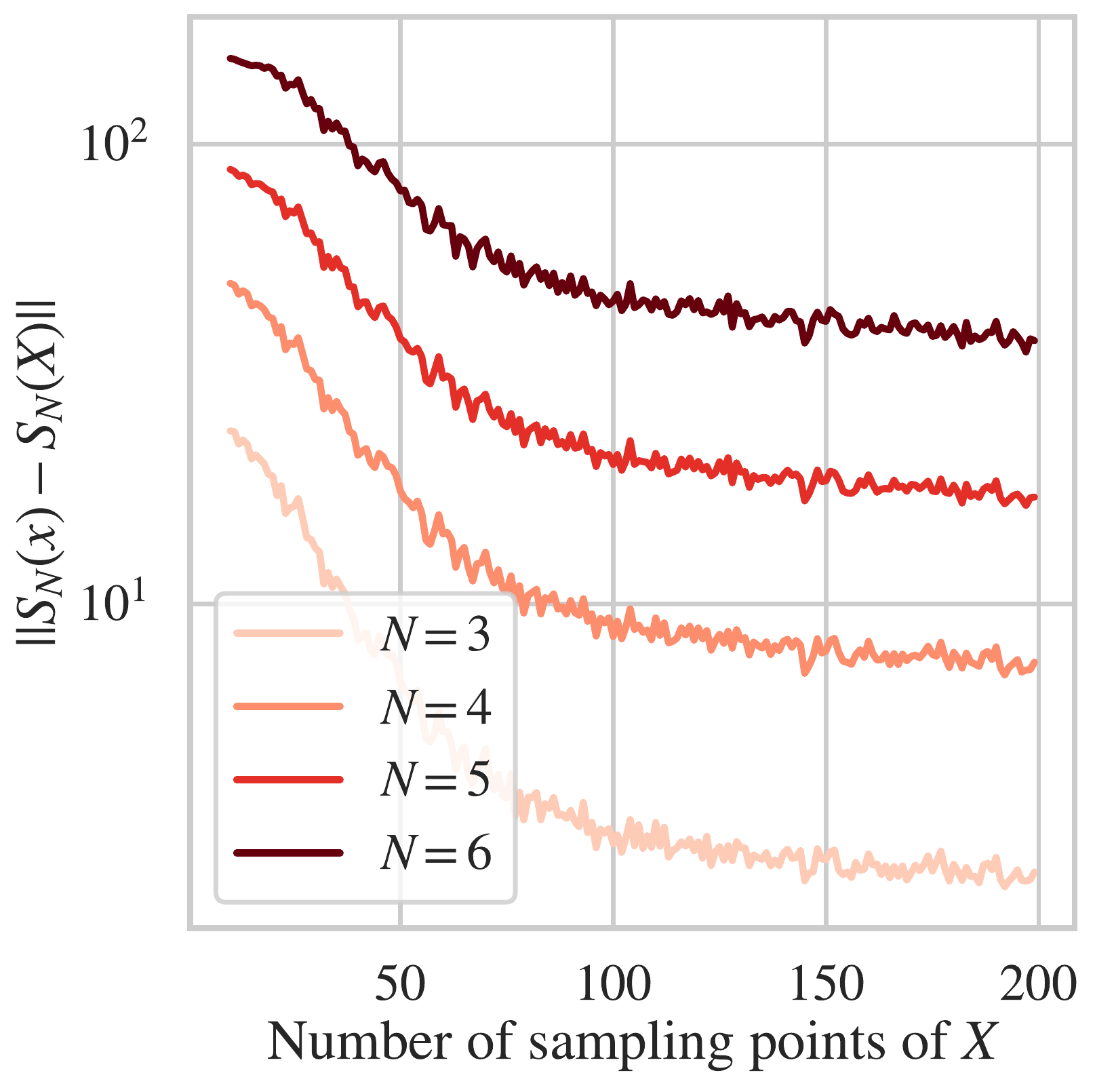}
    \includegraphics[width=0.3\textwidth]{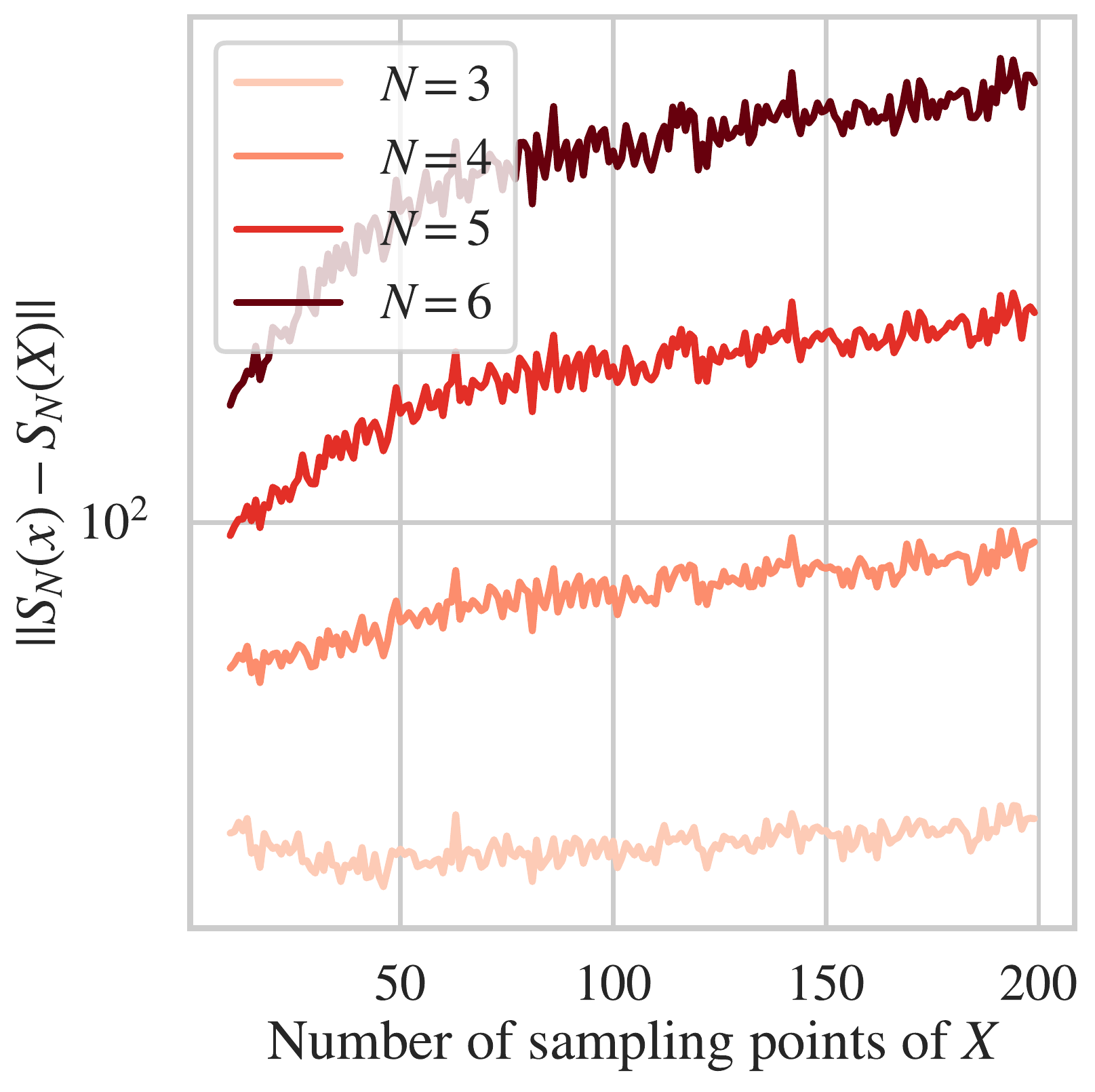}
    \caption{Difference between the signature of a continuous path $x$ and the signature of its discretized and noisy counterpart $X$, without noise on the discretization points (left), with noise of variance $v_\xi = 0.08^2$ (middle) and with noise of variance $v_\xi = 0.5^2$. For every number of sampling points, we average the distance between the two signature over $50$ randomly chosen discretizations of the interval $[0,1]$. The discretized path is generated as in the well-specified setting (see Appendix \ref{appendix:details_well_specified}).}
    \label{fig:sig_convergence}
\end{figure}

\subsection{Proof of Lemma \ref{lemma:disc_error}}

\label{proof:disc_error}

First, recall that for a generic path $x:[0,1] \rightarrow \mathbb{R}^d$, a modulus of continuity is a continuous function $\omega_x:\mathbb{R}_{\geq 0} \rightarrow \mathbb{R}_{\geq 0}$ vanishing at $0$ such that for all $s, t \in [0,1]$
\[
\norm{x_t-x_s} \leq \omega_x(\abs{t-s}).
\]

Also recall that by Heine's theorem, we can define such a modulus of continuity for every continuous mapping $[0,1]$ to $\mathbb{R}^d$. 

We start by giving a general lemma that bounds the difference between the signature layers of a path and its discretized version. Its proof is based on the results of the previous section.

\begin{lemma}
\label{lemma:distance_two_sigs}
Let $x \in C_L^{\textnormal{1-var}}([0,1],\mathbb{R}^d)$ and $\omega_x:\mathbb{R}_{\geq 0} \to \mathbb{R}_{\geq 0}$ its modulus of continuity. Let $x^D:[0,1] \rightarrow \mathbb{R}^d$ be the path obtained by linear interpolation of the discretization of $x$ on a grid $D$ corrupted by additive noise $\xi$. Let $\mathbb{X}^{k}_{[0,t]}$ and $\mathbb{X}^{k,D}_{[0,t]}$ be their respective $k$-th layers of signature on $[0,t]$. Then for all $k\geq 2$
\[
\norm{\mathbb{X}^{k} - \mathbb{X}^{k,D}}_{\infty, [0,1]} \leq 2L^{k-1}\sum\limits_{j=1}^{k-1}\frac{1}{j!} \Big(\max\limits_{0 \leq s \leq \abs{D}} \omega_x(s) + \max\limits_{t \in D} \norm{\xi_t}\Big),
\]
and for $k=1$
\[
\norm{\mathbb{X}^{1} - \mathbb{X}^{1,D}}_{\infty,[0,1]} \leq 2\Big(\max\limits_{0 \leq s \leq \abs{D}} \omega_x(s) + \max\limits_{t \in D} \norm{\xi_t}\Big).
\]

\end{lemma}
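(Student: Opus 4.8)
The plan is to reduce the statement to the path-space comparison of signature layers provided by Theorem~\ref{thm:layerbound}, applied to the pair $(x, x^D)$, and then to separately bound $\norm{x - x^D}_{\infty,[0,1]}$.

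First I would record that $x^D$ is piecewise linear, hence continuous and of bounded variation; for Theorem~\ref{thm:layerbound} to apply we take $x^D \in C_L^{\textnormal{1-var}}([0,1],\mathbb{R}^d)$, which is ensured in the intended application (the proof of Lemma~\ref{lemma:disc_error}) by the normalization of the interpolated feature time series. With $z = x^D$, Theorem~\ref{thm:layerbound} gives, for each $t \in [0,1]$, the bound $\norm{\mathbb{X}^k_{[0,t]} - \mathbb{X}^{k,D}_{[0,t]}} \leq 2L^{k-1}\big(\sum_{j=1}^{k-1}\tfrac{1}{j!}\big)\norm{x - x^D}_{\infty,[0,t]}$ for $k \geq 2$, and $\norm{\mathbb{X}^1_{[0,t]} - \mathbb{X}^{1,D}_{[0,t]}} \leq 2\norm{x - x^D}_{\infty,[0,t]}$. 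Taking the supremum over $t \in [0,1]$ and using that $t \mapsto \norm{x - x^D}_{\infty,[0,t]}$ is nondecreasing, the right-hand sides become $2L^{k-1}\big(\sum_{j=1}^{k-1}\tfrac{1}{j!}\big)\norm{x - x^D}_{\infty,[0,1]}$ and $2\norm{x - x^D}_{\infty,[0,1]}$ respectively.

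It then remains to show $\norm{x - x^D}_{\infty,[0,1]} \leq \max_{0 \leq s \leq \abs{D}} \omega_x(s) + \max_{t \in D} \norm{\xi_t}$. Fix $t \in [0,1]$, let $[t_i, t_{i+1}]$ be the grid cell containing it, and set $\lambda = (t - t_i)/(t_{i+1} - t_i) \in [0,1]$, so that $x^D_t = (1-\lambda)(x_{t_i} + \xi_{t_i}) + \lambda(x_{t_{i+1}} + \xi_{t_{i+1}})$. Writing $x_t = (1-\lambda)x_t + \lambda x_t$ and regrouping, $x_t - x^D_t = \big[(1-\lambda)(x_t - x_{t_i}) + \lambda(x_t - x_{t_{i+1}})\big] - \big[(1-\lambda)\xi_{t_i} + \lambda\xi_{t_{i+1}}\big]$. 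The first bracket has norm at most $(1-\lambda)\omega_x(\abs{t - t_i}) + \lambda\,\omega_x(\abs{t_{i+1} - t}) \leq \max_{0 \leq s \leq \abs{D}} \omega_x(s)$, since both $\abs{t - t_i}$ and $\abs{t_{i+1} - t}$ are at most the meshsize $\abs{D}$ and $(1-\lambda) + \lambda = 1$ (note that one cannot simply write $\omega_x(\abs{D})$, as the modulus of continuity is not assumed monotone). The second bracket has norm at most $\max_{t \in D} \norm{\xi_t}$ by convexity of the norm. Taking the supremum over $t \in [0,1]$ yields the claimed bound on $\norm{x - x^D}_{\infty,[0,1]}$; plugging it into the two displays of the previous paragraph, and using $\sum_{j=1}^{k-1} 1/j! \leq e$ for the secondary estimate, completes the proof.

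The only genuine obstacle is the (mild) bookkeeping in this last step: correctly splitting $x_t - x^D_t$ into an interpolation-error part and a noise part, and being careful that the deterministic part must be controlled by $\max_{0 \leq s \leq \abs{D}} \omega_x(s)$ rather than by $\omega_x(\abs{D})$. Everything else follows directly from Theorem~\ref{thm:layerbound}.
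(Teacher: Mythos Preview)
Your proposal is correct and follows essentially the same approach as the paper: apply Theorem~\ref{thm:layerbound} to the pair $(x,x^D)$, then bound $\norm{x-x^D}_{\infty,[0,1]}$ by separating the interpolation error from the noise contribution. The only cosmetic difference is that the paper introduces an intermediate noiseless interpolant $\tilde x$ and uses the triangle inequality $\norm{x-x^D}_\infty \leq \norm{x-\tilde x}_\infty + \max_{t\in D}\norm{\xi_t}$, whereas you decompose $x_t - x^D_t$ directly into its deterministic and noise parts; your explicit care about $\omega_x$ not being assumed monotone is also a nice touch that the paper glosses over.
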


\begin{proof}
Theorem \ref{thm:layerbound} yields for $k \geq 2$
\[
\norm{\mathbb{X}^{k} - \mathbb{X}^{k,D}}_{\infty, [0,1]} \leq 2L^{k-1} \sum\limits_{j=1}^{k-1}\frac{1}{j!} \norm{x-x^D}_{\infty,[0,t]}
\]

Now, remark that 
\begin{align}
 \norm{x-x^D}_{\infty,[0,1]} \leq \norm{x - \Tilde{x}}_{\infty,[0,1]} + \max\limits_{t \in D} \norm{\xi_t}
\end{align}
from the triangular inequality, where $\Tilde{x}$ is the piecewise linear path obtained by linear interpolation of $x_{0},x_{t_1},\dots, x_{t_j}$. Now, since the paths $x$ and $\Tilde{x}$ coincide on $0,t_1,\dots,t_j$, we have
\begin{align*}
    \norm{x - \Tilde{x}}_{\infty,[0,1]} = \max\limits_{i=0,\dots,j-1} \norm{x-\Tilde{x}}_{\infty,[t_i,t_{i+1}]} \leq \max\limits_{i=0,\dots,j-1} \omega_x\big(\abs{t_{i+1} - t_{i}}\big) = \max\limits_{0 \leq s \leq \abs{D}} \omega_x(s).
\end{align*}

This gives us 
\begin{align*}
\norm{\mathbb{X}^{k}- \mathbb{X}^{k,D}}_{\infty,[0,1]} \leq 2L^{k-1} \sum\limits_{j=1}^{k-1} \frac{1}{j!}\Big(\max\limits_{0 \leq s \leq \abs{D}} \omega_x(s) + \max\limits_{t \in D} \norm{\xi_t}\Big).
\end{align*}

For the case $k=1$, we immediately get 
\[
\norm{\mathbb{X}^{1} - \mathbb{X}^{1,D}}_{\infty,[0,1]} \leq 2\Big(\max\limits_{0 \leq s \leq \abs{D}} \omega_x(s) + \max\limits_{t \in D} \norm{\xi_t}\Big)
\]
using the same technique as above.
\end{proof}

This result is illustrated in Figure \ref{fig:sig_convergence}. One can notice that as predicted by our theoretical bounds, the convergence of signature of high order happens at the same rate than the convergence of signatures of lower order. However, the multiplicative constant controlling the tightness of the bound increases with $N$, leading to a slower convergence when $N$ increases. Strong noise hinders the convergence of the signature of the discretized path since in this case, the noise's variance is independent of the number of sampling points : adding more sampling points means adding more noise. There are therefore two trade-offs when learning with signatures. A first trade-off is between sampling frequency and order: with paths sampled at low resolution, one should prefer lower order signatures, which trade model complexity against precise features. A second trade-off is between sampling and noise: if the feature time series are very noisy, the precision of the features increases up to a certain point, past which noise prevails.    

With this result in hand, we can now prove Lemma \ref{lemma:disc_error}.

\begin{proof}
We restrict ourselves to the $\omega$-Lipschitz case.

In our setup, after linearly interpolation the time series to obtain $x^D$, we normalize it by its total variation $\norm{x^D}_{\textnormal{1-var},[0,1]}$, which is a standard practice when learning with signatures \citep{morrill2020generalised}. This means that we compute the signature of the path 
\begin{align}
\frac{1}{\norm{x^D}}_{\textnormal{1-var},[0,t_j]}x^D.
\end{align}

Theorem \ref{thm:layerbound} gets us for $k \geq 2$ 
\begin{align}
\norm{\mathbb{X}^{k} - \mathbb{X}^{k,D}}_{\infty, [0,1]} & \leq 2L^{k-1} \sum\limits_{j=1}^{k-1}\frac{1}{j!} \norm{x- \frac{1}{\norm{x^D}}_{\textnormal{1-var},[0,1]} x^D}_{\infty,[0,1]}\\
& \leq 2L^{k-1} \sum\limits_{j=1}^{k-1}\frac{1}{j!} \norm{x- x^D}_{\infty,[0,1]} + 2L^{k-1} \sum\limits_{j=1}^{k-1}\frac{1}{j!} \norm{x^D- \frac{1}{\norm{x^D}}_{\textnormal{1-var},[0,1]}x^D}_{\infty,[0,1]}.
\end{align}

The first term can be bounded by using the fact that in our setting, $\omega_x(s) = \omega s$, and we thus get
\begin{align}
    2L^{k-1} \sum\limits_{j=1}^{k-1}\frac{1}{j!} \norm{x- x^D}_{\infty,[0,1]} \leq 2L^{k-1} \sum\limits_{j=0}^{k-1} \frac{1}{j!} \Big( \omega \abs{D}+ \max\limits_{t \in D} \norm{\xi_t}\Big) \leq 2L^{k-1}e  \Big( \omega \abs{D}+ \max\limits_{t \in D} \norm{\xi_t}\Big) 
\end{align}

The second term can be bounded by
\begin{align}
    2L^{k-1} \sum\limits_{j=1}^{k-1}\frac{1}{j!} \norm{x^D- \frac{1}{\norm{x^D}}_{\textnormal{1-var},[0,1]}x^D}_{\infty,[0,t]} & \leq  
    2L^{k-1} \sum\limits_{j=1}^{k-1}\frac{1}{j!} \norm{\Big(1- \frac{1}{\norm{x^D}}_{\textnormal{1-var},[0,1]}\Big)x^D}_{\infty,[0,t]}\\
    & \leq 2L^{k-1} \sum\limits_{j=1}^{k-1}\frac{1}{j!} \Big|1-\frac{1}{\norm{x^D}_{\textnormal{1-var},[0,1]}}\Big| \norm{x^D}_{\infty,[0,t]}.
    \end{align}

In order to bound 
\[
\Big|1-\frac{1}{\norm{x^D}_{\textnormal{1-var},[0,1]}}\Big| = \Bigg|\frac{\norm{x^D}_{\textnormal{1-var},[0,1]}-1}{\norm{x^D}_{\textnormal{1-var},[0,1]}}\Bigg|,
\]
we need both an upper and a lower bound on $\norm{x^D}_{\textnormal{1-var},[0,1]}$. 

Remark that 
\begin{align}
    \norm{x^D}_{\textnormal{1-var},[0,t_j]} = \sum\limits_{t_u,t_{u-1} \in D} \norm{x_{t_u}+ \xi_{t_{u}}-x_{t_{u-1}}-\xi_{t_{u-1}}}.
\end{align}
Recall that we assume the path $(x_t)$ to be time-augmented, and that the measurement times are not noisy. This means that 
\begin{align}
    \sum\limits_{t_u,t_{u-1} \in D} \norm{x_{t_u}+ \xi_{t_{u}}-x_{t_{u-1}}-\xi_{t_{u-1}}} \geq \sum\limits_{t_u,t_{u-1} \in D} \abs{t_u-t_{u-1}} \geq t_2-t_1 = t_2
\end{align}
since $t_1 = 0$ and Assumption \ref{assump:sampling_grid} guarantees that there are at least two sampling points in every grid. This gives us that 
\begin{align}
    \frac{1}{\norm{x^D}}_{\textnormal{1-var},[0,1]} \leq \frac{1}{t_2} \leq \frac{1}{\eta},
\end{align}
since we have required that the last sampling time is at least $\eta$ in Assumption \ref{assump:sampling_grid}. Turning to the upper bound, we get that 
\begin{align*}
\Big| 1-\norm{x^D}_{\textnormal{1-var},[0,1]} \Big| \leq 1 - L + \sum\limits_{t_u, t_{u-1}\in D} \norm{\xi_u - \xi_{u-1}}
\end{align*}
by definition of the total variation of a piecewise linear path. Finally, 
\begin{align}
    \sum\limits_{t_u, t_{u-1}\in D} \norm{\xi_u - \xi_{u-1}} \leq 2 \# D \max_{t \in D} \norm{\xi_t}.
\end{align}

Putting everything together gives us 
\begin{align}
    \Big|1-\frac{1}{\norm{x^D}_{\textnormal{1-var},[0,1]}}\Big| \leq \frac{1 - L +  2 \# D \max\limits_{t \in D} \norm{\xi_t}}{\eta}.
\end{align}

In the end, we get that 
\begin{align}
    2L^{k-1} \sum\limits_{j=1}^{k-1}\frac{1}{j!} \Big|1-\frac{1}{\norm{x^D}_{\textnormal{1-var},[0,1]}}\Big| \norm{x^D}_{\infty,[0,1]} \leq 2L^{k-1} e\frac{1 - L +  2 \# D \max\limits_{t \in D} \norm{\xi_t}}{\eta} \norm{x^D}_{\infty,[0,1]}.
\end{align}

Now, remark that since the path $x^D$ is piecewise linear,
\begin{align}
    \norm{x^D}_{\infty,[0,1]}  = \norm{x^D-x_0 + x_0}_{\infty,[0,1]} & \leq \max\limits_{t \in D} \norm{x_t + \xi_t - x_0} + \norm{x_0}\\
    & \leq \max\limits_{t \in D} \norm{x_t - x_0} + \max\limits_{t \in D} \norm{\xi_t} + \norm{x_0} \\
    & \leq \norm{x_0} + L + \max\limits_{t \in D} \norm{\xi_t}
\end{align}

where the inequality 
\[
\max\limits_{t \in D} \norm{x_t - x_0} \leq L
\]
follows from the definition of the total variation.

This means that 
\begin{align}
    2L^{k-1} \sum\limits_{j=1}^{k-1}\frac{1}{j!} \Big|1-\frac{1}{\norm{x^D}_{\textnormal{1-var},[0,1]}}\Big| \norm{x^D}_{\infty,[0,1]} \leq 2L^{k-1} e\frac{1 - L +  2 \# D \max\limits_{t \in D} \norm{\xi_t}}{\eta} \Big(\norm{x_0} + L + \max\limits_{t \in D} \norm{\xi_t}\Big).
\end{align}

We have written these inequalities for a generic random variable. Let us now consider individual observations of our dataset.

On the set $A_\xi(\delta)$, one has
\begin{align}
\max\limits_{i=1,\dots,n,t\in D^i}\norm{\xi^i_t} \leq v_\xi \sqrt{d} + v_\xi  \sqrt{c^{-1}\log(\delta^{-1}\# \mathcal{D})}.
\end{align}

To simplify notations, let us write
\begin{align}
    C_\delta := v_\xi \sqrt{d} + v_\xi  \sqrt{c^{-1}\log(\delta^{-1}\# \mathcal{D})}.
\end{align}

We get 
\begin{align}
2L^{k-1} \sum\limits_{j=1}^{k-1}\frac{1}{j!} \norm{x^i- x^{i,D}}_{\infty,[0,1]} \leq 2L^{k-1}e  \Big( \omega \abs{\mathcal{D}}+C_\delta \Big), 
\end{align}
where we recall that $\mathcal{D}$ is the collection of individual grids, and $\abs{\mathcal{D}}$ is the biggest sampling gap among individuals. Similarly,
\begin{align}
    2L^{k-1} \sum\limits_{j=1}^{k-1}\frac{1}{j!} \Big|1-\frac{1}{\norm{x^D}_{\textnormal{1-var},[0,1]}}\Big| \norm{x^D}_{\infty,[0,1]} \leq 2L^{k-1}e \frac{1-L+2\#\mathcal{D}C_\delta}{\eta}\big(\norm{x_0}+L+C_\delta\big)
\end{align}

Now moving to the feature matrices, we have
\begin{align*}
    \frac{1}{M}\norm{(\mathbf{S}_N - \mathbf{S}^\mathcal{D}_N)\theta^\ast_N}^2_F & \leq \frac{1}{M} \sum\limits_{i=1}^n \sum\limits_{k=0}^N \norm{(\mathbf{S}_{i,[k]} - \mathbf{S}^\mathcal{D}_{i,[k]})\theta^*_{[k],\cdot}}_F^2\\
    & \leq \frac{1}{M} \sum\limits_{i=1}^n \sum\limits_{t \in \bar{D}^i} \sum\limits_{k=0}^N d^k \Lambda_k(\mathbf{F})^2\big(2eL^{k-1}\big(\omega \abs{\mathcal{D}} + C_\delta + \frac{1-L+ 2\# \mathcal{D} C_\delta}{\eta}\big(\norm{x_0}+L+C_\delta\big)\big)\Big)^2\\
    & \leq 4 e^2 \Bigg(\omega \abs{\mathcal{D}} + C_\delta + \frac{1-L+ 2\# \mathcal{D} C_\delta}{\eta}\big(\norm{x_0}+L+C_\delta\big)\Bigg)^2 L^2 \sum\limits_{k=0}^N \frac{d^k \Lambda_k(\mathbf{F})^2}{k!^2} \times k!^2\\
    & \leq 4 e^2 N!^2 \Bigg(\omega \abs{\mathcal{D}} + C_\delta + \frac{1-L+ 2\# \mathcal{D} C_\delta}{\eta}\big(\norm{x_0}+L+C_\delta\big)\Bigg)^2 L^2  \sum\limits_{k=0}^N \frac{d^k \Lambda_k(\mathbf{F})^2}{k!^2}. 
\end{align*}

Writing 
\[
C_{\mathcal{D},N}(\delta) = 4e^2 L^2 N!^2 \Bigg(\omega \abs{\mathcal{D}} + C_\delta + \frac{1-L+ 2\# \mathcal{D} C_\delta}{\eta}\big(\norm{x_0}+L+C_\delta\big)\Bigg)^2,  
\]
one finally gets with probability $1-\delta$ that 
\begin{align*}
    \frac{1}{M}\norm{(\mathbf{S}_N - \mathbf{S}^\mathcal{D}_N)\theta^\ast_N}^2_F \leq C_{\mathcal{D},N}(\delta) \sum \limits_{k=0}^N  \frac{d^k \Lambda_k(\mathbf{F})^2}{k!^2}. 
\end{align*}

\end{proof}

\subsection{Proof of the main Theorem}
\label{proof:oracle_bound}
 
We finally combine all Lemmas to obtain the desired oracle bound. 

\begin{proof}
First, we have from Lemma \ref{lemma:estimation_error} that on $A_\varepsilon(\Bar{\delta})$,  
 \begin{align*}
     \frac{1}{2M}\norm{\mathbf{y} - \mathbf{S}_N^{\mathcal{D}}\widehat \theta_{N,M}}_\textnormal{F}^2 \leq \frac{1}{2M}\norm{\mathbf{y}-\mathbf{S}_N^{\mathcal{D}} \theta^\ast_N}_\textnormal{F}^2 + \frac{2pC_N(\Bar{\delta})}{\sqrt{M}}\sum_{k=0}^N \frac{d^k \Lambda_k(\mathbf F)}{k!}.
 \end{align*}
 The first term of the right-hand side of this inequality is bounded by 
 \begin{align*}
     \frac{1}{2M}\norm{\mathbf{y}-\mathbf{S}_N^{\mathcal{D}} \theta^\ast_N}_\textnormal{F}^2 \leq \frac{1}{M} \norm{\mathbf{y} - \mathbf{S}_N \theta^\ast_N}_\textnormal{F}^2 
    & + \frac{1}{M}\norm{\mathbf{S}_N \theta^\ast_N - \mathbf{S}_N^{\mathcal{D}} \theta^\ast_N }_\textnormal{F}^2.
 \end{align*}
 
 By Lemma \ref{lemma:bias_bound} and Lemma \ref{lemma:disc_error}, this can in turn be bounded on $A_\varepsilon (\Bar{\delta}) \cap A_\xi(\delta)$ by
 \begin{align*}
    \frac{1}{2M}\norm{\mathbf{y}-\mathbf{S}_N^{\mathcal{D}} \theta^\ast_N}_\textnormal{F}^2 \leq  \Bigg(\frac{d^{N+1}\Lambda_{N+1}(\mathbf{F})}{(N+1)!}\Bigg)^2 + C_{\mathcal{D},N}(\delta)  \sum \limits_{k=0}^N  \frac{d^k \Lambda_k(\mathbf{F})^2}{k!^2}
 \end{align*}
 Combining all the pieces, this finally gives us, on $A_\varepsilon (\Bar{\delta}) \cap A_\xi(\delta)$,  
\begin{align*}
\frac{1}{2M}\norm{\mathbf{y} - \mathbf{S}_N^{\mathcal{D}}\widehat \theta_{N,M}}_\textnormal{F}^2 & \leq \Bigg(\frac{d^{N+1}\Lambda_{N+1}(\mathbf{F})}{(N+1)!}\Bigg)^2 \\
& +C_{\mathcal{D},N}(\delta)  \sum \limits_{k=0}^N  \frac{d^k \Lambda_k(\mathbf{F})^2}{k!^2} \\
& + \frac{2pC_N(\Bar{\delta})}{\sqrt{M}}\sum_{k=0}^N \frac{d^k \Lambda_k(\mathbf F)}{k!}.
\end{align*}
\end{proof}

\subsection{Asymptotics}

\label{appendix:assymptotics}

We briefly discuss the asymptotic behaviour of the upper bound of the oracle inequality. 
 
\paragraph{Truncation depth $N$.} A natural question is whether the bias of our estimator vanishes as $N \to \infty$. If we have perfect sampling, i.e. the limit case where $\mathcal{D}=0$ and $v_\xi =0$, our bound on the prediction error becomes on $A_\varepsilon(\Bar{\delta})$
\begin{align*}
\frac{1}{2M}\norm{\mathbf{y} - \mathbf{S}_N^{\mathcal{D}}\widehat \theta_{N,M}}_\textnormal{F}^2 & \leq \Bigg(\frac{d^{N+1}\Lambda_{N+1}(\mathbf{F})}{(N+1)!}\Bigg)^2 \\
& + \frac{2pC_N(\Bar{\delta})}{\sqrt{M}}\sum_{k=0}^N \frac{d^k \Lambda_k(\mathbf F)}{k!}.
\end{align*}

The first term of this bound vanishes as an immediate consequence of Assumption \ref{assump:decay_derivatives_F}, while the second term is a statistical error term that behaves like $\frac{\sqrt{\log (Nd^N)}}{\sqrt{M}}$. In order to obtain an asymptotic convergence, we thus need that $N\log(dN) = o(M)$.

In the more realistic setting where $\abs{\mathcal{D}} > 0$, the discretization bias behaves like $L^{N-1} N! \abs{\mathcal{D}}$. It is thus sufficient to assume that $\abs{D} =o(1/N!)$. If $v_\xi >0$, our estimator is durably biased due to the measurement noise, and this bias increases with $N \to \infty$. This is due to a "propagation of chaos" phenomenon: the difference between the unobserved feature path and the interpolated feature time series is amplified by taking the successive iterated integrals that define the signature. This advocates for using simple, low-order signature models in the presence of noise, as the gain in precision obtained when taking higher $N$ and reducing the truncation bias will at some point be lost because of the amplified noise. 

\paragraph{Dimension $p$ of the target path.} Our oracle bound only depends on $p$ through the statistical error term. This term is proportional to $p \sqrt{\log p}$, which is expected in multitask regression. 

\paragraph{Dimension $d$ of the feature path.} Our oracle bound exhibits multiple dependencies in $d$. First, the truncation bias grows polynomially with $d$. Similarly, the discretization bias also depends polynomially on $d$. Finally, the statistical error term is proportional to $\log d$ times a polynomial term.

\section{Algorithms, experiments, and supplementary results}
\label{appendix:experiments}

\subsection{Implementation details}
\label{appendix:details_algo}

\label{appendix:layer_penalty}

Recall that the SigLasso estimator $\widehat{\theta}_{N,M}$ is defined as  
\begin{align*}
     \widehat  \theta_{N,M} \in \argmin\limits_{\theta \in \mathbb{R}^{ s_d(N)\times p}} \frac{1}{2M}\norm{\mathbf{Y}-\mathbf{S}_N^{\mathcal{D}}\theta}_{\textnormal{F}}^2 + \Omega(\theta),
 \end{align*}
 
 where 

\begin{align*}
    \Omega(\theta) = \sum\limits_{k=0}^N \frac{M^{\frac{1}{2}}C_k(\bar{\delta})}{k!} \norm{\theta_{[k],\cdot}}_1,
\end{align*}

and 
\begin{align*}
C_k(\bar{\delta}) = \sqrt{v_\varepsilon \log(2pNd^k / \Bar{\delta})} 
\end{align*}

for $\bar{\delta} \in [0,1]$. Our goal is first to rewrite the penalty $\Omega(\theta)$ as 
\[
\Omega(\theta) = C \sum\limits_{k=0}^N \lambda_k \norm{\theta_{[k],\cdot}}_1,
\]
such that training will only require to scale each layer of $\theta$ and to crossvalidate the multiplicative constant $C$. Since for $k \geq 1$,
\begin{align*}
    C_k(\bar{\delta}) = \sqrt{k} \times \sqrt{v_\varepsilon(\bar{\delta}/k + \log (pN)/k + \log d )} \leq \sqrt{k} \times \sqrt{v_\varepsilon(\bar{\delta} + \log (pN) + \log d )},
\end{align*}
we let $\lambda_k = \frac{\sqrt{k}}{k!}$. 

We now show that the minimization problem with layer-specific penalty can be written as a standard regression problem with $\ell_1$ penalization by rescaling the feature matrix, that is, multiplying $\mathbf{S}_N^\mathcal{D}$ by a well-chosen diagonal matrix. Consider the $\ell_1$-penalized problem \begin{align*}
    \min\limits_{\theta \in \mathbb{R}^{s_d(N) \times p}} \frac{1}{2M}\norm{\mathbf{Y}-\mathbf{S}_N^{\mathcal{D}}\theta}_{\textnormal{F}}^2 + C \sum\limits_{k=0}^N \lambda_k \norm{\theta_{[k],\cdot}}_1,
\end{align*}
where $C >0$ controls the strength of the penalization. 

Making the change of variable $$\Tilde{\theta} = \text{diag}(1,\underbrace{\lambda_1,\dots,\lambda_1}_{d \text{ repetitions}}, \underbrace{\lambda_2,\dots,\lambda_2}_{d^2 \text{ repetitions}},\dots,\underbrace{\lambda_k,\dots,\lambda_k}_{d^N \text{ repetitions}}) \theta,$$
which is equivalent to
 $$\theta = \text{diag}(1,1/\lambda_1,\dots,1/\lambda_1,\dots,1/\lambda_k,\dots,1/\lambda_k) \Tilde{\theta},$$
 and denoting by $W$ this last weight matrix,
 we get the equivalent minimization problem
 \begin{align*}
    \min\limits_{\Tilde{\theta} \in \mathbb{R}^{ s_d(N) \times p}} \frac{1}{2M}\norm{\mathbf{Y}-\mathbf{S}_N^{\mathcal{D}}W\Tilde{\theta}}_{\textnormal{F}}^2 + C \sum\limits_{k=0}^N \norm{\Tilde{\theta}_{[k],\cdot}}_1.
 \end{align*}
 
 We can thus obtain the SigLasso estimator by (i) multiplying the feature matrix $\mathbf{S}_N^\mathcal{D}$ by $W$ and solving the associated  $\ell_1$-penalized problem (ii) multiplying the obtained solution by $W$.  

The Learn-And-Reconstruct algorithm is the generic algorithm used in our work. It is applicable for a wide variety of tasks such as missing values inference, trajectory reconstruction, forecasting and many more. It is described in Algorithm \ref{alg:learn_and_reconstruct}.

\begin{algorithm}[h!]
   \caption{Learn-and-Reconstruct Algorithm. The algorithm infers for every individual in the test set a reconstructed time series $\Hat{Y}^i_t$.}
   \label{alg:learn_and_reconstruct}
\begin{algorithmic}
    \STATE {\bfseries 1. Learn the dynamics}
   \STATE {\bfseries Input:} train dataset of normalized paths $(\mathbf{X}^1,\mathbf{Y}^1),\dots,(\mathbf{X}^n,\mathbf{Y}^n)$ sampled on $(D^1,\Bar{D}^1),\dots,(D^n,\Bar{D}^n)$. 
   \STATE {\bfseries Construct the feature matrix} $\mathbf{S}_N^\mathcal{D}$ {\bfseries and the target vector} $\mathbf{Y}$
   \FOR{$i=1$ {\bfseries to} $n$}
        \FOR {$t$ {in} $\Bar{D}^i$}
   \STATE $\mathbf{S}_N^\mathcal{D}$ $\leftarrow $ Append $S_N(X^i_{[0,t]})$ 
   \STATE $\mathbf{Y}$ $\leftarrow $ Append $Y^i_{t}$
   \ENDFOR
   \ENDFOR
   \STATE{\bfseries Compute } $\hat{\theta}_{N,M}$ by solving \eqref{eq:optimization_problem} with $\mathbf{Y},\mathbf{S}_N^\mathcal{D}$ using coordinate descent.
   \STATE{\bfseries 2. Reconstruct trajectories}
   \STATE {\bfseries Input:} test dataset $\Tilde{\mathbf{X}}^1,\dots,\Tilde{\mathbf{X}}^n$ sampled on $\Tilde{D}^1,\dots,\Tilde{D}^n$.
   \FOR{$i=1$ {\bfseries to} $n$}
   \FOR{$t$ {\bfseries in} $\Tilde{D}^i$}
   \STATE $\Hat{Y}^i_t = \Hat{\theta}_{N,M}S_N(\Tilde{X}^i_{[0,t]})$
   \ENDFOR 
   \ENDFOR
\end{algorithmic}
\end{algorithm}
 
 \subsection{Assessing feature importance}
 
 We two metrics used to asses the importance of the different dimensions of the feature path.
 
 Given a truncation depth $N$ and a dimension $i \in \left\{1,\dots, d \right\}$, we define its pure feature importance (PFI) as the sum of the norm of the coefficients (or vectors in the case the target is multivariate) of $\widehat{\theta}_{N,M}$ that are associated to signatures taken on the words $I_1 = (i)$, $I_2 = (i,i)$, and so forth until $I_N =(i,\dots,i)$. Mathematically, 
 \[
 PFI(i) = \frac{1}{N}\Big( \norm{\theta^{I_1}}_2 + \norm{\theta^{I_2}}_2 + \dots + \norm{\theta^{I_N}}_2\Big).
 \]

Since signatures also capture interactions between dimensions of the feature path, we also define the cross feature importance (CFI) as the sum of norms of the coefficients (or vectors) of $\widehat{\theta}_{N,M}$ that are associated to signatures coefficients of words of length $\leq N$ in which the letter $i$ appears. Mathematically, 
\[
CFI(i) = \frac{1}{s_d(N) - s_{d-1}(N)}\sum\limits_{I \text{ s.t. } i \,\in\, I} \norm{\theta^I}_2.
\]

For a given truncation depth $N$, note that there are $s_d(N) - s_{d-1}(N) = \sum\limits_{k=0}^N d^k - \sum\limits_{k=0}^N (d-1)^k$ terms in the last sum, which justifies our choice of normalization. 

\subsection{Details on model implementation and evaluation}

\paragraph{SigLasso.} The SigLasso model is implemented using the \texttt{CVLasso} class in \texttt{scikit-learn} \citep{pedregosa2011scikit}. This implementation optimises the objective function using coordinate descent and features automatic cross-validation of the penalty strength. We use \texttt{iisignature} \citep{reizenstein2021algorithm} to compute the signature of the feature time series. Every time series is standardized prior to this through division by its own total variation, as suggested by \citet{morrill2020generalised}. The depth of the signature is a hyperparameter chosen between $2$ and $9$ or $6$ depending on the experiment. An intercept is added.

\paragraph{GRU.} The GRU is of width $128$ and systematically trained with $100$ epoches using a learning rate of $0.001$.

\paragraph{Neural CDE.} We use the implementation of Neural CDE provided by \texttt{torchcde} \citep{kidger2020neural}. We use the \href{https://github.com/patrick-kidger/torchcde/blob/master/example/irregular_data.py}{original vector field} described in the documentation of this package, with the small tweak that we use a smoother non-linearity ($\text{tanh}$ instead of $\text{ReLU}$). We observed that using the \texttt{rk4} solver instead of \texttt{dopri5} significantly accelerates the training time of the Neural CDE without affecting the model's performances. The learning rate is hand-tuned to either $0.001$ or $0.0001$ depending on the experiment. We train the model for $100$ epochs and asses its convergence by using a standard stopping criteria. 

\textbf{Metrics.} The MSE is computed in a classical fashion. To compute the integrate MSE, we compute the $L^2$ distance between the piecewise constant interpolations of the true $y_t$ and the predicted $\widehat{y}_t$.

\subsection{Details on the well specified model}
\label{appendix:details_well_specified}

\paragraph{Generation of the training data.} We generate a two-dimensional feature path by interpolating for every dimension $15$ points in $[0,1]$, each of them being draw randomly for a normal distribution $\mathcal{N}(0,1)$. The interpolation is done with Hermite cubic splines with backward differences using the package \texttt{torchcde} ~\citep{kidger2020neural}. Time is added as a supplementary channel, which is a standard practice when learning with signatures and Neural CDEs. These paths are then downsampled by randomly drawing sampling points for the target and the feature time series specific to every individual. The target path is the solution of a CDE of the form 
\[
dy_t = \sigma\Big(Ay_t\Big)dx_t
\]
where $\sigma:\mathbb{R}^p \to \mathbb{R}^p$ is the hyperbolic tangent, $A \in \mathbb{R}^{d \times p}$ is a matrix drawn randomly from $\mathcal{N}(0,I_{d \times p})$ and $x_t$ is the feature path constructed as above. The solution of this CDE is computed using \texttt{torchcde}.

\paragraph{Generation of the test data.} We generate the test data in the same way than the training data. However, this data is not downsampled as we wish to assess the generalization capacities of our model---i.e., is our model capable of approximating the dynamics and extrapolating to continuous feature paths.  

\subsection{Details on Ornstein-Uhlenbeck experiment }
\label{appendix:details_OU}

We take $(x_t)_{t \in [0,1]}$ to be a 1-dimensional Brownian motion with variance $\sigma^2 = 0.1$, and generate $(y_t)$ as a $1$-dimensional Ornstein-Uhlenbeck process driven by $(x_t)$, that is, for all $t \in [0,1]$
\[
dy_t = \theta(\mu-y_t)dt + dx_t.
\]

Simulation of $(y_t)$ is done using a standard Euler-Maruyama simulation scheme. We let $\theta=3$ and $\mu=1$. The training data is then downsampled as in the well specified experiment.

\subsection{Details on the tumor growth experiment}
\label{appendix:details_tumor_growth}

We consider the following tumor growth model taken from \citep{simeoni2004predictive}. Let $x \in C^{1-var}([0,1],\mathbb{R})$. The weight $y \in C([0,1],\mathbb{R}^+)$ under the concentration of a treatment drug $x$ is governed by the differential system
\begin{align*}
    & du^1_t = \Big[\Big(\lambda_0 u_t^1 \big[1+(\frac{\lambda_0}{\lambda_1}y_t )^\psi \big]\Big)^{-1/\psi} - k_2x_tu_t^1 \Big]dt\\
    & du^2_t = \Big[k_2x_tu_t^1-k_1u_t^2\Big]dt\\
    & du^3_t = \Big[k_1(u_t^2-u_t^3) \Big]dt\\
    & du^4_t = \Big[k_1(u_t^3-u_t^4) \Big]dt\\
    & y_t = u_t^1 + u_t^2 + u_t^3 + u_t^4
\end{align*}

with initial condition $(u_0^1,u_0^2,u_0^3,u_0^4,y_0) = (2,0,0,0,2)$ and parameters $(k_1,k_2,\lambda_0,\lambda_1,\psi) = (10,0.5,0.9,0.7,20)$. The concentration $(x_t)$ is chosen to be the squared value of the paths used for the well-specified experiment. Notice that this system is non-linear w.r.t. $x$. Indeed, writing $dy_t = \mathbf{G}(y_t,x_t)dt$, one has, for $\alpha \in \mathbb{R}$, $G(y_t,\alpha x_t) \neq  \alpha G(y_t, x_t)$. The training data is then downsampled as in the well specified experiment.

\subsection{Supplementary results}
\label{appendix:supplementary_table}
\begin{table*}[h!]
	\centering
 \begin{tiny}
	\caption{Performance of SigLasso, GRU, Neural CDE, RNN and LSTM in different simulation settings, averaged over 10 iterations. In every setting, $n=50$,  $\# \bar{D}^i = 5$ for all $i=1, \dots, n$ (and therefore $M=250$).}
    \label{table:supplementary_results}
	\begin{tabular}{lcccccccccc}
	\toprule
    & \multicolumn{5}{c}{$L_2$ error}& \multicolumn{5}{c}{MSE on last point} \\
    \cmidrule(lr){2-6}
    \cmidrule(lr){7-11}
    Setting &  SigLasso & GRU & Neural CDE & RNN & LSTM &  SigLasso & GRU & Neural CDE & RNN & LSTM \\
    \midrule 
    Well-specified & \textbf{0.13 $\pm$ 0.07} & 1.05 $\pm$ 0.42 & 0.61 $\pm$ 0.38 & 1.16 $\pm$ 0.45 & 0.87 $\pm$ 0.67 & \textbf{0.73 $\pm$ 0.56} & 3.32 $\pm$ 1.60 &  1.46 $\pm$ 1.20 & 3.56 $\pm$ 1.43 & 2.41 $\pm$ 1.75  \\
    Ill-specified & \textbf{0.15 $\pm$ 0.02} & 0.24 $\pm$ 0.11 & 0.29 $\pm$ 0.15 & 0.18 $\pm$ 0.006 & 0.20 $\pm$ 0.01 & \textbf{0.09 $\pm$ 0.05} & 0.19 $\pm$ 0.09 &  0.22 $\pm$ 0.15 & 0.18 $\pm$ 0.05 & 0.10 $\pm$ 0.03\\
    OU  & \textbf{0.01 $\pm$ 0.02} & 0.05 $\pm$ 0.06 & 0.17 $\pm$ 0.12 & 0.11 $\pm$ 0.09 & 0.46 $\pm$ 0.48 & 0.018 $\pm$ 0.025 & 0.014 $\pm$ 0.020 & \textbf{0.013 $\pm $ 0.016 }& 0.02 $\pm$ 0.02 & 4.41 $\pm$ 3.77 \\
    Tumor growth & \textbf{0.16 $\pm$ 0.02} & 0.66 $\pm$ 0.09 & 5.29 $\pm$ 1.38 & 0.75 $\pm$ 0.03 & 0.69 $\pm$ 0.04 & \textbf{0.35 $\pm$ 0.12 } & 2.00 $\pm$ 0.38 & 8.76 $\pm$ 9.26 & 2.72 $\pm$ 0.24 & 2.25 $\pm$ 0.29\\
	\bottomrule           
	\end{tabular}
 \end{tiny}
\end{table*}

\subsection{Details on the French Covid experiment}
\label{appendix:french_covid}

 We illustrate the performance of our method and competitors on French Covid data from 2021-03-31 to 2021-07-07 available on \href{https://gitlab.pasteur.fr/mmmi-pasteur/COVID19-Ensemble-model}{Gitlab}. Hospital data was obtained from the SI-VIC database, the national inpatient surveillance system.  
 
 \paragraph{Target path.} Following~\citet{paireau2022ensemble}, we chose the predict the growth rate of incident hospitalisations in each of the 9 metropolitan regions of France. The exponential growth rate was computed from raw data using a 2 days rolling window and then smoothed using local polynomial regression as in~\citet{paireau2022ensemble}. Mathematically, our target time series is the $\mathbb{R}$-valued growth rate, and we fit a different model for every of the 12 regions. It is displayed for all 12 regions in Figure~\ref{fig:covid_target}.
 
 \paragraph{Feature path.} As in~\citet{paireau2022ensemble}, we consider a set of 12 time-dependant predictors of different types summarized in Table~\ref{tab:predictors_sources} and plotted in Figure~\ref{fig:covid_features}.  Both \href{https://sidep.gouv.fr/}{SIDEP} (''Syst\`eme d’Information de D\'epistage Populationnel'') and \href{https://www.data.gouv.fr/fr/datasets/donnees-relatives-aux-personnes-vaccinees-contre-la-covid-19-1/}{VAC-SI} datasets are publicly available.
 The mobility data was obtained from \href{https://www.google.com/covid19/mobility/}{Google}. The mobility-related predictors describe travel trends for different kind of public spaces such as such as shops and leisure spaces, food stores and pharmacies, parks, public transport stations, workplaces and residential areas. The meteorological data was obtained from Météo France.
 
 \begin{table}
     \centering
     \begin{tabular}{lccc}
     \toprule
      \textbf{Predictor}&\textbf{Type}&\textbf{Source}&\textbf{Description}\\
      \midrule
       \texttt{prop\_pos\_symp}& Epidemiological&SIDEP database& proportion of positive tests among symptomatics\\
       \texttt{P\_r}& Epidemiological&SIDEP database&growth rate of positive tests\\
       \texttt{couv-complet}& Epidemiological&VAC-SI database&proportion of vaccinated  \\
       \texttt{grocery\_and\_pharmacy}& Mobility&Google& visits to grocery and pharmacy stores \\
        \texttt{parks}& Mobility&Google& visits to parks\\
        \texttt{transit\_stations}& Mobility&Google& visits to transit stations\\
       \texttt{workplaces}& Mobility&Google& visits to workplaces\\\
       \texttt{residential}& Mobility&Google& visits to residential places\\
       \texttt{IPTCC}& Meteorological&Météo France&Index PREDICT of climatic transmissivity\\
       \texttt{temperature}& Meteorological&Météo France& temperature\\
       \texttt{rel\_humidity}& Meteorological&Météo France& relative humidity\\
       \texttt{abs\_humidity}& Meteorological&Météo France& absolute humidity\\
       \bottomrule
     \end{tabular}
     \caption{The set of time-dependant predictors used to predict the hospital admission growth rate}
     \label{tab:predictors_sources}
 \end{table}

\begin{figure}
    \centering
    \includegraphics[width=\textwidth]{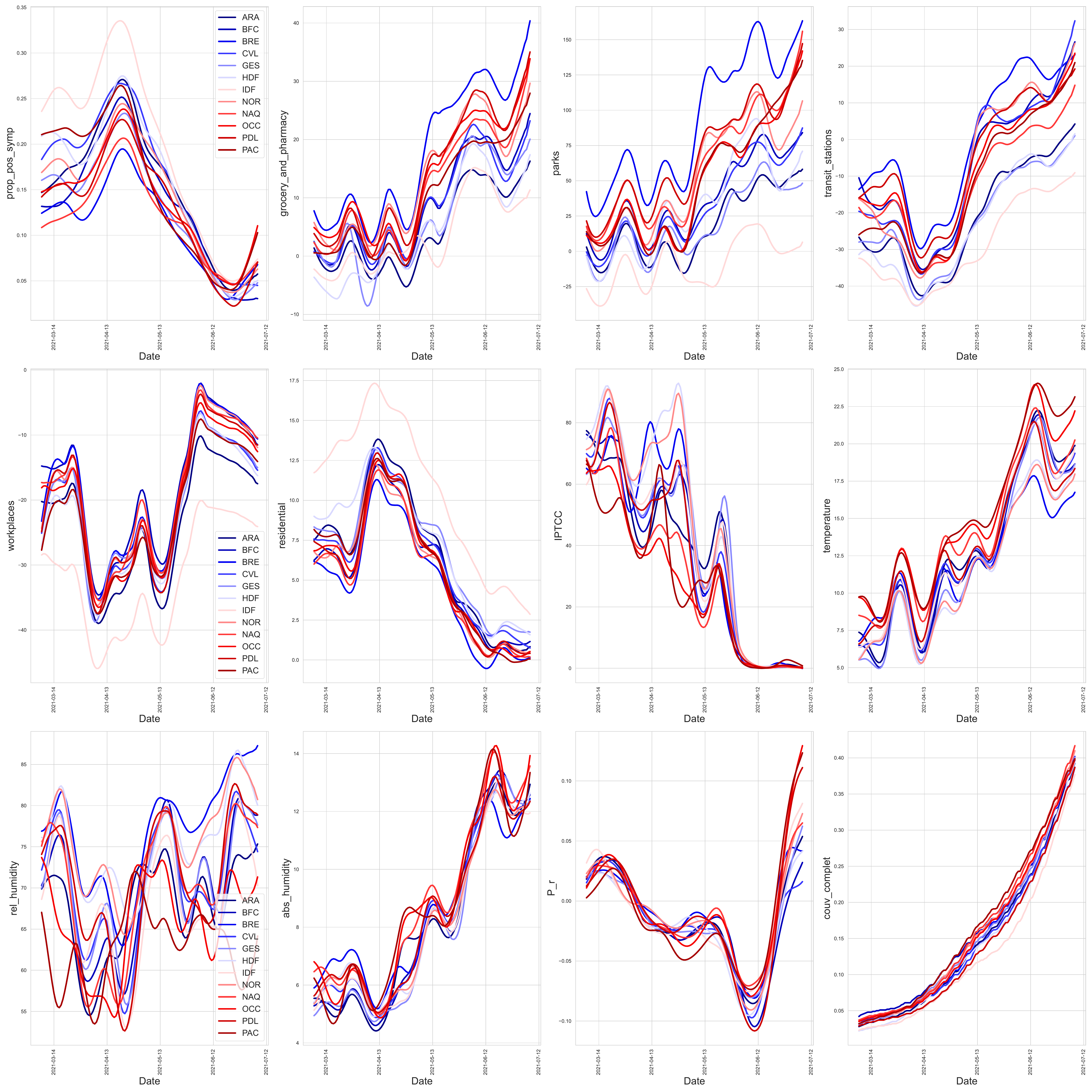}
    \caption{The 12 different feature time series used to forecast the hospitalization growth rate. Every different color corresponds to a given region of France.}
    \label{fig:covid_features}
\end{figure}

\begin{figure}
    \centering
    \includegraphics[width=\textwidth]{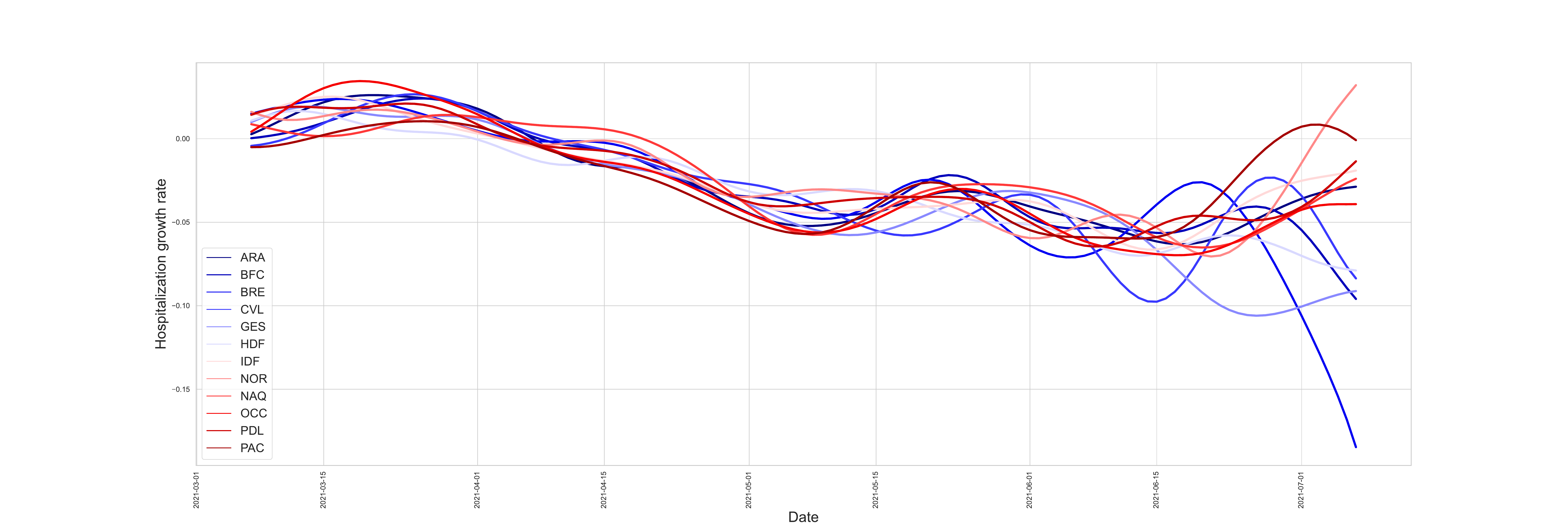}
    \caption{Hospitalization growth rate through time during the full period for the 12 different regions of France.}
    \label{fig:covid_target}
\end{figure}

\paragraph{Models.} SigLasso, NCDE and GRU algorithms were trained on the period from 2021-03-31 to 2021-06-23 and tested on the period from 2021-06-24 to 2021-07-07. We included a history of 10 days at each point and performed prediction for different horizons ranging from $1$ to $14$. In others words, at horizon $h$, features values from day $t-h-10$ to day $t-h$ to were used to compute the prediction at time $t$. All feature time series are normalized to have total variation equal to $1$.

\paragraph{Architectural details.} The GRU has width $128$ and is trained for $100$ epochs with a learning rate of $0.0001$. The NCDE is trained for $30$ epochs with a learning rate of $0.001$. It has $2$ hidden layers of width $128$, an intermediate $\textnormal{Tanh}(\cdot)$ non-linearity and a final linear readout. This architecture is identical to the one proposed in \cite{kidger2022neural}. Penalty strenght of the SigLasso is crossvalided using the internal implementation \texttt{LassoCV} of \texttt{scikit-learn} \cite{pedregosa2011scikit}. 


All details, in particular the features used for each individual prediction,  can be found in~\citet{paireau2022ensemble}. 


We refer to the supplementary information file of~\citet{paireau2022ensemble} and our code for more details.

\paragraph{Results.} Figure~\ref{fig:RMSE_covid_all_models} displays the RMSE (on all regions) of NCDE, SigLasso, GRU, and the Ensemble method for all prediction horizons $h=1,\ldots,14$. Figures \ref{fig:covid_display_weighted_siglasso}, \ref{fig:covid_display_gru} and \ref{fig:covid_display_ncde} display the obtained interpolation for SigLasso, GRU and NCDE at different horizons (corresponding to different line colors in \texttt{winter} matplotlib palette). The lighter the blue, the smaller the time horizon: the lightest curve corresponds to a time horizon equal to $h=1$. Truth is in red.

\begin{figure}
    \centering
    \includegraphics[width=0.78\textwidth]{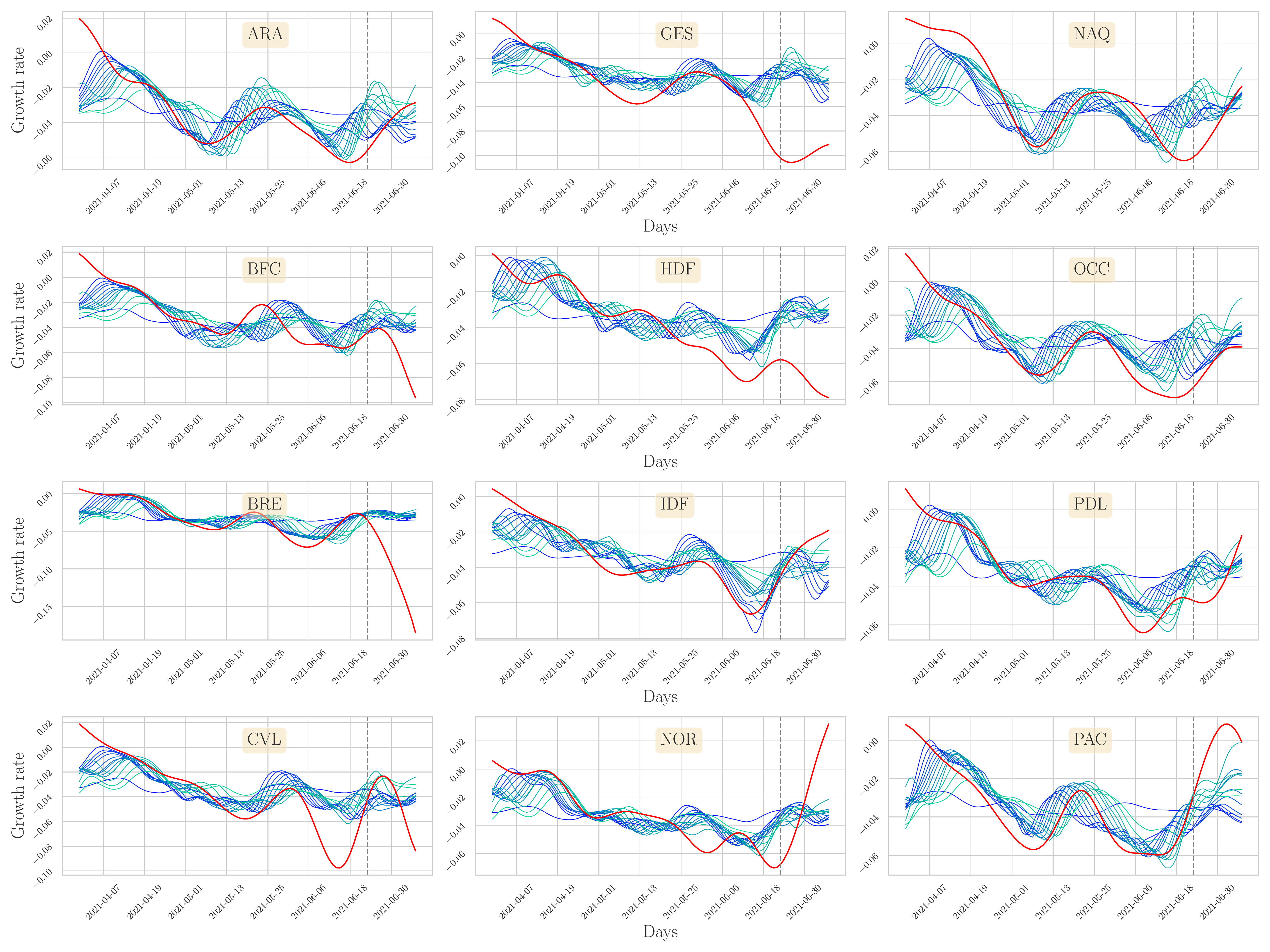}
    \caption{Interpolation (left of dotted line) and prediction (right of dotted line) of hospitalization growth rate for all 12 french regions using \textbf{SigLasso}.}
    \label{fig:covid_display_weighted_siglasso}
\end{figure}

\begin{figure}
    \centering
    \includegraphics[width=0.78\textwidth]{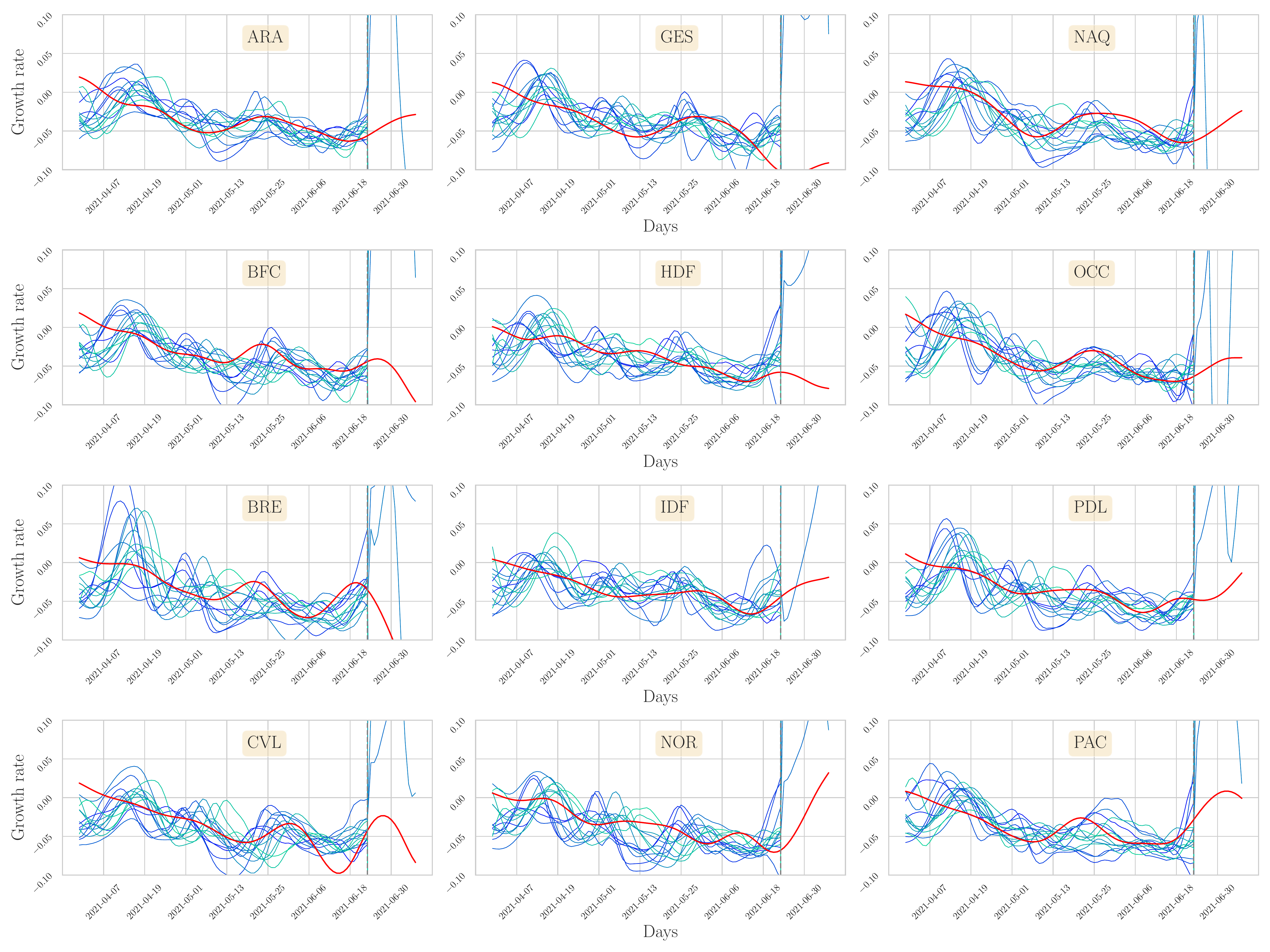}
    \caption{Interpolation (left of dotted line) and prediction (right of dotted line) of hospitalization growth rate for all 12 french regions using \textbf{NCDE}.}
    \label{fig:covid_display_ncde}
\end{figure}

\begin{figure}
    \centering
    \includegraphics[width=0.78\textwidth]{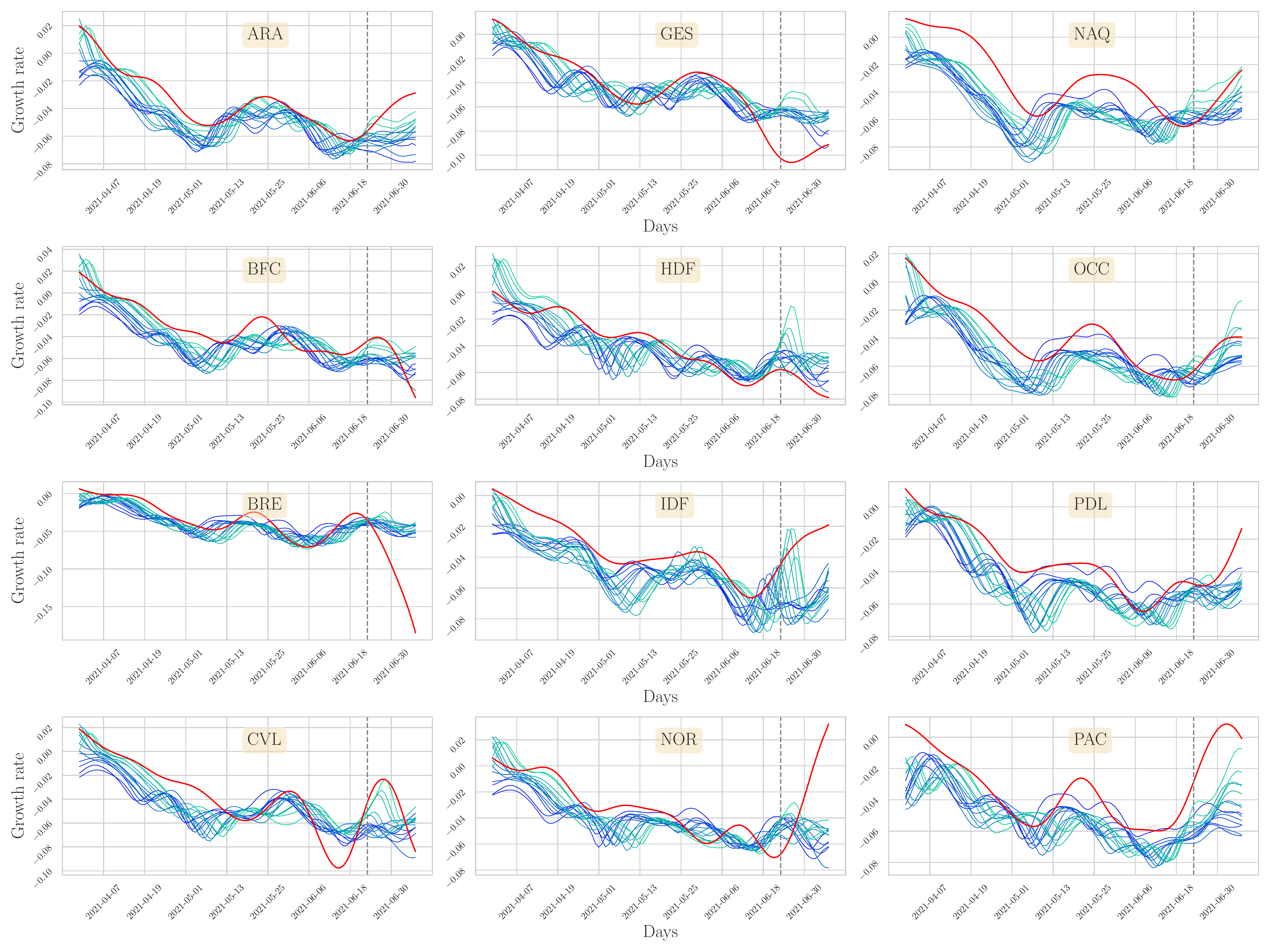}
    \caption{Interpolation (left of dotted line) and prediction (right of dotted line) of hospitalization growth rate for all 12 french regions using \textbf{GRU}.}
    \label{fig:covid_display_gru}
\end{figure}

\begin{figure}
    \centering
    \includegraphics[width=0.78\textwidth]{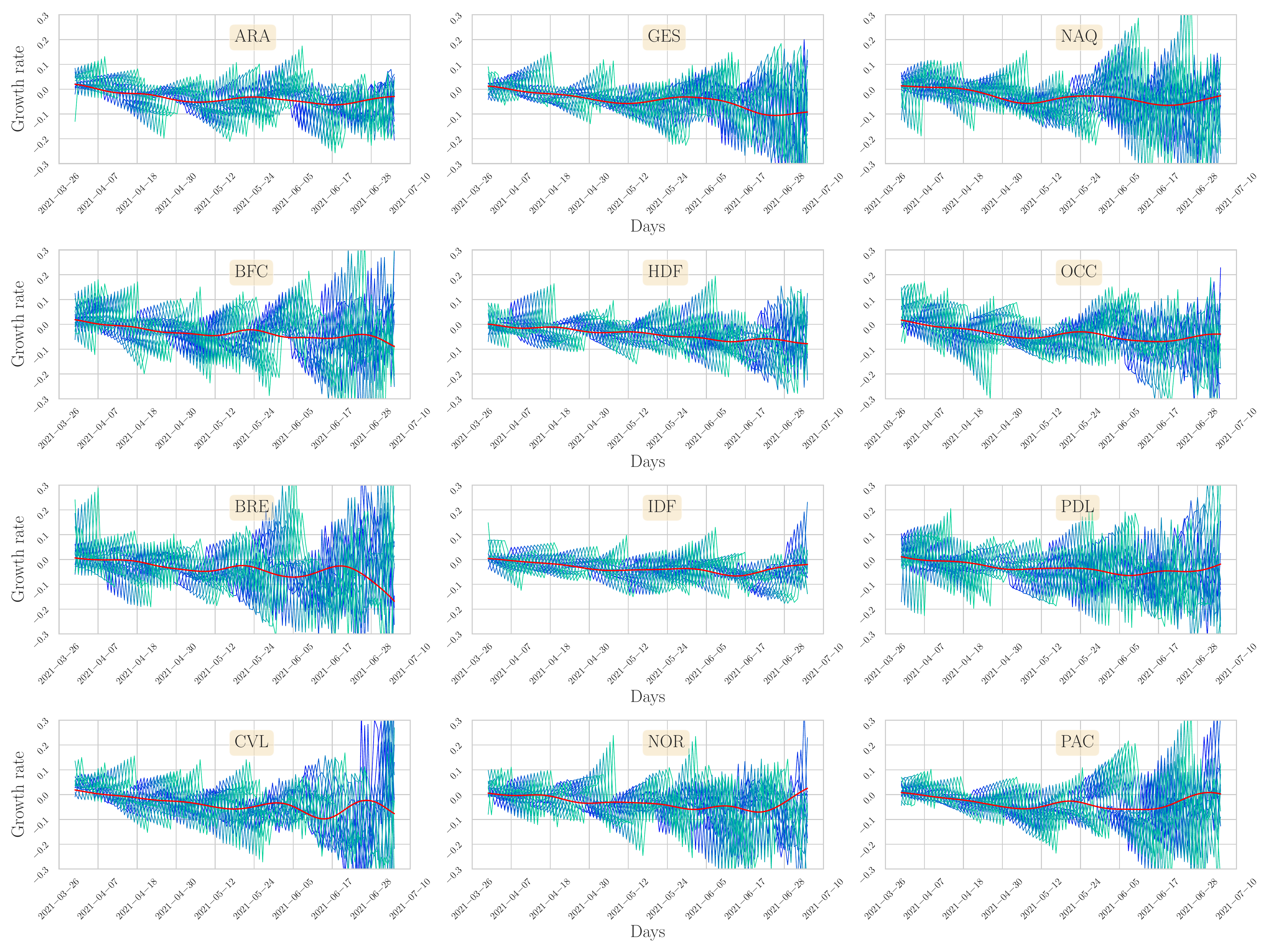}
    \caption{Interpolation (left of dotted line) and prediction (right of dotted line) of hospitalization growth rate for all 12 french regions using \textbf{ensemble methods} \citep{paireau2022ensemble}.}
    \label{fig:covid_display_pnas}
\end{figure}

\begin{figure}
    \centering
    \includegraphics[width=0.78\textwidth]{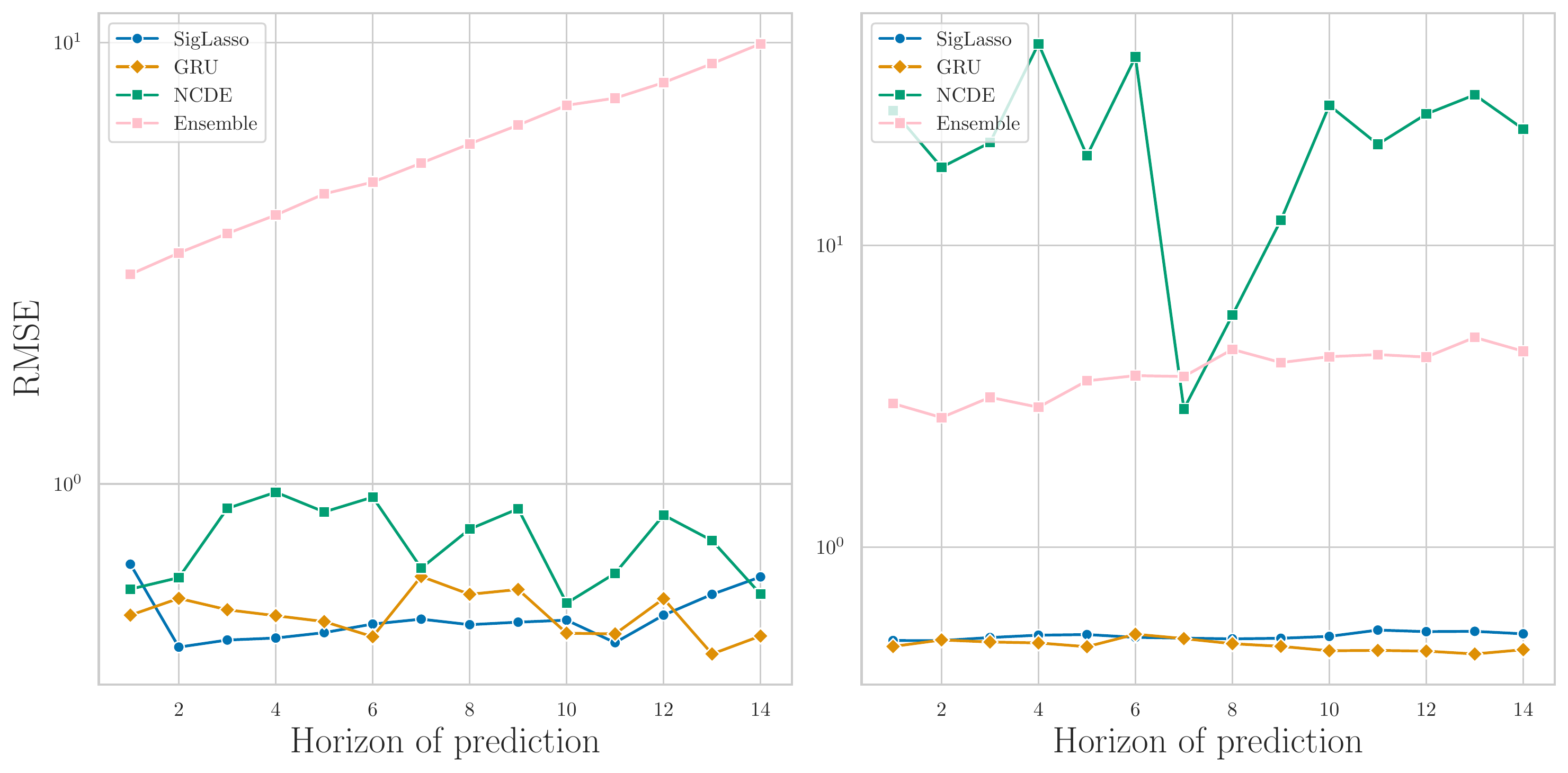}
    \caption{RMSE accross all regions on the \textbf{training period} (left) and the \textbf{testing period} (right) for the ensemble methode \citep{paireau2022ensemble}, NCDE, GRU, and SigLasso. See Figure \ref{fig:RMSE_covid_siglasso_gru} for a zoom-in on GRU and SigLasso performances.}
    \label{fig:RMSE_covid_all_models}
\end{figure}

\begin{figure}
    \centering
    \includegraphics[width=0.78\textwidth]{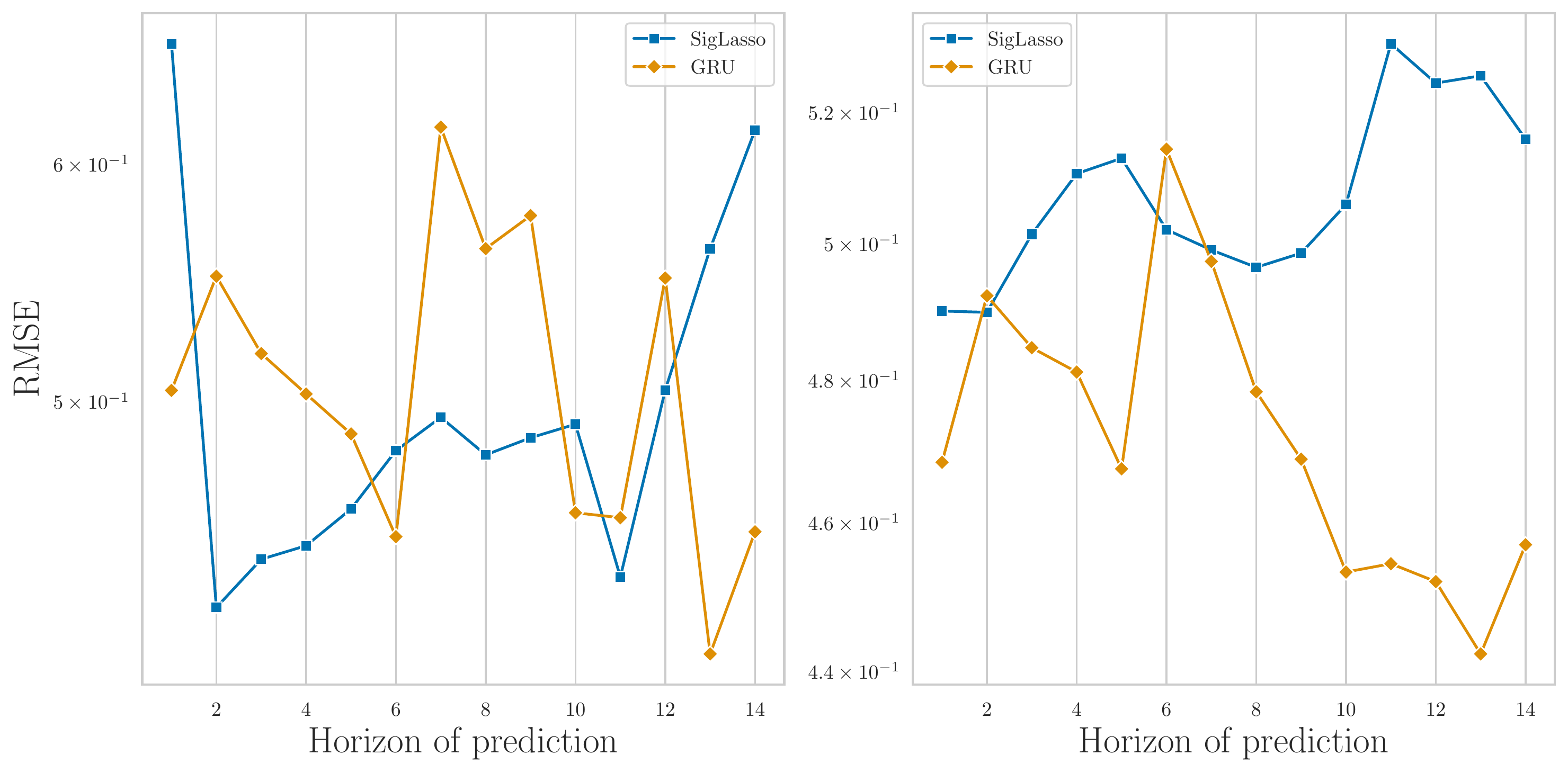}
    \caption{RMSE accross all regions on the \textbf{training period} (left) and the \textbf{testing period} (right) for GRU and SigLasso.}
    \label{fig:RMSE_covid_siglasso_gru}
\end{figure}

\subsection{Additional results}
\label{appendix:additional_results}

We give in Table \ref{tab:apx_time} some additional results on the experiments described above.

\begin{table*}[h!]
	\centering
	\caption{Training time of SigLasso, GRU and Neural CDE in different simulation settings, averaged over 10 iterations. In every setting, $n=50$,  $\# \bar{D}^i = 5$ for all $i=1, \dots, n$ (and therefore $M=250$).}
	\label{tab:apx_time}
	\begin{tabular}{lccc}
	\toprule
    & \multicolumn{3}{c}{Training time (s)} \\
    \cmidrule(lr){2-4}
    &  SigLasso & GRU & Neural CDE \\
    \midrule 
    Well-specified & 0.37 $\pm $ 0.23 &  269 $\pm $ 109 & 1754 $\pm$ 587 \\
    OU &  0.057 $\pm$ 0.005 & 27 $\pm$ 0.44 & 216 $\pm$ 2.7 \\
    Tumor growth & 0.056 $\pm$ 0.007 & 31 $\pm$ 3.5 & 250 $\pm$ 14 \\
	\bottomrule           
	\end{tabular}
\end{table*}

Moreover, we show in Figure \ref{fig:sampling_study_x_smooth} the results of the $L_2$ reconstruction error in the well-specified setting, when we vary the number of sampling points of the feature paths between 10 and $10^3$. We see that SigLasso always outperforms GRU and Neural CDE but that the difference of performance is more important when there are only a few sampling points. In this regime SigLasso is moreover more stable.

\begin{figure}[h]
    \centering
    \includegraphics[width=0.4\textwidth]{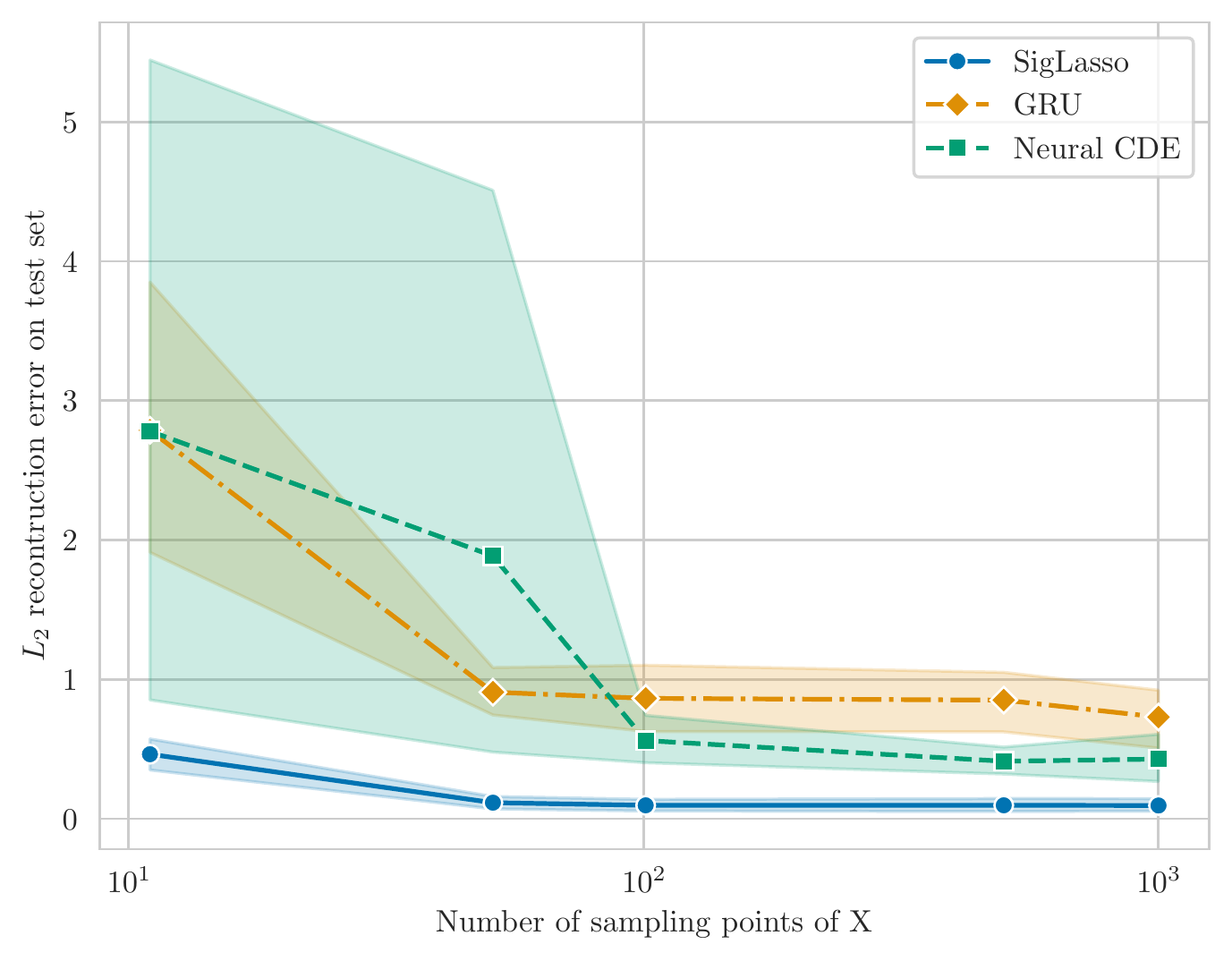}
    \caption{$L_2$ reconstruction error of SigLasso, GRU and Neural CDE in the well-specified setting, for varying number of feature samples.}
    \label{fig:sampling_study_x_smooth}
\end{figure}

\end{document}